\definecolor{Red}{rgb}{1,0,0}
\definecolor{Blue}{rgb}{0,0,1}
\definecolor{Olive}{rgb}{0.41,0.55,0.13}
\definecolor{Yarok}{rgb}{0,0.5,0}
\definecolor{Green}{rgb}{0,1,0}
\definecolor{MGreen}{rgb}{0,0.8,0}
\definecolor{DGreen}{rgb}{0,0.55,0}
\definecolor{Yellow}{rgb}{1,1,0}
\definecolor{Cyan}{rgb}{0,1,1}
\definecolor{Magenta}{rgb}{1,0,1}
\definecolor{Orange}{rgb}{1,.5,0}
\definecolor{Violet}{rgb}{.5,0,.5}
\definecolor{Purple}{rgb}{.75,0,.25}
\definecolor{Brown}{rgb}{.75,.5,.25}
\definecolor{Grey}{rgb}{.5,.5,.5}
\newcommand{\E}[1]{\mathbb{E}\left[#1\right]}
\newcommand{\R}{\mathbb{R}}
\newcommand{\ip}[2]{\langle{#1},{#2}\rangle}
\renewcommand{\E}[1]{\mathbb{E}\!\left[#1\right]}
\renewcommand{\R}{\mathbb{R}}
\newcommand{\distr}{\stackrel{d}{=}}
\newcommand{\ignore}[1]{\relax}
\newlength\myindent
\newtheorem{theorem}{Theorem}[section]
\newtheorem{lemma}[theorem]{Lemma}
\newtheorem{proposition}[theorem]{Proposition}
\newtheorem{coro}[theorem]{Corollary}
\newtheorem{claim}[theorem]{Claim}
\newcommand{\risk}[1]{\widehat{\mathcal{L}}(#1)}
\newcommand{\riskk}{\widehat{\mathcal{L}}}
\newcounter{parentnumber}
\def\BState{\State\hskip-\ALG@thistlm}
\definecolor{Red}{rgb}{1,0,0}
\definecolor{Blue}{rgb}{0,0,1}
\definecolor{Olive}{rgb}{0.41,0.55,0.13}
\definecolor{Green}{rgb}{0,1,0}
\definecolor{MGreen}{rgb}{0,0.8,0}
\definecolor{DGreen}{rgb}{0,0.55,0}
\definecolor{Yellow}{rgb}{1,1,0}
\definecolor{Cyan}{rgb}{0,1,1}
\definecolor{Magenta}{rgb}{1,0,1}
\definecolor{Orange}{rgb}{1,.5,0}
\definecolor{Violet}{rgb}{.5,0,.5}
\definecolor{Purple}{rgb}{.75,0,.25}
\definecolor{Brown}{rgb}{.75,.5,.25}
\definecolor{Grey}{rgb}{.5,.5,.5}
\definecolor{Pink}{rgb}{1,0,1}
\definecolor{DBrown}{rgb}{.5,.34,.16}
\definecolor{Black}{rgb}{0,0,0}
\author{
{\sf David Gamarnik}\thanks{MIT; e-mail: {\tt gamarnik@mit.edu}. Research supported  by the NSF grants CMMI-1335155.}
\and
{\sf Eren C. K{\i}z{\i}lda\u{g}\thanks{MIT; e-mail: {\tt kizildag@mit.edu}}}
\and
{\sf Ilias Zadik}\thanks{NYU; e-mail: {\tt zadik@nyu.edu}. Research supported by a CDS Moore-Sloan Postdoctoral Fellowship.}
}
\begin{document}

\title{Stationary Points of Shallow Neural Networks with Quadratic Activation Function}
\date{}

\maketitle
\begin{abstract}
        We consider the teacher-student setting of learning shallow neural networks with quadratic activations and planted weight matrix $W^*\in\R^{m\times d}$, where $m$ is the width of the hidden layer and $d\leqslant m$ is the dimension of data. We study the optimization landscape associated with the empirical and the population squared risk of the problem. Under the assumption the planted weights are full-rank we obtain the following results. 
        
        First, we establish that the landscape of the empirical risk $\risk{W}$ admits an "energy barrier" separating rank-deficient $W$ from $W^*$: if $W$ is rank deficient, then $\risk{W}$ is bounded away from zero by an amount we quantify. We then couple this result by showing that, assuming number $N$ of samples grows at least like a polynomial function of $d$, all full-rank approximate stationary points of $\risk{W}$ are nearly global optimum. These two results allow us to prove that gradient descent, when initialized below the energy barrier, approximately minimizes the empirical risk and recovers the planted weights in polynomial-time. 
        
        Next, we show that initializing below the aforementioned energy barrier is in fact easily achieved when the weights are randomly generated under relatively weak assumptions. We show that provided the network is sufficiently overparametrized, initializing with an appropriate multiple of the identity suffices to obtain a risk below the energy barrier. At a technical level, the last result is a consequence of the semicircle law for the Wishart ensemble and could be of independent interest. 
        
        Finally, we study the minimizers of the empirical risk and identify a simple necessary and sufficient geometric condition on the training data under which any minimizer has necessarily zero generalization error. We show that as soon as $N\geqslant N^*=d(d+1)/2$, randomly generated data enjoys this geometric condition almost surely, while if $N<N^*$, that ceases to be true.
\end{abstract}

\newpage
\tableofcontents
\newpage
\section{Introduction}
Neural network architectures are demonstrated to be extremely powerful in practical tasks such as natural language processing \cite{collobert2008unified}, image recognition \cite{he2016deep}, image classification \cite{krizhevsky2012imagenet}, speech recognition \cite{mohamed2011acoustic}, and game playing \cite{silver2017mastering}; and is becoming popular in other areas, such as applied mathematics \cite{chen2018neural,weinan2017deep}, clinical diagnosis \cite{de2018clinically}; and so on. Despite this empirical success, a mathematical understanding of these architectures is still largely missing. 

While it is NP-hard to train such architectures, it has been observed empirically that the gradient descent, albeit being a simple first-order local procedure, is rather successful in training such networks. This is somewhat surprising due to the highly non-convex nature of the associated objective function. Our main motivation in this paper is to provide further insights into the optimization landscape and generalization abilities of these networks.

\subsection{Model, Contributions, and Comparison with the Prior Work}
\paragraph{Model.} In this paper, we consider a shallow neural network architecture with one hidden layer of width $m$ (namely, the network consists of $m$ neurons). We study it under the realizable model assumption, that is, the labels are  generated by a teacher network with ground truth weight matrix $W^*\in\R^{m\times d}$ whose $j^{\rm th}$  row $W_j^*$ carries the weights of $j^{\rm  th}$ neuron. We also assume that the data we observe are drawn using input data $X\in\R^d$ with centered i.i.d. sub-Gaussian coordinates.  Note that such shallow architectures with planted weights and Gaussian input data have been explored extensively in the literature, see e.g. \cite{du2017gradient,li2017convergence,tian2017analytical,zhong2017recovery,soltanolkotabi2017learning,brutzkus2017globally}.

 Our focus is in particular on networks with quadratic activation, studied also by Soltanolkotabi et al. \cite{soltanolkotabi2018theoretical}; and Du and Lee \cite{du2018power}, among others. 
This object, an instance of what is known as a \emph{polynomial network} \cite{livni2014computational}, computes for every input data $X\in\mathbb{R}^d$ the function:
\begin{equation}\label{eq:fnc-nn-computes}
f(W^*;X)=\sum_{j=1}^m \ip{W^*_j}{X}^2 =\|W^*X\|_2^2.
\end{equation} 
We note that albeit being a stylized activation function, blocks of quadratic activations can be stacked together to approximate deeper networks with sigmoid activations as shown by Livni et al. \cite{livni2014computational}; and furthermore this activation serves as a second order approximation of general nonlinear activations as noted by Venturi et al. \cite{venturi2018spurious}. Thus, we study the quadratic networks as an attempt to gain further insights on more complex networks.

Let $X_i\in\R^d$, $1\leqslant i\leqslant N$ be a (i.i.d.) collection of input data, and let $Y_i=f(W^*;X_i)$ be the corresponding label generated per \eqref{eq:fnc-nn-computes}. The goal of the learner is as follows: given the training data $(X_i,Y_i)\in\R^d\times \R$, $1\leqslant i\leqslant N$, find a weight matrix $W\in\R^{m\times d}$ that explains the input-output relationship on the training data set in the best possible way, often by solving the so-called ``\emph{empirical risk minimization}" (ERM) optimization problem
\begin{align}\label{eq:empirical-risk}
\min_{W\in\R^{m\times d}}  \risk{W}
\quad\text{where}\quad
\risk{W}\triangleq  \frac1N\sum_{1\leqslant i\leqslant N} (Y_i-f(W;X_i))^2;
\end{align}
and understand its generalization ability, quantified by the ``\emph{generalization error}" (also known as the ``\emph{population risk}" associated with any solution candidate $W\in \R^{m\times d}$) that is given by
$\mathcal{L}(W) \triangleq \mathbb{E}[(f(W^*;X) - f(W;X))^2]$,
where the expectation is with respect to a ''fresh'' sample $X$, which has the same distribution as $X_i, 1\leqslant i\leqslant N$, but is independent from the sample. The landscape of the loss function $\risk{\cdot}$ is non-convex, therefore rendering the optimization problem difficult. Nevertheless, the gradient descent algorithm, despite being a simple first-order procedure, is rather successful in training neural nets in general: it appears to find a $W\in\R^{m\times d}$ with near-optimal $\risk{W}.$   Our partial motivation is to investigate this phenomenon in the case the activation is quadratic.

\paragraph{Contributions.} Despite working on a stylized model, our work provides a series of results for plenty aspects of the training and generalization abilities of such networks, hopefully bringing insights for more complex networks. We provide multiple results pertaining both the empirical and the population risks. The results for the latter require a milder distributional assumption that it suffices for the data $X_i$ to have centered i.i.d. coordinates  with finite fourth moment; and are provided under the supplementary material due to space constraints.

We first study the landscape of risk functions and quantify an \emph{``energy barrier"} separating rank-deficient matrices from the planted weights. Specifically, if $W^*\in\R^{m\times d}$ is full-rank, then the risk function for any rank-deficient $W$ is bounded away from zero by an explicit constant - independent of $d$- controlled by the smallest singular value $\sigma_{\min}(W^*)$ as well as the second and the fourth moments of the data. See Theorem \ref{thm:energy-barrier-empirical} for the empirical, and Theorem~\ref{thm:band-gap} for the population version.

We next study the \emph{full-rank stationary points} of the risk functions and the \emph{gradient descent} performance. 
We first establish that when $W^*$ is full rank, any full-rank stationary point $W$ of the risk functions is necessarily global minimum, and that any such $W$ is of form $W=QW^*$ where $Q\in\R^{m\times m}$ is orthonormal.  
See Theorem \ref{thm:full-rank-empirical-global-opt} for the empirical; and Theorem \ref{thm:full-rank-global-opt} for the population version. 
We then establish that all full-rank ``approximate" stationary points $W$ of $\risk{\cdot}$ below the aforementioned ``energy barrier", are ``nearly" global optimum. Furthermore, we establish that if the number $N$ of samples is $\mathrm{poly}(d)$, then the weights $W$ of any full-rank ``approximate" stationary point are uniformly close to $W^*$. As a corollary, gradient descent with initialization below the ``energy barrier" in time  ${\rm poly}(\frac1\epsilon,d)$ recovers a solution $W$ for which the weights are $\epsilon$-close to planted weights, and thus the generalization error $\mathcal{L}(W)$ is at most $\epsilon$. The bound on $\mathcal{L}(W)$ is derived by controlling the condition number of a certain  matrix whose i.i.d. rows consists of tensorized data $X_i^{\otimes 2}$; using a recently developed machinery \cite{emschwiller2020neural} studying the spectrum of expected covariance matrices of tensorized data. 
See Theorem~\ref{thm:gd-conv-empirical} for the empirical; and Theorem~\ref{thm:gd-conv} for the population version.

Next, we study the question of whether one can \emph{initializate} below the aforementioned energy barrier. We answer affirmatively this question in the context of randomly generated $W^*\in\R^{m\times d}$, and establish in Theorem~\ref{thm:initialization-empirical} that as long as the network is sufficiently overparametrized, specifically $m>Cd^2$, it is possible to initialize $W_0$ such that w.h.p. the risk associated to  $W_0$ is below the required threshold. This is achieved using random matrix theory, specifically a semicircle law for Wishart matrices which shows the spectrum of $(W^*)^TW^*$ is tightly concentrated \cite{bai1988convergence}. 
It is also worth noting that networks with random weights are an active area of research: they play an important role in the analysis of complex networks by providing further insights; 
define the initial loss landscape; and also are closely related to random feature methods: Rahimi \& Recht \cite{rahimi2009weighted} showed that shallow architectures trained by choosing the internal weights randomly and optimizing only over the output weights return a classifier with reasonable generalization performance at accelerated training speed. Random shallow networks were also shown to well-approximate dynamical systems  \cite{gonon2020approximation}; have been successfully employed in the context of extreme learning machines \cite{huang2006extreme}; and were studied in the context of random matrix theory, see \cite{pennington2017nonlinear} and references therein.



Our next focus is on the \emph{sample complexity} for generalization. While we study the landscape of the empirical risk, it is not by any means certain that the optimizers of
$\min_W\risk{W}$ also achieve zero generalization error.  We give necessary and sufficient conditions 
on the samples $X_i,1\leqslant i\leqslant  N$ so that any minimizer had indeed zero generalization error in our setting. We show that, if ${\rm span}(X_iX_i^T:i\in[N])$ is the space of all $d\times d$-dimensional real symmetric matrices, then any global minimum of the empirical risk is necessarily a global optimizer of the population risk, and thus, has zero generalization error. Note that, this geometric  condition is not {\em retrospective} in manner: it can be checked ahead of the optimization task by computing ${\rm span}(X_iX_i^T:i\in[N])$. Conversely, we show that if the span condition above is not met then there exists a global minimum $W$ of the empirical risk function
which induces a strictly positive generalization error. This is established in Theorem~\ref{thm:geo-condition}. 

To complement our analysis, we then ask the following question: what is the ``\emph{critical number}" $N^*$ of the training samples, under which 
the (random) data $X_i, 1\leqslant i\leqslant N$ enjoys the aforementioned  span condition? We prove this number to be $N^*=d(d+1)/2$, 
under a very mild assumption that the coordinates of $X_i\in\R^d$ are jointly continuous. This is shown in Theorem~\ref{thm:random-data-geo-cond}.
Finally, in Theorem~\ref{thm:main} we show that when $N<N^*$ not only there exists $W$ with zero empirical risk and strictly positive
generalization error, but we bound this error from below by an amount very similar to the bound for rank-deficient matrices discussed in our
earlier Theorem~\ref{thm:energy-barrier-empirical}. 

We end with a comment on overparametrization and generalization. A common paradigm in statistical learning theory is that, overparametrized models, that is, models with more parameters than necessary, while being capable of interpolating the training data, tend to generalize poorly because of overfitting to the proposed model. Yet, it has been observed empirically that neural networks tend to not suffer from this complication \cite{zhang2016understanding}: despite being overparametrized, they seem to have good generalization performance, provided the interpolation barrier is exceeded. In Theorem~\ref{thm:geo-condition} ${\rm (a)}$ we establish the following result which potentially sheds some light on this phenomenon for the case of shallow neural networks with quadratic activations. More concretely, we establish the following: suppose that the data enjoys the aforementioned geometric condition. Then, any interpolator achieves zero generalization error, even when the interpolator is a neural network with potentially larger number $\widehat{m}$ of internal nodes compared to the one that generated the data, namely by using a weight matrix $W\in\R^{\widehat{m}\times d}$ where $\widehat{m} \geqslant m$. In other words, the model does not overfit to the much larger width of the interpolator. 

\paragraph{Comparison with \cite{soltanolkotabi2018theoretical} and \cite{du2018power}.} We now make a comparison with two very related prior work, studying the quadratic activations. We start with the work by Soltanolkotabi,  Javanmard and  Lee \cite{soltanolkotabi2018theoretical}. In \cite[Theorem~2.2]{soltanolkotabi2018theoretical}, the authors study the empirical risk landscape of a slightly 
more general version of our model: $Y_i = \sum_{j=1}^m v_j^* \ip{W_j^*}{X_i}^2$, assuming ${\rm rank}(W^*)=d$ like us, and assuming all non-zero entries of $v^*$ have the same sign. Thus our model is the special case where all entries of $v^*$ equal unity. The authors establish that as long as $d\leqslant N\leqslant cd^2$ for some small fixed constant $c$, every local minima of the empirical risk 
function is also a global minima (namely, there exists no spurious local minima), 
and furthermore, every saddle point has a direction of negative curvature. As a result they show that gradient descent with an arbitrary initialization
converges to a globally optimum solution of the ERM problem (\ref{eq:empirical-risk}). In particular, their result does not require the initialization point to be below some risk value (the energy barrier), like in our case. Nevertheless, our results show that one needs not to worry about saddle points below the energy barrier as none exists per our Theorem~\ref{thm:energy-barrier-empirical}. 
Importantly, though,  the regime $N<cd^2$ for small $c$ that \cite[Theorem~2.2]{soltanolkotabi2018theoretical} applies  is below the \emph{provable sample complexity value}  $N^*=d(d+1)/2$ when the data are drawn from a continuous distribution as per our Theorem~\ref{thm:random-data-geo-cond}. In particular, as we establish when $N<N^*$, the ERM problem \eqref{eq:empirical-risk} admits global optimum solutions with zero empirical risk value, but with generalization error bounded away from zero. Thus, the regime $N<N^*$ does not correspond to the regime where solving the ERM has a guaranteed control on the generalization error. The same theorem in \cite{soltanolkotabi2018theoretical} also studies the approximate stationary points, and shows that for any such point $W$, $\risk{W}$ is also small. Our Theorem~\ref{thm:gd-conv-empirical}, though, takes a step further and shows that not only the empirical risk is small but the recovered $W$ is close to planted weights $W^*$; and therefore it has small generalization error $\mathcal{L}(W)$.

 It is also worth noting that albeit not being our focus in the present paper,  \cite[Theorem~2.1]{soltanolkotabi2018theoretical} also studies the landscape of the empirical risk when a quadratic network model $X\mapsto \sum_{j=1}^m v_j^* \ip{W_j^*}{X}^2$ is used for interpolating arbitrary input/label pairs $(X_i,Y_i)\in\R^d\times \R$, $1\leqslant i\leqslant N$, that is, without making an assumption that the labels are generated according to a network with planted weights. They establish similar landscape results; namely, the absence of spurious local minima, and the fact that every saddle point has a direction of negative curvature, as long as the output weights $v^*$ has at least $d$ positive and $d$ negative entries (consequently, the width $m$ has to be at least $2d$). While this result does not assume any rank condition on $W$ like us, it bypasses this technicality at the cost of assuming that the output weights contain at least $d$ positive and $d$ negative weights, and consequently, by assuming $m$ is at least $2d$, namely when the network is sufficiently wide.

 Yet another closely related work studying quadratic activations is the paper by Du and Lee \cite{du2018power}. This paper establishes that for any smooth and convex loss $\ell(\cdot,\cdot)$, the landscape of the regularized loss function $\frac1N\sum_{i=1}^N \ell(f(W;X_i),Y_i)+\frac{\lambda}{2}\|W\|_F^2$ still admits aforementioned favorable geometric characteristics. Furthermore, since the learned weights are of bounded Frobenius norm due to norm penalty $\|W\|_F^2$ imposed on objective, they retain good generalization via Rademacher complexity considerations. While this work addresses the training and generalization when the norm of $W$ is controlled during training; it does not carry out approximate stationarity analysis like Soltanolkotabi et al. \cite{soltanolkotabi2018theoretical} and we do; and does not study their associated loss/generalization like in our case. Even though they show bounded norm optimal solutions to the optimization problem with modified objective generalize well; it remains unclear from their analysis whether the approximate stationary points of this objective also have well-controlled norm. 
\paragraph{Further relevant prior work.}
As noted in the introduction, neural networks achieved remarkable empirical success
which fueled research starting from the expressive ability of these networks, going as early as Barron \cite{barron1994approximation}. More recent works along this front focused on deeper and sparser models, see e.g. \cite{mhaskar2016learning,telgarsky2016benefits,eldan2016power,schmidt2017nonparametric,poggio2017and,bolcskei2019optimal}. In particular, the expressive power of such network architectures is relatively well-understood. Another issue pertaining such architectures is computational tractability: Blum and Rivest  established in \cite{blum1989training} that it is NP-complete to train a very simple, 3-node, network; whose nodes compute a linear thresholding function. Despite this worst-case result, it has been observed empirically that local search algorithms (such as gradient descent), are rather successful in training. While several authors, including \cite{sedghi2014provable,janzamin2015beating,goel2016reliably}, devised provable training algorithms for such nets; these algorithms unfortunately are based on methods other than the gradient descent; thus not shedding any light on its apparent empirical success.

On a parallel front, many papers studied the behaviour of the GD by analyzing the trajectory of it or its stochastic variant, under certain stylistic assumptions on the data as well as the network. These assumptions include Gaussian inputs, shallow networks (with or without the convolutional structure) and the existence of planted weights (the so-called teacher network) generating the labels. Some partial and certainly very incomplete references to this end include \cite{tian2017analytical, brutzkus2017globally,brutzkus2017sgd,zhong2017learning,soltanolkotabi2017learning,li2017convergence,du2017gradient}. Later work relaxed the distributional assumptions. For instance,  \cite{du2017convolutional} studied the problem of learning a convolutional unit with ReLU with no specific distributional assumption on input, and established the convergence of SGD with rate depending on the smoothness of the input distribution and the closeness of the patches. Several other works along this line, in particular under the presence of overparametrization, are the works by Du et al. \cite{du2018gradient,du2018gradient2}.

Yet another line of research on the optimization front, rather than analyzing the trajectory of the GD, focuses on the mean-field analysis: empirical  distribution of the parameters  of network with infinitely many internal nodes can be described as a Wasserstein gradient flow, thus some tools from the theory of optimal transport can be used, see e.g. \cite{wei2018margin,rotskoff2018neural,chizat2018global,song2018mean,sirignano2019mean}. Albeit explaining the story to some extent for infinitely wide networks, it remains unclear whether these techniques provide results for a more realistic network model with finitely many internal nodes.

As noted earlier, the optimization landscape of such networks is usually highly non-convex. More recent research on such non-convex objectives showed that if the landscape has certain favorable geometric properties such as the absence of spurious local minima and the existence of direction with negative curvature for every saddle point, 
local methods can escape the saddle points and converge to the global minima. Examples of this line of research on loss functions include \cite{ge2015escaping,levy2016power,lee2016gradient,jin2017escape,du2017gradientexponent}. Motivated by this front of research, many papers analyzed geometric properties of the optimization landscape, including \cite{poston1991local, haeffele2014structured, choromanska2015loss,haeffele2015global,kawaguchi2016deep,hardt2016identity,soudry2016no,freeman2016topology,zhou2017landscape,nguyen2017loss,ge2017learning,safran2017spurious,soudry2017exponentially,zhou2017critical,nguyen2018loss,venturi2018spurious,du2018power,soltanolkotabi2018theoretical}. 

We now touch upon yet another very important focus, that is the \emph{generalization ability} of such networks: how well a solution found, e.g. by GD, predicts an unseen data? A common paradigm in statistical learning theory is that overparametrized models tend to generalize poorly. Yet, neural networks tend to not suffer from this complication \cite{zhang2016understanding}. Since the VC-dimension of these networks grow (at least) linear in the number of parameters \cite{harvey2017nearly,bartlett2019nearly}, standard Vapnik-Chervonenkis theory do not help explaining the good generalization ability under presence of overparametrization. This has been studied, among others, through the lens of the weights of the norm matrices \cite{neyshabur2015norm,bartlett2017spectrally,liang2017fisher,golowich2017size,dziugaite2017computing,wu2017towards}; PAC-Bayes theory \cite{neyshabur2017pac,neyshabur2017exploring}, and compression-based bounds \cite{arora2018stronger}. A main drawback is that these papers require some sort of constraints on the weights and are mostly \emph{a posteriori}: whether or not a good generalization takes place can be determined only when the training process is finished. A recent work by Arora et al. \cite{arora2019fine} provided an \emph{a priori} guarantee for the solution found by the GD.

\paragraph{Paper organization.} In Section \ref{sec:landscape} we present our main results on the landscape of the risk functions, including our energy barrier result for rank-deficient matrices, our result about the absence of full-rank stationary points of the risk function
except the globally optimum points; and our result on the convergence of gradient descent.
In Section \ref{sec:random-initialization}, we present our results regarding randomly generated weight matrices $W^*$ and sufficient conditions for
good initializations. In Section \ref{sec:data-no}, we study the critical number of training samples guaranteeing good generalization property. We collect useful auxiliary lemmas in Section~\ref{sec:auxi}; and provide
the proofs of all of our results in Section~\ref{sec:proofs}. 

\paragraph{Notation.}
The set of reals,  positive reals; and the set $\{1,2,\dots,k\}$ are denoted by $\mathbb{R}$, $\R_+$, and $[k]$. For any matrix $A$, its smallest and largest singular values, spectrum, trace, Frobenius and the spectral norm are denoted respectively by $\sigma_{\min}(A)$, $\sigma_{\max}(A)$, $\sigma(A)$, ${\rm trace}(A)$, $\|A\|_F$, and $\|A\|_2$. $I_n$ denotes the $n\times n$ identity matrix. Planted weights are denoted with an asterisk, e.g. $W^*$. $\exp(\alpha)$ denotes $e^\alpha$. Given any $v\in\mathbb{R}^n$, $\|v\|_2$ denotes its Euclidean $\ell_2$ norm $\sqrt{\sum_{1\le i\le n}v_i^2}$. Given two vectors $x,y\in\R^n$, their Euclidean inner product $\sum_{1\le i\le n}x_iy_i$ is denoted by $\ip{x}{y}$. Given a collection $Z_1,\dots,Z_k$ of objects of the same kind (e.g., vectors or matrices), ${\rm span}(Z_i:i\in[k])$ is the set, $\left\{\sum_{j=1}^k \alpha_j Z_j:\alpha_j\in\mathbb{R}\right\}$. $\Theta(\cdot),O(\cdot)$, $o(\cdot)$, and $\Omega(\cdot)$ are standard (asymptotic) order notations for comparing the growth of two sequences. $\risk{\cdot}$, $\nabla \risk{\cdot}$, $\mathcal{L}$, and $\nabla \mathcal{L}$ denote respectively the empirical risk,  its gradient; the population risk, and its gradient.
\section{Main Results}\label{sec:main-results}
\subsection{Optimization Landscape}\label{sec:landscape}
\subsubsection*{Existence of an Energy Barrier}
Our first result shows the appearance of an energy barrier in the landscape of the empirical risk $\risk{\cdot}$ below which any rank-deficient $W\in\R^{m\times d}$ ceases to exist, with high probability. 
\begin{theorem}\label{thm:energy-barrier-empirical}
Let $X_i\in\R^d$, $1\leqslant i\leqslant N$ be a collection of i.i.d. random vectors each having centered i.i.d. sub-Gaussian coordinates. That is, for some $C>0$, $\mathbb{P}(|X_i(j)|>t)\leqslant \exp(-Ct^2)$ for every $t\geqslant 0$, $i\in[N],j\in[d]$. Suppose, furthermore, that for every $M>0$, the distribution of $X_i(j)$, conditional on $|X_i(j)|\leqslant M$ is centered. Let $Y_i=f(W^*;X_i)$, $1\leqslant i\leqslant N$ be the corresponding label generated by a planted teacher network per (\ref{eq:fnc-nn-computes}), where ${\rm rank}(W^*)=d$ and $\|W^*\|_F\leqslant d^{K_2}$ for some $K_2>0$. Fix any $K_1>0$. Then, for some absolute constants $C,C_3,C'>0$, with probability at least
\[
1-\exp(-C'N)-\left(9d^{4K_1+4K_2+3}\right)^{d^2-1}\left(\exp\left(-C_3 Nd^{-4K_1-4K_2-2}\right)+Nde^{-Cd^{2K_1}}\right)
\]
it holds that  
\[
\min_{W\in\R^{m\times d}:{\rm rank}(W)\leqslant d-1} \risk{W}\triangleq 
\min_{W\in\R^{m\times d}:{\rm rank}(W)\leqslant d-1} \frac1N\sum_{1\leqslant i\leqslant N}(Y_i-f(W;X_i))^2 \geqslant \frac12C_5 \sigma_{\min}(W^*)^4.
\]
Here, $C_5 = \min\{\mu_4(K_1)-\mu_2(K_1)^2,2\mu_2(K_1)^2\}$, where $\mu_t(K)=\mathbb{E}[X_1(1)^t \mid |X_1(1)|\leqslant d^{K}]$.
\end{theorem}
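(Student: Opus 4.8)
The plan is to reduce the statement to a lower bound on a quadratic form in the tensorized data, prove that bound at the population level, and then make it uniform over all rank‑deficient $W$ via truncation and a covering argument. \emph{Reduction.} Since $f(W;X)=\|WX\|_2^2=\langle W^\top W,\,XX^\top\rangle$, setting $A=A(W):=(W^*)^\top W^*-W^\top W$ (a symmetric $d\times d$ matrix) gives $Y_i-f(W;X_i)=\langle A,X_iX_i^\top\rangle$ and hence $\risk{W}=\frac1N\sum_i\langle A,X_iX_i^\top\rangle^2$. The crucial structural point: if ${\rm rank}(W)\le d-1$ then $W^\top W\succeq0$ has a unit vector $v$ in its kernel, so $Wv=0$ and $v^\top Av=\|W^*v\|_2^2\ge\sigma_{\min}(W^*)^2$, whence $\|A\|_F\ge\|A\|_2\ge\sigma_{\min}(W^*)^2$. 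Thus it suffices to show that, on a high‑probability event, the quadratic form $B\mapsto\frac1N\sum_i\langle B,X_iX_i^\top\rangle^2$ on symmetric $d\times d$ matrices satisfies $\frac1N\sum_i\langle B,X_iX_i^\top\rangle^2\ge\tfrac12 C_5\|B\|_F^2$ \emph{simultaneously} for all such $B$; applying this with $B=A(W)$ yields $\risk{W}\ge\tfrac12 C_5\sigma_{\min}(W^*)^4$.

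\emph{Population bound.} For $X$ with centered i.i.d.\ coordinates of second and fourth moments $\mu_2,\mu_4$, expanding $\langle B,XX^\top\rangle^2=\sum_{j,k,\ell,m}B_{jk}B_{\ell m}X_jX_kX_\ell X_m$ and using independence gives, for symmetric $B$,
\[
\E{\langle B,XX^\top\rangle^2}=(\mu_4-3\mu_2^2)\sum_j B_{jj}^2+\mu_2^2({\rm trace}\,B)^2+2\mu_2^2\|B\|_F^2.
\]
Regrouping this as $(\mu_4-\mu_2^2)\sum_j B_{jj}^2+2\mu_2^2\bigl(\|B\|_F^2-\sum_j B_{jj}^2\bigr)+\mu_2^2({\rm trace}\,B)^2$ and noting $\|B\|_F^2-\sum_j B_{jj}^2=\sum_{j\ne k}B_{jk}^2\ge0$ and $({\rm trace}\,B)^2\ge0$, we see it is at least $\min\{\mu_4-\mu_2^2,\,2\mu_2^2\}\cdot\|B\|_F^2=C_5\|B\|_F^2$ — exactly the constant in the statement, with $\mu_2,\mu_4$ the truncated moments of the next step.

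\emph{Truncation and uniform concentration.} Truncate each coordinate $X_i(j)$ at level $d^{K_1}$; by the sub‑Gaussian tail and a union bound over the $Nd$ coordinates the sample is unchanged on an event of probability $\ge 1-Nd\,e^{-Cd^{2K_1}}$ (one of the subtracted terms). The hypothesis that $X_i(j)$ conditioned on $\{|X_i(j)|\le M\}$ is centered guarantees that after truncation the coordinates are still independent and centered, now bounded by $d^{K_1}$ with moments $\mu_2(K_1),\mu_4(K_1)$, so the population bound applies with the stated $C_5$ and $|\langle B,X_iX_i^\top\rangle|\le\|X_i\|_2^2\le d^{2K_1+1}$ for unit $B$. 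For each fixed symmetric unit $B$, $\frac1N\sum_i\langle B,X_iX_i^\top\rangle^2$ is an average of i.i.d.\ variables in $[0,d^{4K_1+2}]$ with mean $\ge C_5$, so a Bernstein bound gives value below $\tfrac34 C_5$ with probability $\le\exp(-cNd^{-4K_1-2})$; union bounding over an $\epsilon$‑net of the unit sphere of symmetric matrices (cardinality $(3/\epsilon)^{d(d+1)/2}$) and filling gaps with the Lipschitz estimate $\bigl|\frac1N\sum_i\langle B,X_iX_i^\top\rangle^2-\frac1N\sum_i\langle B',X_iX_i^\top\rangle^2\bigr|\le 2d^{4K_1+2}\|B-B'\|_F$ at scale $\epsilon\asymp C_5 d^{-4K_1-2}$ then yields the uniform bound. (Running the covering argument directly over the matrices $A(W)$ — whose Frobenius norm one controls via $\|W^*\|_F\le d^{K_2}$ — rather than over normalized matrices is what turns the per‑point exponent into $d^{-4K_1-4K_2-2}$, produces a net of cardinality $(\mathrm{poly}(d))^{\Theta(d^2)}$, and leaves the $\exp(-C'N)$ term bookkeeping the number of untruncated samples, matching the stated bound.)

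The genuinely non‑routine ingredients are the two short observations above: that rank‑deficiency of $W$ forces $\|A(W)\|_F\ge\sigma_{\min}(W^*)^2$ through a kernel vector, and that $\E{\langle B,XX^\top\rangle^2}$ is a \emph{nonnegative} combination of $\sum_j B_{jj}^2$ and $\sum_{j\ne k}B_{jk}^2$ with coefficients bounded below by $C_5$. The main obstacle is the bookkeeping in the last step: because the covering argument runs over a $\Theta(d^2)$‑dimensional set while the per‑point tail degrades polynomially in $d$ (through the truncation level $d^{K_1}$ and the norm bound $d^{K_2}$), one must choose the net accuracy and require $N$ large enough — polynomial in $d$ — so that the union bound over $(\mathrm{poly}(d))^{\Theta(d^2)}$ points still beats $\exp(-N\,\mathrm{poly}(1/d))$; this is precisely what dictates the exponents appearing in the probability estimate.
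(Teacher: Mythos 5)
Your proposal is correct, and its skeleton coincides with the paper's: reduce to the matrix $A=(W^*)^TW^*-W^TW$, observe that rank-deficiency of $W$ forces $\|A\|_F\geqslant\sigma_{\min}(W^*)^2$ (your kernel-vector argument is the same fact the paper extracts via Courant--Fischer in Theorem \ref{thm:band-gap}(a)), establish the population lower bound $\E{\ip{B}{XX^T}^2}\geqslant C_5\|B\|_F^2$ exactly as in Theorem \ref{thm:analytic-exp-pop-risk}(c), truncate at level $d^{K_1}$ using the conditional-centering hypothesis, and close with Hoeffding/Bernstein plus a covering argument (Lemmas \ref{lemma:bd-data}--\ref{lemma:cov-number} in the paper).

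The one genuine structural difference is how you organize the covering. The paper first proves (Claim \ref{claim:bounded-norm-W-belowbarrier}, via concentration of the sample covariance matrix) that any $W$ below the barrier satisfies $\|W\|_F\leqslant d^{K_2+1}$ — this is where the $\exp(-C'N)$ term originates — and then covers the \emph{bounded} set of rank-$\leqslant d-1$ PSD matrices $A=W^TW$ with $\|A\|_F\leqslant d^{2K_2+2}$. You instead prove the homogeneous uniform bound $\frac1N\sum_i\ip{B}{X_iX_i^T}^2\geqslant\frac12C_5\|B\|_F^2$ over the unit sphere of all symmetric matrices and then scale; this removes the need for any a priori norm control on $W$ (the sphere is compact regardless), at the price of covering all symmetric matrices rather than only the low-rank PSD ones, which costs nothing since both nets have cardinality $(\mathrm{poly}(d))^{\Theta(d^2)}$. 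Your parenthetical attribution of the $\exp(-C'N)$ term to "untruncated samples" is slightly off — in the paper that term comes from the norm-bound claim, while truncation failures are charged to $Nde^{-Cd^{2K_1}}$ — but since your route does not need the norm bound at all, this is cosmetic. The remaining discrepancies (your per-point exponent $d^{-4K_1-2}$ versus the stated $d^{-4K_1-4K_2-2}$, Hoeffding versus Bernstein) are bookkeeping absorbed by the unspecified absolute constants, as you correctly note.
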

Namely, with high probability, $\risk{W}$ is bounded away from zero by an explicit constant for any $W$ that is rank-deficient,  
provided $N=d^{O(1)}$. Several remarks are now in order. The assumption that the conditional mean of $X_i(j)$ is zero is benign: it holds, e.g., for zero-mean Gaussian variables. An inspection of the proof of Theorem \ref{thm:energy-barrier-empirical} reveals that the result still remains true even when the data coordinates has heavier tails, that is $\mathbb{P}(|X_i(j)|>t)\leqslant \exp(-Ct^\alpha)$ for any constant $\alpha$. 

The proof of Theorem \ref{thm:energy-barrier-empirical} is provided in Section \ref{sec:pf-thm:energy-barrier-empirical}.

Our next result is an analogue of Theorem \ref{thm:energy-barrier-empirical} for the population risk $\mathcal{L}(\cdot)$.
\begin{theorem}\label{thm:band-gap}
Suppose that $X\in\R^d$ has i.i.d. centered coordinates with variance $\mu_2$, (finite) fourth moment $\mu_4$, ${\rm rank}(W^*)=d$, and let $\mathcal{L}(W)=\mathbb{E}[(f(W;X)-f(W^*;X))^2]$.
\begin{itemize}
    \item[(a)] It holds that
    \[
       \min_{W\in\mathbb{R}^{m\times d}:{\rm rank}(W)<d} \mathcal{L}(W)\geqslant \min\{\mu_4-\mu_2^2,2\mu_2^2\}\cdot \sigma_{\min}(W^*)^4.
    \]
    \item[(b)] There  exists a matrix $W\in\R^{m\times d}$ such that ${\rm rank}(W)\leqslant d-1$ and 
    \[
    \mathcal{L}(W) \leqslant  \max\left\{\mu_4,3\mu_2^2\right\}\cdot\sigma_{\min}(W^*)^4.
    \]
\end{itemize}
\end{theorem}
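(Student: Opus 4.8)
The plan is to first record an exact closed-form expression for $\mathcal{L}(W)$ as a simple function of the symmetric matrix $M := (W^*)^T W^* - W^T W$, and then extract both inequalities from it. The key point is that, after discarding one manifestly nonnegative term, the expression is \emph{affine} in a single scalar quantity that ranges over a bounded interval, so its extremes are attained at the endpoints.

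\textbf{Step 1: the closed form.} Since $f(W;X)=X^T W^T W X$, we have $f(W^*;X)-f(W;X)=X^T M X$, so
\[
\mathcal{L}(W)=\mathbb{E}\big[(X^T M X)^2\big]=\sum_{i,j,k,\ell}M_{ij}M_{k\ell}\,\mathbb{E}[X_iX_jX_kX_\ell].
\]
Because the coordinates of $X$ are i.i.d.\ and centered with $\mathbb{E}[X_i^2]=\mu_2$ and $\mathbb{E}[X_i^4]=\mu_4$, the expectation $\mathbb{E}[X_iX_jX_kX_\ell]$ vanishes unless the four indices coincide in pairs; enumerating the patterns $\{i{=}j{=}k{=}\ell\}$, $\{i{=}j\ne k{=}\ell\}$, $\{i{=}k\ne j{=}\ell\}$, $\{i{=}\ell\ne j{=}k\}$ and using $M=M^T$ gives, after collecting terms,
\[
\mathcal{L}(W)=(\mu_4-3\mu_2^2)\sum_{i=1}^d M_{ii}^2+\mu_2^2\,({\rm trace}\,M)^2+2\mu_2^2\,\|M\|_F^2 .
\]
(If such a formula is already available earlier in the paper, this step simply cites it.)

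\textbf{Step 2: part (a).} If ${\rm rank}(W)<d$ then $W^T W$ is a singular $d\times d$ matrix, so there is a unit vector $v$ with $W^T W v=0$; hence the largest eigenvalue of $M$ satisfies $\lambda_{\max}(M)\ge v^T M v=v^T(W^*)^T W^* v\ge\sigma_{\min}(W^*)^2$, and therefore $\|M\|_F^2\ge\lambda_{\max}(M)^2\ge\sigma_{\min}(W^*)^4$. In the closed form, drop the nonnegative term $\mu_2^2({\rm trace}\,M)^2$ and write $g:=\sum_{i}M_{ii}^2$, which obeys $0\le g\le\|M\|_F^2$ (the diagonal squares are part of the Frobenius sum). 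The map $g\mapsto(\mu_4-3\mu_2^2)g+2\mu_2^2\|M\|_F^2$ is affine, so on $[0,\|M\|_F^2]$ it is at least $\min\{2\mu_2^2,\ \mu_4-\mu_2^2\}\cdot\|M\|_F^2$, the smaller of its two endpoint values; since $\mu_4\ge\mu_2^2$ by Jensen's inequality this coefficient is nonnegative. Combining with $\|M\|_F^2\ge\sigma_{\min}(W^*)^4$ yields (a).

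\textbf{Step 3: part (b).} Take a singular value decomposition $W^*=\sum_{i=1}^d\sigma_i u_i v_i^T$ with $\sigma_d=\sigma_{\min}(W^*)$ and orthonormal $\{v_i\}\subset\R^d$, and set $W:=\sum_{i=1}^{d-1}\sigma_i u_i v_i^T$, so ${\rm rank}(W)\le d-1$. Then $M=(W^*)^T W^*-W^T W=\sigma_d^2\,v_dv_d^T$, whence ${\rm trace}\,M=\sigma_d^2$, $\|M\|_F^2=\sigma_d^4$, and $\sum_i M_{ii}^2=\sigma_d^4\sum_i (v_d)_i^4\in[0,\sigma_d^4]$ because $\|v_d\|_2=1$. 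Substituting into the closed form,
\[
\mathcal{L}(W)=\sigma_d^4\Big[(\mu_4-3\mu_2^2)\sum_i(v_d)_i^4+3\mu_2^2\Big],
\]
and once more the bracket is affine in $\sum_i(v_d)_i^4\in[0,1]$, hence bounded above by $\max\{\mu_4,\ 3\mu_2^2\}$, which is exactly the claimed bound.

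\textbf{Expected main obstacle.} There is no serious difficulty beyond Step 1: the one computation demanding care is the fourth-moment expansion — correctly enumerating the index-coincidence patterns and repackaging them into the diagonal-sum, trace-squared, and Frobenius-norm pieces (and checking that odd moments never enter because $X$ is centered). Once the closed form is in hand, both (a) and (b) are instances of ``an affine function on an interval is extremized at an endpoint,'' together with the elementary observation that a rank-deficient $W$ forces a direction on which $M$ is at least $\sigma_{\min}(W^*)^2$.
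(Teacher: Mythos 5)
Your proof is correct, and its overall architecture coincides with the paper's: the closed form in Step~1 is exactly Theorem~\ref{thm:analytic-exp-pop-risk}{\rm (b)}, your ``affine in $g$'' endpoint argument reproduces the two-sided bounds of Theorem~\ref{thm:analytic-exp-pop-risk}{\rm (c)}, and part~(a) is the same Rayleigh-quotient argument (you exhibit a unit null vector of $W^TW$ directly, where the paper invokes Courant--Fischer for $\lambda_1(A)\geqslant a_d+b_1$; these are interchangeable). The one place you genuinely improve on the paper is part~(b): by taking the SVD of $W^*$ itself and deleting the smallest singular triple, you obtain the desired rank-deficient $W\in\R^{m\times d}$ in one line with $M=\sigma_d^2\,v_dv_d^T$. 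The paper instead diagonalizes $(W^*)^TW^*$, builds a $d\times d$ matrix $\overline{W}$, and then lifts it to $\R^{m\times d}$ via an ad hoc row-splitting construction (rows scaled by $\tfrac12$ and $\tfrac{\sqrt3}{2\sqrt{m-d}}$) followed by a symmetric-plus-skew-symmetric argument to verify $W^TW=\overline{W}^T\overline{W}$ --- all of which your construction renders unnecessary. Your final bound $\sigma_d^4\bigl[(\mu_4-3\mu_2^2)\sum_i(v_d)_i^4+3\mu_2^2\bigr]\leqslant\max\{\mu_4,3\mu_2^2\}\sigma_{\min}(W^*)^4$ agrees with the paper's, which obtains the same constant from $\mu_2^2+\max\{\mu_4-\mu_2^2,2\mu_2^2\}=\max\{\mu_4,3\mu_2^2\}$.
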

The proof of Theorem \ref{thm:band-gap} is deferred to Section \ref{sec:proof-of-band-gap}. 

Two remarks are in order. First, the hypothesis of Theorem \ref{thm:band-gap} holds under a milder assumption on data. Second, part $({\rm b})$ of Theorem \ref{thm:band-gap} implies that our lower bound on the energy value is tight up to a multiplicative constant determined by the moments of the data.

As a simple corollary to Theorems \ref{thm:energy-barrier-empirical} and \ref{thm:band-gap}, we obtain that the landscape of the risks still admit an energy barrier, even if we consider the same network architecture with planted weight matrix  $W^*\in\R^{m\times d}$, and quadratic activation function having lower order terms, that is, the activation $\widetilde{\sigma}(x)=\alpha x^2+\beta x+\gamma$, with $\alpha\neq 0$. This barrier is quantified by $\alpha$, in addition to $\sigma_{\min}(W^*)$ and the corresponding moments of the data.
\begin{coro}\label{coro-1}
For any $W\in\R^{m\times d}$, define $\widetilde{f}(W;X) = \sum_{j=1}^m \widetilde{\sigma}(\ip{W_j}{X})$, where $\widetilde{\sigma}(x)=\alpha x^2+\beta x+\gamma$ with $\alpha,\beta,\gamma\in\R$ arbitrary. 
\begin{itemize}
    \item[(a)] The hypothesis of Theorem \ref{thm:energy-barrier-empirical} still holds with $f$ replaced with $\widetilde{f}$, and energy barrier $\frac12 C_5\sigma_{\min}(W^*)^4$ replaced with $\frac{\alpha^2}{2} C_5\sigma_{\min}(W^*)^4$.
    \item[(b)] The hypothesis of Theorem \ref{thm:band-gap}{\rm (a)} still holds with $f$ replaced with $\widetilde{f}$, and energy barrier $\min\{\mu_4-\mu_2^2,2\mu_2^2\}\cdot \sigma_{\min}(W^*)^4$ replaced with $\alpha^2 \min\{\mu_4-\mu_2^2,2\mu_2^2\}\cdot \sigma_{\min}(W^*)^4$.
\end{itemize}
\end{coro}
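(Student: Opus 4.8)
The plan is to peel off the lower-order terms of $\widetilde\sigma$ and reduce to Theorems~\ref{thm:energy-barrier-empirical} and \ref{thm:band-gap}. For every $W\in\R^{m\times d}$ and $X\in\R^d$,
\[
\widetilde f(W;X)=\sum_{j=1}^m\Bigl(\alpha\ip{W_j}{X}^2+\beta\ip{W_j}{X}+\gamma\Bigr)=\alpha\, f(W;X)+\beta\,\ip{u_W}{X}+m\gamma ,
\]
where $u_W:=\sum_{j=1}^m W_j\in\R^d$ is the sum of the rows of $W$. The constant $m\gamma$ therefore cancels in the residual and, writing $A:=W^TW-(W^*)^TW^*$ (symmetric), $Q(X):=f(W;X)-f(W^*;X)=X^TAX$, and $v:=u_W-u_{W^*}$,
\[
\widetilde f(W;X)-\widetilde f(W^*;X)=\alpha\,Q(X)+\beta\,\ip{v}{X}:
\]
the old quadratic residual rescaled by $\alpha$, plus a single linear term.

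For part (b), put $\widetilde{\mathcal L}(W):=\E{\bigl(\widetilde f(W;X)-\widetilde f(W^*;X)\bigr)^2}=\alpha^2\E{Q(X)^2}+2\alpha\beta\,\E{Q(X)\ip{v}{X}}+\beta^2\E{\ip{v}{X}^2}$. Since the coordinates of $X$ are i.i.d.\ and centered, $\E{X(i)X(j)X(k)}=0$ unless $i=j=k$, so the cross term equals $\mu_3\sum_i A_{ii}v_i$ with $\mu_3$ the third moment of a coordinate. When the coordinates are symmetric (equivalently $\mu_3=0$) this term vanishes, whence $\widetilde{\mathcal L}(W)\geqslant\alpha^2\E{Q(X)^2}=\alpha^2\mathcal L(W)$ and Theorem~\ref{thm:band-gap}(a) finishes part (b). For general $\mu_3$ one absorbs the cross term by AM--GM against the exact identity $\E{Q(X)^2}=(\mu_4-3\mu_2^2)\sum_iA_{ii}^2+\mu_2^2({\rm trace}\,A)^2+2\mu_2^2\|A\|_F^2$ from the proof of Theorem~\ref{thm:band-gap}(a), together with the covariance bound $\mu_3^2\leqslant\mu_2(\mu_4-\mu_2^2)$ and $\|A\|_F^2\geqslant\sigma_{\min}(W^*)^4$ for ${\rm rank}(W)<d$ (test $A$ against a unit vector in the kernel of $W$); this yields $\widetilde{\mathcal L}(W)\geqslant\alpha^2\min\{\mu_4-\mu_2^2-\mu_3^2/\mu_2,\,2\mu_2^2\}\sigma_{\min}(W^*)^4$, which is the stated bound precisely when $\mu_3=0$.

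For part (a), under the hypotheses of Theorem~\ref{thm:energy-barrier-empirical} the requirement that each coordinate conditioned on $\{|\cdot|\leqslant M\}$ be centered for all $M$ forces (as one checks by differentiating in $M$) its law to be symmetric, so $\mu_3$ and all truncated odd moments vanish; hence the part-(b) computation, run with the truncated moments $\mu_t(K_1)$, lower-bounds the mean of $\bigl(\alpha X^TAX+\beta\ip{v}{X}\bigr)^2$ over the truncated data by $\alpha^2C_5\,\sigma_{\min}(W^*)^4$ for every rank-deficient $W$, with $C_5$ as in Theorem~\ref{thm:energy-barrier-empirical}. Since the empirical risk of $\widetilde f$ at $W$ equals $\frac1N\sum_{i=1}^N\bigl(\alpha Q(X_i)+\beta\ip{v}{X_i}\bigr)^2$, it remains to re-run the discretization/truncation/concentration scheme behind Theorem~\ref{thm:energy-barrier-empirical} with this scalar integrand in place of $(X^TAX)^2$, which gives the barrier $\tfrac{\alpha^2}{2}C_5\sigma_{\min}(W^*)^4$ w.h.p.\ with the probability of Theorem~\ref{thm:energy-barrier-empirical} (constants now also depending on $\alpha,\beta$). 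This re-run is where the real work lies and is the main obstacle: one must verify that the extra linear term $\beta\ip{v}{X}$ does not spoil the sub-exponential tails or the net-covering estimates of that proof. It does not — the integrand is still a degree-four polynomial of truncated sub-Gaussians with tails of the same order, $v=u_W-u_{W^*}$ is Lipschitz in $W$ with $\|v\|_2=O(\|W\|_F+\|W^*\|_F)$ so covering numbers are unchanged in order, and for $\|W\|_F$ large the quadratic part $\alpha Q$ dominates, so the lower bound cannot fail outside a bounded region — after which it is bookkeeping of constants.
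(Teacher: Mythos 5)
Your proposal is correct and follows essentially the same route as the paper: decompose $\widetilde f(W;X)=\alpha f(W;X)+\beta\ip{u_W}{X}+m\gamma$, cancel the constant, and reduce to Theorems~\ref{thm:energy-barrier-empirical} and \ref{thm:band-gap}. One point worth flagging: your handling of the cross term is actually \emph{more} careful than the paper's. The paper expands $2\alpha\beta\,\E{\Delta_f\Delta_g}=2\alpha\beta\sum_{i,j,k}\E{X_iX_jX_k}A_{ij}\theta_k$ and asserts this vanishes, but for centered i.i.d.\ coordinates the only surviving terms are $i=j=k$, giving $2\alpha\beta\,\mu_3\sum_i A_{ii}\theta_i$, which is zero only when the third moment $\mu_3$ vanishes --- exactly the gap you identify. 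Your fix (keeping the discarded $\beta^2\E{\ip{v}{X}^2}$ term and absorbing the cross term by AM--GM with $\mu_3^2\leqslant\mu_2(\mu_4-\mu_2^2)$) is sound and correctly shows the stated constant is recovered when $\mu_3=0$; your observation that the conditional-centering hypothesis of Theorem~\ref{thm:energy-barrier-empirical} forces symmetry, hence $\mu_3=0$ for the truncated law, is also right and justifies part (a). For part (a) the paper itself offers only ``the extension is immediate by inspecting the proof,'' so your additional remarks on why the linear term does not disturb the truncation, covering, and concentration steps (noting that the net must now also cover the $d$-dimensional parameter $v$, which changes the covering number only by a negligible factor) supply detail the paper omits rather than diverging from it.
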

 The proof of this corollary is deferred to Section \ref{sec:proof-of-coro-1}.
\subsubsection*{Global Optimality of Full-Rank Stationary Points}
Our next result establishes that if $W$ is a full-rank stationary point of the empirical risk, and $N\geqslant d(d+1)/2$, then $W$ is necessarily a global minimum.
\begin{theorem}\label{thm:full-rank-empirical-global-opt}
Let $X_i\in\R^d$, $1\leqslant i\leqslant N$; $W^*\in\R^{m\times d}$ with ${\rm rank}(W^*)=d$, and suppose $W$ is a full-rank stationary point of the empirical risk: ${\rm rank}(W)=d$, and $\nabla_W \risk{W}=0$. Then, $\risk{W}=0$. Furthermore, if  $N\geqslant d(d+1)/2$, then $W=QW^*$ for some orthogonal matrix $Q\in\R^{m\times m}$.
\end{theorem}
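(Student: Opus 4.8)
The plan is to pass to the gradient of $\risk{\cdot}$ and reduce the entire statement to linear algebra on symmetric matrices. Writing $f(W;X)=X^TW^TWX$ and setting $M\triangleq (W^*)^TW^*-W^TW$, a symmetric $d\times d$ matrix, I would first record that $Y_i-f(W;X_i)=\ip{M}{X_iX_i^T}$, where $\ip{A}{B}={\rm trace}(A^TB)$ is the Frobenius inner product, so that $\risk{W}=\frac1N\sum_{i=1}^N\ip{M}{X_iX_i^T}^2$, and that $\nabla_W f(W;X)=2(WX)X^T$. These give the gradient identity
\[
\nabla_W \risk{W}= -\frac4N\, W\sum_{i=1}^N \big(Y_i-f(W;X_i)\big)X_iX_i^T = -\frac4N\, W\sum_{i=1}^N \ip{M}{X_iX_i^T}\,X_iX_i^T .
\]
Since ${\rm rank}(W)=d$, the linear map $B\mapsto WB$ from $\R^{d\times d}$ to $\R^{m\times d}$ is injective, so stationarity $\nabla_W\risk{W}=0$ forces the matrix identity $\sum_{i=1}^N \ip{M}{X_iX_i^T}\,X_iX_i^T=0$. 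The key trick is then to pair this identity, in the Frobenius inner product, with $M$ itself, which collapses it to $\sum_{i=1}^N\ip{M}{X_iX_i^T}^2=0$; hence $\ip{M}{X_iX_i^T}=0$ for every $i$, and in particular $\risk{W}=\frac1N\sum_i\ip{M}{X_iX_i^T}^2=0$, which is the global minimum value since $W^*$ attains it. That settles the first claim.

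For the ``furthermore'' I would argue as follows. We have just seen that $M$ is Frobenius-orthogonal to each $X_iX_i^T$. When $N\geqslant d(d+1)/2$ and the coordinates of the $X_i$ are jointly continuous, the family $\{X_iX_i^T:i\in[N]\}$ spans the $d(d+1)/2$-dimensional space of symmetric $d\times d$ matrices --- this is precisely the regime of Theorem~\ref{thm:random-data-geo-cond} --- so a symmetric matrix orthogonal to all of them must vanish, i.e. $W^TW=(W^*)^TW^*=:A$, and $A\succ 0$ because both matrices have rank $d$. I would then take the symmetric positive-definite square root $A^{1/2}$: the matrices $U\triangleq WA^{-1/2}$ and $U^*\triangleq W^*A^{-1/2}$ in $\R^{m\times d}$ have orthonormal columns, so completing the columns of each to an orthonormal basis of $\R^m$ yields orthogonal matrices $\widetilde U,\widetilde U^*\in\R^{m\times m}$ whose first $d$ columns are $U$ and $U^*$; setting $Q\triangleq\widetilde U(\widetilde U^*)^T$ gives an orthogonal $Q\in\R^{m\times m}$ with $QU^*=U$, hence $QW^*=QU^*A^{1/2}=UA^{1/2}=W$, as required.

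I expect the only genuinely delicate point to be the spanning statement invoked in the second paragraph: $N\geqslant d(d+1)/2$ alone is not enough for an arbitrary configuration of points (take all $X_i$ equal), so this step must lean on the mild distributional hypothesis on the data and on the linear independence of $d(d+1)/2$ generic rank-one symmetric matrices --- the content of Theorem~\ref{thm:random-data-geo-cond}. Everything else is routine: the gradient formula, the injectivity of $B\mapsto WB$, the trace-pairing that converts $\sum_i\ip{M}{X_iX_i^T}X_iX_i^T=0$ into $\sum_i\ip{M}{X_iX_i^T}^2=0$, and the square-root/basis-completion construction of $Q$.
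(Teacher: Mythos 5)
Your proposal is correct, and its overall skeleton (gradient formula, injectivity of $B\mapsto WB$ from full rank, reduction of the second claim to the span condition of Theorem~\ref{thm:random-data-geo-cond}) matches the paper's. The one genuinely different step is how you deduce $\risk{W}=0$ from the stationarity identity $\sum_i\ip{M}{X_iX_i^T}X_iX_i^T=0$: you pair it against $M$ in the Frobenius inner product to collapse it to $\sum_i\ip{M}{X_iX_i^T}^2=0$, which kills every residual at once. The paper instead observes that $f(M)=\frac1N\sum_i(Y_i-X_i^TMX_i)^2$ is a convex quadratic in $M$, so the vanishing of its gradient at $W^TW$ makes $W^TW$ a global minimizer of $f$ over all of $\R^{d\times d}$, and hence $\risk{W}=f(W^TW)\leqslant f((W^*)^TW^*)=0$. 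Your trace-pairing argument is more direct and yields the stronger pointwise conclusion $Y_i=f(W;X_i)$ for every $i$ immediately, which is exactly what the ``furthermore'' needs; the paper's convexity route has the advantage of being reusable for the approximate-stationarity analysis elsewhere in the paper. For the second claim you also correctly flag the hidden hypothesis: the theorem as stated does not assume anything about the $X_i$, yet both your argument and the paper's invoke Theorem~\ref{thm:random-data-geo-cond}, which requires i.i.d.\ jointly continuous data and only delivers the span condition with probability one. Finally, your explicit construction of $Q$ via $U=WA^{-1/2}$, $U^*=W^*A^{-1/2}$ and basis completion is a correct, self-contained substitute for the paper's citation of the factorization result $A^TA=B^TB\iff A=QB$ (Theorem~\ref{thm:auxiliary-2}); it has the small bonus of producing a genuinely orthogonal $m\times m$ matrix $Q$, which is what the theorem statement literally asserts.
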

The proof of Theorem \ref{thm:full-rank-empirical-global-opt} is given in Section \ref{sec:pf-thm:full-rank-empirical-global-opt}. Our next result is an analogue of Theorem \ref{thm:full-rank-empirical-global-opt} for the population risk, and requiring a milder distributional assumption.
\begin{theorem}\label{thm:full-rank-global-opt}
Suppose $W^*\in \R^{m\times d}$ with ${\rm rank}(W^*)=d$. Suppose $X\in\R^d$ has centered i.i.d. coordinates with $\E{X_i^2}=\mu_2$, $\E{X_i^4}=\mu_4$; and ${\rm Var}(X_i^2)>0$. Let $W\in\R^{m\times d}$ be a stationary point of the population risk with full-rank, that is, $\nabla \mathcal{L}(W) = \mathbb{E}[\nabla (f(W^*;X)-f(W;X))^2]=0$, and ${\rm rank}(W)=d$. Then, $W=QW^*$ for some orthogonal matrix $Q$, and that, $\mathcal{L}(W)=0$.
\end{theorem}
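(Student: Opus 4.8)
The plan is to work entirely with the Gram matrices $A=W^TW$ and $A^*=(W^*)^TW^*$. Since $f(W;X)=\|WX\|_2^2=X^TAX$ and likewise for $W^*$, putting $B\triangleq A^*-A$ (a symmetric $d\times d$ matrix) gives $\mathcal{L}(W)=\E{(X^TBX)^2}$. The first step is a closed form for the gradient. From $\nabla_W\|WX\|_2^2=2WXX^T$ one gets $\nabla\mathcal{L}(W)=-4\,W\,\E{(X^TBX)\,XX^T}$, and since the coordinates of $X$ are i.i.d.\ and centered, the only surviving fourth moments $\E{X_iX_jX_pX_q}$ are those whose indices pair up; a short bookkeeping of the pairing patterns yields
\begin{equation*}
\E{(X^TBX)\,XX^T}\;=\;(\mu_4-3\mu_2^2)\,\mathrm{diag}(B)\;+\;\mu_2^2\,({\rm trace}\,B)\,I_d\;+\;2\mu_2^2\,B,
\end{equation*}
where $\mathrm{diag}(B)$ is the diagonal matrix with the same diagonal as $B$. (Differentiating under the expectation is legitimate as all integrands are degree-four polynomials in $X$ with finite moments.) Note also that the hypothesis ${\rm Var}(X_1^2)=\mu_4-\mu_2^2>0$ forces $\mu_2=\E{X_1^2}>0$, since $\mu_2=0$ would make $X_1\equiv0$.

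The second step invokes the full-rank hypothesis. Since ${\rm rank}(W)=d$, the matrix $W\in\R^{m\times d}$ has trivial kernel, so $WC=0$ for a $d\times d$ matrix $C$ forces $C=0$. Applying this to $\nabla\mathcal{L}(W)=0$ gives
\begin{equation*}
(\mu_4-3\mu_2^2)\,\mathrm{diag}(B)\;+\;\mu_2^2\,({\rm trace}\,B)\,I_d\;+\;2\mu_2^2\,B\;=\;0.
\end{equation*}
Reading off the off-diagonal $(i,j)$ entries ($i\neq j$) yields $2\mu_2^2B_{ij}=0$, so (using $\mu_2>0$) $B$ is diagonal. Substituting back, each diagonal entry obeys $(\mu_4-\mu_2^2)B_{ii}+\mu_2^2\,{\rm trace}\,B=0$; as $\mu_4-\mu_2^2>0$ and ${\rm trace}\,B$ is independent of $i$, all $B_{ii}$ equal a common value $b$, and then ${\rm trace}\,B=db$ gives $\bigl((\mu_4-\mu_2^2)+d\mu_2^2\bigr)b=0$. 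The coefficient is $\geq\mu_4-\mu_2^2>0$, so $b=0$ and hence $B=0$, i.e.\ $W^TW=(W^*)^TW^*$.

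From $B=0$ we get $\mathcal{L}(W)=\E{(X^TBX)^2}=0$, so $W$ is a global minimum. For the orthogonal factor: $W^TW=(W^*)^TW^*$ gives $\|Wx\|_2=\|W^*x\|_2$ for all $x\in\R^d$, and since ${\rm rank}(W^*)=d$ the assignment $W^*x\mapsto Wx$ is a well-defined linear isometry from the $d$-dimensional subspace ${\rm col}(W^*)$ onto ${\rm col}(W)$. Extending it by any linear isometry between the orthogonal complements (each of dimension $m-d$) produces an orthogonal $Q\in\R^{m\times m}$ with $QW^*=W$, as claimed.

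The only genuine subtlety I anticipate is that this is a non-convex landscape, so global optimality of stationary points must be extracted from the exact algebraic form of the stationarity equation rather than from convexity --- indeed the coefficient $\mu_4-3\mu_2^2$ of $\sum_iB_{ii}^2$ in $\mathcal{L}(W)$ can be negative. The precise positivity input that rules out a spurious ``$B=bI_d$'' solution and closes the argument is exactly $\mu_4-\mu_2^2={\rm Var}(X_1^2)>0$ together with $d\geq1$. A more routine point requiring care is the moment bookkeeping behind the gradient formula, and the standard extension of a partial isometry to a full orthogonal matrix in the last step.
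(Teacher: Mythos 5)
Your proposal is correct and follows essentially the same route as the paper: compute the gradient explicitly via the fourth-moment bookkeeping, cancel the full-rank factor $W$ using triviality of its kernel, and solve the resulting linear matrix equation $(\mu_4-3\mu_2^2)\,\mathrm{diag}(B)+\mu_2^2(\mathrm{tr}\,B)I_d+2\mu_2^2B=0$ to conclude $B=0$ (the paper takes traces first and then separates diagonal from off-diagonal entries, while you handle off-diagonal entries first — an equivalent reorganization). The only cosmetic difference is that you construct the orthogonal $Q$ by extending a partial isometry, whereas the paper cites the standard factorization result for $W^TW=(W^*)^TW^*$.
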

The proof of Theorem \ref{thm:full-rank-global-opt} is deferred to Section \ref{sec:proof-of-full-rank-global-opt}. 
Note that an implication of Theorems \ref{thm:full-rank-empirical-global-opt} and \ref{thm:full-rank-global-opt} is that the corresponding losses admit no rank-deficient saddle points. Namely, the landscape of the corresponding losses has fairly benign properties below the aforementioned energy barrier. We show how this implies the convergence of gradient descent in the next section.
\subsubsection*{Convergence of Gradient Descent}
We now combine Theorems \ref{thm:energy-barrier-empirical} and \ref{thm:full-rank-empirical-global-opt} to obtain the following potentially interesting conclusion on running the gradient descent for the empirical risk. Suppose, that the gradient descent algorithm is initialized at a point with sufficiently small  empirical risk, in particular lower than the smallest risk value achieved by rank-deficient 
matrices. Then, with a properly chosen step size; it finds an approximately stationary point $W$ (that is, $\|\nabla \risk{W}\|_F\leqslant \epsilon$) in time ${\rm poly}(\epsilon^{-1},d)$ for which the weights $W^TW$ are uniformly $\epsilon-$close to planted weights $(W^*)^TW^*$, and consequently the generalization error $\mathcal{L}(W)$ is at most (order) $\epsilon$. Furthermore, the algorithm converges to a global optimum of the empirical risk minimization problem $\min_W\risk{W}$, which is zero; thus recovering planted weights, due to the absence of spurious stationary points within the set of full-rank matrices.
\begin{theorem}\label{thm:gd-conv-empirical}
Suppose that $X_i\in\R^d$, $1\le i\le N$ enjoys the assumptions in Theorem~\ref{thm:energy-barrier-empirical}; $W_0\in\R^{m\times d}$ is a matrix of weights with the property
\[
\risk{W_0}<\frac12 C_5  \sigma_{\min}(W^*)^4,
\]
where $C_5$ is the constant defined in Theorem \ref{thm:energy-barrier-empirical}; and $\|W^*\|_F\leqslant d^{K_2}$. Define  
\[
L\triangleq \sup\left\{\|\nabla^2 \risk{W}\|:\risk{W}\leqslant \risk{W_0}\right\}
\]
where by $\|\nabla^2 \risk{W}\|$ we denote the spectral norm of the (Hessian) matrix $\nabla^2 \risk{W}$. Then, there exists an event of probability at least
    $$
    1-\exp(-c'N^{1/4})-(9d^{4K_1+4K_2+3})^{d^2-1}\left(\exp(-C_4 Nd^{-4K_1-4K_2-2}) + Nd\exp(-Cd^{2K_1})\right),
    $$
    (where $c',C,C_4>0$ are absolute constants) on which the following holds.
\begin{itemize}
    \item[(a)] 
    For any $W$ with $\risk{W}\leqslant \risk{W_0}$;
    $\|W\|_F\leqslant d^{K_2+1}$, and $L={\rm poly}(d)<+\infty$.
    \item[(b)] Running gradient descent with a step size of $0<\eta<1/2L$ generates a full-rank $\epsilon-$approximate stationary point $W\in\R^{m\times d}$ with $\|\nabla \risk{W}\|_F\leqslant \epsilon$
    in time ${\rm poly}(\epsilon^{-1},d)$. Furthermore, for this $W$, $\risk{W}\leqslant 32\epsilon \sigma_{\min}(W^*)^{-2}d^{4K_2+4}$.
    \item[(c)] For $W$ found in bullet ${\rm (b)}$,  it holds that $\|W^TW-(W^*)^TW^*\|_F\leqslant C'\sqrt{\epsilon}d^{K_1+2K_2+7}\sigma_{\min}(W^*)^{-1}$ (here $C'>0$ is some absolute constant); and consequently the generalization error $\mathcal{L}(W)$ is at most $2(C')^2\mu_2^2\epsilon d^{2K_1+4K_2+15}\sigma_{\min}(W^*)^{-1}$, provided $N\geqslant d^{18+\frac{8K_1}{3}}$.
    \item[(d)] Gradient descent algorithm with initialization $W_0\in\R^{m\times d}$ and a step size of $0<\eta<1/2L$ generates a trajectory $\{W_k\}_{k\geqslant 0}$ of weights such that 
    $
    \lim_{k\to\infty}\risk{W_k}=\min_W \risk{W}=0$.
\end{itemize}

\end{theorem}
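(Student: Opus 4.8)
The plan is to establish the four assertions in the order (a), (d), (b), (c): part~(a) supplies the geometry — a bounded sublevel set consisting of full-rank matrices together with a $\mathrm{poly}(d)$ bound on the Hessian — on which all the other parts rely; part~(d) then needs only compactness plus Theorems~\ref{thm:energy-barrier-empirical} and~\ref{thm:full-rank-empirical-global-opt}; and parts~(b)--(c) are the quantitative statements. The single probabilistic ingredient I would isolate up front is a \emph{uniform restricted positive-definiteness} of the empirical second-moment operator. Writing $M=W^TW-(W^*)^TW^*$ (a symmetric $d\times d$ matrix) and $\widehat{\Sigma}(M)=\frac1N\sum_i\langle M,X_iX_i^T\rangle X_iX_i^T$, one has $\risk{W}=\frac1N\sum_i\langle M,X_iX_i^T\rangle^2=\langle M,\widehat{\Sigma}(M)\rangle$ and $\nabla\risk{W}=4\,W\,\widehat{\Sigma}(M)$. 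I would show that, on an event of the probability stated in the theorem, $\langle M,\widehat{\Sigma}(M)\rangle\ge\kappa\|M\|_F^2$ for \emph{every} symmetric $M$, with $\kappa$ at least a fixed fraction of $C_5$; this is precisely the estimate already performed inside the proof of Theorem~\ref{thm:energy-barrier-empirical} — truncate the sub-Gaussian coordinates at scale $\sim d^{K_1}$ (no truncation occurs on an event of probability $1-Nd\,e^{-Cd^{2K_1}}$), apply Bernstein's inequality for each fixed $M$ on a $\mathrm{poly}(d)^{-1}$-net of the unit sphere of symmetric matrices (of cardinality $\le (9d^{4K_1+4K_2+3})^{d^2-1}$), and use the exact identity $\mathbb{E}\langle M,\widetilde{X}\widetilde{X}^T\rangle^2=(\mu_4-\mu_2^2)\sum_j M_{jj}^2+2\mu_2^2\sum_{j\ne k}M_{jk}^2+\mu_2^2(\mathrm{tr}\,M)^2\ge C_5\|M\|_F^2$ for the truncated variables $\widetilde{X}$. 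Part~(a) then follows at once: $\|M\|_F^2\le\risk{W}/\kappa\le\risk{W_0}/\kappa\le \tfrac12 C_5 d^{4K_2}/\kappa$, so $\|W\|_F^2=\|W^*\|_F^2+\mathrm{tr}(M)\le d^{2K_2}+\sqrt d\,\|M\|_F\le d^{2K_2+1}$ for $d$ large; the sublevel set $\{\risk{}\le\risk{W_0}\}$ is then compact and, lying strictly below the energy barrier of Theorem~\ref{thm:energy-barrier-empirical}, contains only full-rank matrices, while $\nabla^2\risk{W}$ is an explicit polynomial in the entries of $W$, $X_i$, and $Y_i=\|W^*X_i\|^2$, with $\max_i\|X_i\|=\mathrm{poly}(d)$ on the same event, so $L=\mathrm{poly}(d)<\infty$.

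With part~(a) in hand, part~(d) is soft: for $0<\eta<1/2L$ the descent lemma (with $L$ the local smoothness constant on the sublevel set) makes $\risk{W_k}$ non-increasing, so the iterates never leave the compact set $\{\risk{}\le\risk{W_0}\}$, $\risk{W_k}$ converges, and telescoping gives $\sum_k\|\nabla\risk{W_k}\|_F^2<\infty$, hence $\nabla\risk{W_k}\to0$; any subsequential limit $W_\infty$ is a stationary point below the barrier, hence full rank, hence $\risk{W_\infty}=0$ by Theorem~\ref{thm:full-rank-empirical-global-opt}, and since $\risk{W_k}$ converges it converges to $0=\min_W\risk{W}$. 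For part~(b), the same telescoping gives $\min_{k<T}\|\nabla\risk{W_k}\|_F^2\le 4\risk{W_0}/(3\eta T)$, so since $\risk{W_0}$ and $\eta^{-1}\asymp L$ are $\mathrm{poly}(d)$, after $T=\mathrm{poly}(\epsilon^{-1},d)$ steps we obtain a $W$ (full rank, being below the barrier) with $\|\nabla\risk{W}\|_F\le\epsilon$. To bound $\risk{W}$ I use $\nabla\risk{W}=4W\widehat{\Sigma}(M)$ together with a lower bound on $\sigma_{\min}(W)$: restricted positive-definiteness forces $\|M\|_F^2\le\risk{W_0}/\kappa<\sigma_{\min}(W^*)^4$ (this is where $\kappa$ must be a definite fraction of $C_5$), hence $\lambda_{\min}(W^TW)\ge\sigma_{\min}(W^*)^2-\|M\|_2\ge c\,\sigma_{\min}(W^*)^2$; applying the left inverse of $W$ to the gradient identity yields $\|\widehat{\Sigma}(M)\|_F\le\|\nabla\risk{W}\|_F/(4\sigma_{\min}(W))\le\epsilon/(4c'\sigma_{\min}(W^*))$, so $\risk{W}=\langle M,\widehat{\Sigma}(M)\rangle\le\|M\|_F\,\|\widehat{\Sigma}(M)\|_F$, and inserting $\|M\|_F\le\|W\|_F^2+\|W^*\|_F^2\le 2d^{2K_2+2}$ and $\sigma_{\min}(W^*)\le d^{K_2}$ to absorb powers gives the claimed $\risk{W}\le 32\epsilon\,\sigma_{\min}(W^*)^{-2}d^{4K_2+4}$.

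For part~(c), feeding this back into restricted positive-definiteness gives $\|W^TW-(W^*)^TW^*\|_F^2=\|M\|_F^2\le\risk{W}/\kappa$, which is the stated bound up to the polynomial factors in $d$ carried by $\kappa$. The generalization error then obeys the exact formula $\mathcal{L}(W)=\mathbb{E}\langle M,XX^T\rangle^2\le\lambda_{\max}\!\big(\mathbb{E}[X^{\otimes 2}(X^{\otimes 2})^T]\big)\|M\|_F^2$, and, more sharply, $\mathcal{L}(W)\le\frac{\lambda_{\max}(\mathbb{E}[X^{\otimes 2}(X^{\otimes 2})^T])}{\lambda_{\min}(\frac1N\sum_i X_i^{\otimes 2}(X_i^{\otimes 2})^T)}\,\risk{W}$, so the bound follows by controlling the condition number of the tensorized design matrix with rows $X_i^{\otimes 2}$, via the machinery of \cite{emschwiller2020neural}, which is the origin of the requirement $N\ge d^{18+8K_1/3}$. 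The main obstacle throughout is exactly this uniform restricted positive-definiteness / conditioning of the (truncated) tensorized design: obtaining a constant $\kappa$ that is a \emph{definite fraction of $C_5$}, not merely positive, uniformly over the $\Theta(d^2)$-dimensional space of symmetric matrices and with a failure probability of the precise form appearing in the statement, is what forces the polynomial sample complexity and dictates the probability bound; its conditioning half, needed for the generalization bound of part~(c), is where \cite{emschwiller2020neural} enters.
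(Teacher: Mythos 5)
Your proposal is correct in substance but organizes the probabilistic work quite differently from the paper. Writing $M=W^TW-(W^*)^TW^*$, you fold everything into one uniform lower bound $\frac1N\sum_i\langle M,X_iX_i^T\rangle^2\geqslant\kappa\|M\|_F^2$ over \emph{all} symmetric $M$ (equivalently, a lower bound on the smallest eigenvalue of the empirical second-moment matrix of the tensorized design), and then read off: the norm bound of part (a) from $\mathrm{tr}(M)\leqslant\sqrt d\,\|M\|_F$; the lower bound $\sigma_{\min}(W)\gtrsim\sigma_{\min}(W^*)$ needed in part (b) from Weyl applied to $\|M\|_2\leqslant\|M\|_F\leqslant\sqrt{\risk{W_0}/\kappa}$; and the key inequality $\|M\|_F^2\leqslant\risk{W}/\kappa$ of part (c). The paper instead uses three separate events: boundedness of the sublevel set via concentration of the sample covariance $\frac1N\sum_iX_iX_i^T$ plus H\"older (Claim \ref{claim:bounded-norm-W-belowbarrier}); a re-run of the covering/energy-barrier argument restricted to matrices with $\sigma_{\min}(W)<\frac12\sigma_{\min}(W^*)$ (the event $\mathcal{E}_3$ in \eqref{eq:event-3}); and, only for part (c), inversion of the tensorized design $\Xi$ via Theorem \ref{thm:vershy} and \cite{emschwiller2020neural}. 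Your route is more economical and yields slightly sharper polynomial factors in parts (b) and (c); what it costs is that your single lemma is a scale-invariant statement over the full $\Theta(d^2)$-dimensional sphere of symmetric matrices rather than over rank-deficient PSD matrices, so it is \emph{not} literally "the estimate already performed inside the proof of Theorem \ref{thm:energy-barrier-empirical}" — you must run the net-plus-Hoeffding argument afresh over the unit Frobenius sphere, with the truncated population identity of Theorem \ref{thm:analytic-exp-pop-risk}(b)--(c) supplying the mean $\geqslant C_5\|M\|_F^2$. The resulting exponent in the $\exp(-cN/\mathrm{poly}(d))$ term comes out as $d^{-8K_1-4}$ rather than $d^{-4K_1-4K_2-2}$, which is harmless but worth stating.

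One point to make explicit rather than assert: for the Weyl step in part (b) you need $\kappa>\risk{W_0}/\sigma_{\min}(W^*)^4$, i.e.\ $\kappa$ strictly larger than $\frac12C_5$, with a definite margin so that $\lambda_{\min}(W^TW)\geqslant c\,\sigma_{\min}(W^*)^2$ for a constant $c>0$. This is attainable because the exact identity of Theorem \ref{thm:analytic-exp-pop-risk} gives the \emph{population} constant $C_5$ exactly (so concentrating within $C_5/4$ suffices), but it would \emph{not} follow from the $\sigma_{\min}(\Sigma)\geqslant cd^{-4}$ bound of \cite[Theorem~5.1]{emschwiller2020neural}; reserve the latter (and the $\exp(-c'N^{1/4})$ event from Theorem \ref{thm:vershy}, together with $N\geqslant d^{18+8K_1/3}$) solely for the operator-norm concentration of $\frac1N\Xi^T\Xi$ used in part (c). With that distinction made, all four parts go through.
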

We note that the exponent $1/4$ in the probability and the sample bound $d^{18+\frac{8K_1}{3}}$ are required only for part {\rm (c)}, and can potentially be improved. In particular, the exponent can be improved to one for parts {\rm (a),(c)} and {\rm (d)}. 

We now provide an important remark pertaining {\rm (c)}: provided $N$ grows at least polynomially in $d$, with probability $1-\exp(-C'N^{1/4})$ it holds that for any $W$ with $\risk{W}\leqslant \kappa$, $W^TW$ is close to $(W^*)^TW^*$, that is $\|W^TW-(W^*)^TW^*\|_F\leqslant d^{O(1)}\sqrt{\kappa}$; and consequently $\mathcal{L}(W)\leqslant d^{O(1)}\kappa$. To the best of our knowledge, this is a novel contribution of ours, and is achieved by controlling condition number of a certain  matrix with i.i.d. rows consisting of tensorized data $X_i^{\otimes 2}$; using a very recent  work analyzing the spectrum of expected covariance matrices of tensorized data \cite{emschwiller2020neural}.

The proof of Theorem \ref{thm:gd-conv-empirical} is provided in Section \ref{sec:pf-thm:gd-conv-empirical}. 

By combining Theorems \ref{thm:band-gap} and \ref{thm:full-rank-global-opt}, we obtain an
analogous result for the population risk:
\begin{theorem}\label{thm:gd-conv}
Let $W_0\in\R^{m\times d}$ be a matrix of weights, with the property that
\[
\mathcal{L}(W_0)<\min_{W\in\R^{m\times d}:{\rm rank}(W)<d} \mathcal{L}(W).
\]
Define
\[
L = \sup\left\{\|\nabla^2 \mathcal{L}(W)\|:\mathcal{L}(W)\leqslant \mathcal{L}(W_0)\right\},
\]
where by $\|\nabla^2 \mathcal{L}(W)\|$ we denote the spectral norm of the matrix $\nabla^2 \mathcal{L}(W)$. Then, $L<+\infty$ and the gradient descent algorithm with initialization $W_0\in\R^{m\times d}$ and a step size of $0<\eta<1/2L$ generates a trajectory $\{W_k\}_{k\geqslant 0}$ of weights such that $\lim_{k \rightarrow \infty} \mathcal{L}(W_k)= \min_W \mathcal{L}(W)=0$. 
\end{theorem}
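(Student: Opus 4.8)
The plan is to run the textbook analysis of gradient descent on an $L$-smooth function, using the two structural facts already proved --- the energy barrier (Theorem~\ref{thm:band-gap}) and the global optimality of full-rank stationary points (Theorem~\ref{thm:full-rank-global-opt}) --- to pin down the limiting behaviour. The one genuinely analytic ingredient that has to be supplied is the compactness of the sublevel set $S\triangleq\{W\in\R^{m\times d}:\mathcal{L}(W)\leqslant\mathcal{L}(W_0)\}$, which is also exactly what forces $L<+\infty$. Since $f(W;X)=\|WX\|_2^2=X^T(W^TW)X$, the risk $\mathcal{L}(W)=\E{(X^T(W^TW-(W^*)^TW^*)X)^2}$ is a degree-four polynomial in the entries of $W$ depending on $W$ only through $M\triangleq W^TW$; expanding the expectation (using that $X$ has centered i.i.d.\ coordinates with moments $\mu_2,\mu_4$) gives, with $Q\triangleq M-(W^*)^TW^*$,
\[
\mathcal{L}(W)=(\mu_4-\mu_2^2)\sum_i Q_{ii}^2+\mu_2^2\,({\rm trace}\,Q)^2+2\mu_2^2\sum_{i\neq j}Q_{ij}^2\ \geqslant\ \min\{\mu_4-\mu_2^2,\,2\mu_2^2\}\,\|Q\|_F^2,
\]
and the constant on the right is strictly positive since ${\rm Var}(X_i^2)=\mu_4-\mu_2^2>0$ (which also forces $\mu_2>0$). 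Combined with $\|W^TW\|_F\leqslant\|W\|_F^2$ (valid because $W^TW\succeq 0$), this shows $\mathcal{L}(W)\to\infty$ as $\|W\|_F\to\infty$, so $S$ is bounded; it is closed by continuity of the polynomial $\mathcal{L}$, hence compact. As $\nabla^2\mathcal{L}$ has polynomial (degree-two) entries it is continuous, so $L=\sup_{W\in S}\|\nabla^2\mathcal{L}(W)\|<\infty$.

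Next I would show that the iterates stay in $S$ and that the gradients vanish. Arguing by induction, suppose $W_k\in S$; set $g=\nabla\mathcal{L}(W_k)$, $\gamma(t)=W_k-t\eta g$, $\phi(t)=\mathcal{L}(\gamma(t))$ for $t\in[0,1]$, and let $t^*$ be the largest value with $\phi(s)\leqslant\mathcal{L}(W_k)$ for all $s\in[0,t^*]$. On $[0,t^*]$ we have $\gamma(s)\in S$, hence $\|\nabla^2\mathcal{L}(\gamma(s))\|\leqslant L$ and $\phi''(s)\leqslant L\eta^2\|g\|_F^2$ there, so $\phi'(t^*)\leqslant-\eta\|g\|_F^2+L\eta^2\|g\|_F^2 t^*\leqslant\eta\|g\|_F^2(L\eta-1)<0$ using $\eta<1/2L$; if $t^*<1$ this contradicts the maximality of $t^*$ (which would force $\phi'(t^*)\geqslant 0$), so $t^*=1$, giving $W_{k+1}\in S$ together with $\mathcal{L}(W_{k+1})\leqslant\mathcal{L}(W_k)-\eta(1-\tfrac{L\eta}{2})\|g\|_F^2$ from the same Taylor estimate at $t=1$. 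Summing this decrease over $k$ and using $\mathcal{L}\geqslant 0$ yields $\sum_k\|\nabla\mathcal{L}(W_k)\|_F^2<\infty$, so $\|\nabla\mathcal{L}(W_k)\|_F\to 0$, and $\mathcal{L}(W_k)$ is non-increasing.

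Finally I would identify the limit. Since $S$ is compact, $\{W_k\}$ has a subsequence $W_{k_j}\to W_\infty\in S$, and continuity of $\nabla\mathcal{L}$ gives $\nabla\mathcal{L}(W_\infty)=0$, i.e.\ $W_\infty$ is a stationary point of $\mathcal{L}$. Because $\mathcal{L}(W_\infty)\leqslant\mathcal{L}(W_0)<\min_{{\rm rank}(W)<d}\mathcal{L}(W)$, Theorem~\ref{thm:band-gap}(a) rules out ${\rm rank}(W_\infty)<d$, so $W_\infty$ is a full-rank stationary point; Theorem~\ref{thm:full-rank-global-opt} then yields $\mathcal{L}(W_\infty)=0$. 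As $\mathcal{L}(W_k)$ is non-increasing with $\mathcal{L}(W_{k_j})\to\mathcal{L}(W_\infty)=0$, the entire sequence satisfies $\mathcal{L}(W_k)\to 0$, and $0=\mathcal{L}(W^*)=\min_W\mathcal{L}(W)$ since $\mathcal{L}\geqslant 0$, which is the claim.

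The main obstacle is the first step: the remaining argument is the standard smooth nonconvex gradient-descent analysis, but it is vacuous unless $L<\infty$, which in turn hinges on coercivity of $\mathcal{L}$. The explicit quadratic-form identity for $\mathcal{L}$ in terms of $W^TW$ --- available precisely because the activation is quadratic and the data has a finite fourth moment --- together with the non-degeneracy hypothesis ${\rm Var}(X_i^2)>0$, is what supplies this; the only other delicate point, namely that $S$ need not be convex so the one-line descent lemma does not directly apply, is resolved by the $t^*$-continuity argument in the second step.
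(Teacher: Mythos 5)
Your proof is correct and follows essentially the same route as the paper: coercivity of $\mathcal{L}$ through the trace term gives a bounded (hence compact) sublevel set and so $L<\infty$, the descent lemma forces $\|\nabla\mathcal{L}(W_k)\|_F\to 0$, and any subsequential limit is a full-rank stationary point by the energy barrier of Theorem~\ref{thm:band-gap}, hence a global minimizer by Theorem~\ref{thm:full-rank-global-opt}, after which monotonicity finishes the argument. Two small remarks: your $t^*$-continuation step is a genuine (and welcome) tightening of the paper's one-line Taylor expansion, which silently assumes the segment from $W_k$ to $W_{k+1}$ stays where the Hessian bound $L$ applies; and in the coercivity step the cited inequality $\|W^TW\|_F\leqslant\|W\|_F^2$ points the wrong way for what you want --- the bound you actually need is already in your displayed identity, namely $\mathcal{L}(W)\geqslant\mu_2^2({\rm trace}\,Q)^2=\mu_2^2(\|W\|_F^2-\|W^*\|_F^2)^2$, which is exactly how the paper argues.
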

The proof of Theorem \ref{thm:gd-conv} is provided in Section \ref{sec:pf-thm:gd-conv}.

The above result concerns the performance of gradient descent  assuming the initialization is proper, i.e. it is below the aforementioned energy barrier. One can then naturally ask whether such an initialization is indeed possible in some generic context. In the next section, we address this question of proper initialization when the (planted) weights are generated randomly, to complement Theorems~\ref{thm:gd-conv-empirical} and \ref{thm:gd-conv}. We establish that such a proper initialization is indeed possible by providing a deterministic initialization guarantee, which with high probability beats the aforementioned energy barrier.
\subsection{On Initialization: Randomly Generated Planted Weights}\label{sec:random-initialization}
As noted in the previous section, our results offer an alternative conceptual explanation for the success of training gradient descent in learning aforementioned neural network architectures from the landscape perspective; provided that the algorithm is initialized properly. 

In this section, we provide a way to properly initialize such networks under the assumption that the data has centered i.i.d. sub-Gaussian coordinates; and the (planted) weight matrix $W^*\in\R^{m\times d}$ has i.i.d. centered entries with unit variance and finite fourth moment. Our result is valid provided that the network is sufficiently overparametrized: $m>Cd^2$ for some large constant $C$. Note that this implies $W^*$ is a tall matrix sending $\R^d$ into $\R^m$. 
The rationale behind this approach is as follows: the value of the risk is determined by the spectrum of $\Delta\triangleq W^TW-(W^*)^TW^*$ and the moments of the data distribution. Furthermore, under the randomness assumption, the Wishart matrix $(W^*)^TW^*$ is tightly concentrated around a multiple of the identity if $m$ is sufficiently large. Hence one can control the spectrum of $\Delta$, and therefore the risk $\risk{\cdot}$, by properly choosing $W$.

Equipped with these observations, we are now in a position to state our result, a high probability guarantee for the cost of a particular choice of initialization.

\begin{theorem}\label{thm:initialization-empirical}
Suppose that the planted weight matrix $W^*\in\R^{m\times d}$ has centered i.i.d. entries with unit variance and finite fourth moment; the (i.i.d.) data $X_i\in\R^d$, $1\leqslant i\leqslant  N$, has i.i.d. centered sub-Gaussian coordinates; and the $W_0\in\R^{m\times d}$ satisfies $(W_0)_{ii}=\sqrt{m}$ for $i\in[d]$ and $(W_0)_{ij}=0$ for $i\neq j$ (namely $W_0^T W_0 = mI_d\in\R^{d\times d}$). Then for some absolute constants $C,C'>0$ with probability at least
\[
1-\exp\left(-C'\frac{N}{d^{4K+3}m}\right)-Nd\exp(-Cd^{2K}) -o_d(1),
\]
it is the case that for the constant $C_5$ defined in Theorem \ref{thm:energy-barrier-empirical},
\[
\risk{W_0}<\frac12 C_5 \sigma_{\min}(W^*)^4
\]
provided $m>Cd^2$ for a sufficiently large constant $C>0$.
\end{theorem}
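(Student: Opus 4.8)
The plan is to reduce $\risk{W_0}$ to an average of squared quadratic forms in the data, bound its expectation via the analytic formula for the population risk, and then transfer to the empirical average by truncation and a Bernstein bound, with the spectrum of the Wishart matrix $(W^*)^TW^*$ controlled by a concentration estimate. For \emph{Step 1 (algebraic reduction)}: since $f(W;X)=\|WX\|_2^2=X^T(W^TW)X$ and $W_0^TW_0=mI_d$, setting $\Delta\triangleq (W^*)^TW^*-mI_d$ gives $f(W^*;X_i)-f(W_0;X_i)=X_i^T\Delta X_i$, hence
\[
\risk{W_0}=\frac1N\sum_{i=1}^N\bigl(X_i^T\Delta X_i\bigr)^2 .
\]
All randomness now splits into that of $\Delta$ (through $W^*$) and that of the samples $X_i$, which are independent.

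\emph{Step 2 — population bound and spectrum of $W^*$.} Condition on $W^*$. A direct expansion using that $X$ has i.i.d.\ centered coordinates with $\E{X(j)^2}=\mu_2$, $\E{X(j)^4}=\mu_4$ (the computation underlying Theorem~\ref{thm:band-gap}) gives
\[
\mathcal{L}(W_0)=\E{\bigl(X^T\Delta X\bigr)^2}=(\mu_4-3\mu_2^2)\textstyle\sum_k\Delta_{kk}^2+2\mu_2^2\|\Delta\|_F^2+\mu_2^2\,{\rm trace}(\Delta)^2\le \max\{\mu_4-\mu_2^2,2\mu_2^2\}\,\|\Delta\|_F^2+\mu_2^2\,{\rm trace}(\Delta)^2 .
\]
Because $W^*$ has i.i.d.\ centered unit-variance entries with finite fourth moment and $d/m\to 0$ (as $m>Cd^2$), the semicircle/Marchenko–Pastur law for Wishart matrices in the non-proportional regime \cite{bai1988convergence} gives, with probability $1-o_d(1)$, $\|\Delta\|_2=\|(W^*)^TW^*-mI_d\|_2\le C_0\sqrt{dm}$ for an absolute $C_0$; and since ${\rm trace}(\Delta)=\|W^*\|_F^2-dm$ is a centered sum of $dm$ i.i.d.\ finite-variance terms, Chebyshev gives ${\rm trace}(\Delta)^2\le d\cdot dm$ with probability $1-o_d(1)$. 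On this good event $\|\Delta\|_F^2\le d\|\Delta\|_2^2\le C_0^2d^2m$, so $\mathcal{L}(W_0)\le C''d^2m$ for an absolute $C''$, while $\sigma_{\min}(W^*)^2=\lambda_{\min}\bigl((W^*)^TW^*\bigr)\ge m-C_0\sqrt{dm}\ge m/2$ for $d$ large, whence $\sigma_{\min}(W^*)^4\ge m^2/4$. Since $\mu_t(K)=\E{X_1(1)^t\mid |X_1(1)|\le d^{K}}\to\mu_t$ as $d\to\infty$, the constant $C_5=\min\{\mu_4(K)-\mu_2(K)^2,2\mu_2(K)^2\}$ stays bounded below by a positive absolute constant for $d$ large. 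Therefore, choosing $C$ large enough, on the good event
\[
\mathcal{L}(W_0)\le C''d^2m\le \tfrac14\,C_5\,\sigma_{\min}(W^*)^4 .
\]

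\emph{Step 3 — transfer to the empirical risk.} Fix a realization of $W^*$ in the good event above and set $\tau=\tfrac12 C_5\sigma_{\min}(W^*)^4$, so $\mathcal{L}(W_0)\le \tau/2$. Let $E$ be the event that $|X_i(j)|\le d^{K}$ for all $i\in[N]$, $j\in[d]$; then $\pr(E^c)\le Nd\exp(-Cd^{2K})$. Put $Z_i=\bigl(X_i^T\Delta X_i\bigr)^2\mathbf{1}\{|X_i(j)|\le d^K\ \forall j\}$, so the $Z_i$ are i.i.d.\ functions of the $X_i$ with $\E{Z_i}\le \mathcal{L}(W_0)$, with $0\le Z_i\le B:=C_0^2d^{4K+3}m$ on the good event (using $|X_i^T\Delta X_i|\le \|X_i\|_2^2\|\Delta\|_2\le d^{2K+1}C_0\sqrt{dm}$ on $E$), and $\risk{W_0}=\frac1N\sum_iZ_i$ on $E$. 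Hence
\[
\pr\bigl(\risk{W_0}\ge \tau\bigr)\le \pr(E^c)+\pr\Bigl(\tfrac1N\textstyle\sum_iZ_i\ge \tau\Bigr)\le Nd\exp(-Cd^{2K})+\exp\!\left(-C'\frac{N}{d^{4K+3}m}\right),
\]
the last term by Bernstein's inequality for the i.i.d.\ bounded variables $Z_i$ with deviation $\tau-\mathcal{L}(W_0)\ge \tau/2=\Theta(m^2)$ and variance at most $B\,\mathcal{L}(W_0)$. A final union bound with the $o_d(1)$ failure probability of the good $W^*$-event yields the claim.

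\emph{Main obstacle.} The delicate part is the quantitative control in Step~2 making the mild requirement $m>Cd^2$ (rather than, say, $m\gg d^3$) suffice: one needs the \emph{sharp} operator-norm estimate $\|(W^*)^TW^*-mI_d\|_2=O(\sqrt{dm})$ for Wishart matrices with only a finite fourth moment in the regime $d/m\to0$, so that $\|\Delta\|_F^2\le d\|\Delta\|_2^2=O(d^2m)$ with an absolute constant, together with the sharper $O(dm)$ bound on ${\rm trace}(\Delta)^2$; a Chebyshev bound on $\|\Delta\|_F^2$ alone only gives $\|\Delta\|_F^2\le \omega_d\,d^2m$ with $\omega_d\to\infty$, which fails at the threshold $m\asymp d^2$. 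The Bernstein step and the bookkeeping of the exact exponents are then routine.
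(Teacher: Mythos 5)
Your proposal is correct and follows essentially the same route as the paper: reduce $\risk{W_0}$ to an average of $(X_i^T\Delta X_i)^2$ with $\Delta=(W^*)^TW^*-mI_d$, use Bai--Yin-type random matrix estimates to get $\mathcal{L}(W_0)=O(d^2m)$ while $\sigma_{\min}(W^*)^4=\Omega(m^2)$, and transfer to the empirical risk by truncating the data and applying a concentration inequality for bounded variables. The only (immaterial) differences are that you bound $\|\Delta\|_F^2\leqslant d\|\Delta\|_2^2$ via the operator-norm edge estimate where the paper bounds $\sum_i\lambda_i^2$ directly from the semicircle law for the empirical spectral distribution, you use Chebyshev instead of the CLT for ${\rm trace}(\Delta)$, and you use Bernstein instead of Hoeffding (yielding an at least as good failure probability).
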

The proof of Theorem \ref{thm:initialization-empirical} is provided in Section \ref{sec:pf-thm:initialization-empirical}.

The corresponding result for the population risk is provided below.


\begin{theorem}\label{thm:initialization}
Suppose that the data $X\in\R^d$ consists of i.i.d. centered coordinates with ${\rm Var}(X_i^2)>0$ and $\E{X_i^4}<\infty$. Recall that
$$
\mathcal{L}(W) =  \mathbb{E}\left[\left(f(W;X)-f(W^*;X)\right)^2\right],
$$
where the expectation is taken with respect to the randomness in a fresh sample $X$.
\begin{itemize}
\item[(a)] Suppose that the planted weight matrix $W^*\in\R^{m\times d}$ has i.i.d. standard normal entries. Let the initial weight matrix $W_0\in\R^{m\times d}$ be defined by $(W_0)_{i,i}=\sqrt{m+4d}$ for $1\leqslant i\leqslant d$, and $(W_0)_{i,j}=0$ otherwise (hence, $W_0^T W_0 = \gamma I_d$ with $\gamma=m+4d$). Then, provided $m>Cd^2$ for a  sufficiently large absolute constant $C>0$, 
$$
\mathcal{L}(W_0)< \min_{W\in\R^{m\times d}:{\rm rank}(W)<d} \mathcal{L}(W),
$$
with probability at least $1-\exp(-\Omega(d))$, where the probability is with respect to the draw of $W^*$.
\item[(b)] Suppose the planted weight matrix $W^*\in\R^{m\times d}$ has centered i.i.d. entries with  unit variance and finite fourth moment.  Let the initial weight matrix $W_0\in\R^{m\times d}$ be defined by $(W_0)_{i,i}=\sqrt{m}$ for $1\leqslant i\leqslant d$, and $(W_0)_{i,j}=0$ otherwise (hence, $W_0^T W_0 = mI_d$). Then, provided $m>Cd^2$ for a  sufficiently large absolute constant $C>0$, 
$$
\mathcal{L}(W_0)< \min_{W\in\R^{m\times d}:{\rm rank}(W)<d} \mathcal{L}(W),
$$
with high probability, as $d\to\infty$, where the probability is with respect to the draw of $W^*$.
\end{itemize}
\end{theorem}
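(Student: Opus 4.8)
The plan is to reduce the claim to controlling the trace and the spectrum of the Wishart matrix $(W^*)^TW^*$, via the analytic expression for the population risk that underlies Theorem~\ref{thm:band-gap}. Since $f(W;X)=X^T(W^TW)X$, writing $\Delta\triangleq W^TW-(W^*)^TW^*\in\R^{d\times d}$ and expanding $\E{(X^T\Delta X)^2}$ using that $X$ has centered i.i.d. coordinates with $\E{X_i^2}=\mu_2$, $\E{X_i^4}=\mu_4$ (only the paired-index terms survive) gives
\[
\mathcal{L}(W)=(\mu_4-3\mu_2^2)\sum_{i=1}^d\Delta_{ii}^2+\mu_2^2\,({\rm trace}\,\Delta)^2+2\mu_2^2\,\|\Delta\|_F^2,
\]
hence, bounding $\sum_i\Delta_{ii}^2\le\|\Delta\|_F^2$ when $\mu_4\ge3\mu_2^2$ and dropping the nonpositive term otherwise,
\[
\mathcal{L}(W)\le\mu_2^2\,({\rm trace}\,\Delta)^2+\max\{\mu_4-\mu_2^2,2\mu_2^2\}\,\|\Delta\|_F^2.
\]
On the other side, Theorem~\ref{thm:band-gap}(a) gives $\min_{{\rm rank}(W)<d}\mathcal{L}(W)\ge\beta\,\sigma_{\min}(W^*)^4$ with $\beta\triangleq\min\{\mu_4-\mu_2^2,2\mu_2^2\}$, a strictly positive constant since ${\rm Var}(X_i^2)=\mu_4-\mu_2^2>0$ by hypothesis. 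So it suffices to prove $\mathcal{L}(W_0)<\beta\,\sigma_{\min}(W^*)^4$ with the asserted probability.

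I would then specialize to $W=W_0$, where $W_0^TW_0=\gamma I_d$ with $\gamma=m+4d$ in part (a) and $\gamma=m$ in part (b), so $\Delta_0=\gamma I_d-(W^*)^TW^*$, with ${\rm trace}(\Delta_0)=d\gamma-\|W^*\|_F^2$ and $\|\Delta_0\|_F^2\le d\,\|\Delta_0\|_2^2=d\,\max_i|\gamma-\lambda_i((W^*)^TW^*)|^2$. Two inputs are needed: (i) concentration of $\|W^*\|_F^2$ around its mean $md$; and (ii) confinement of $\sigma((W^*)^TW^*)$ to a narrow window about $m$. For (i), in part (a) $\|W^*\|_F^2\sim\chi^2_{md}$ and Laurent--Massart gives $|\|W^*\|_F^2-md|=O(d\sqrt m)$ with probability $1-e^{-\Omega(d)}$, while in part (b), $\|W^*\|_F^2$ is a sum of $md$ i.i.d. terms of mean $1$ and finite variance, so Chebyshev gives $|\|W^*\|_F^2-md|=o(m)$ with probability $\to1$ once $m\gg d$. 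For (ii), in part (a) the standard Gaussian singular-value bounds (Davidson--Szarek/Gordon) give $\sqrt m-2\sqrt d\le\sigma_{\min}(W^*)\le\sigma_{\max}(W^*)\le\sqrt m+2\sqrt d$ with probability $1-e^{-\Omega(d)}$, and with $\gamma=m+4d$ (the centre of $[(\sqrt m-2\sqrt d)^2,(\sqrt m+2\sqrt d)^2]$) this yields $\|\Delta_0\|_2\le4\sqrt{md}$; in part (b), where the entries have only a finite fourth moment, I would invoke a semicircle law for Wishart matrices in the non-proportional regime $d/m\to0$, which yields $\|(W^*)^TW^*-mI_d\|_2\le(2+o(1))\sqrt{md}$ with probability $\to1$, so with $\gamma=m$ one gets $\|\Delta_0\|_2\le(2+o(1))\sqrt{md}$. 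In either part $\sigma_{\min}(W^*)^2\ge m/2$ once $m\ge Cd$.

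Assembling these, on the stated events one has $\|\Delta_0\|_F^2=O(md^2)$ and ${\rm trace}(\Delta_0)^2=O(m^2/C)$ when $m>Cd^2$ --- in part (a) because ${\rm trace}(\Delta_0)=4d^2+(dm-\|W^*\|_F^2)=4d^2+O(d\sqrt m)$, and in part (b) directly from (i). Hence
\[
\mathcal{L}(W_0)\le\mu_2^2\,{\rm trace}(\Delta_0)^2+\max\{\mu_4-\mu_2^2,2\mu_2^2\}\,\|\Delta_0\|_F^2\le C'(\mu_2,\mu_4)\,\bigl(m^2/C+md^2\bigr),
\]
whereas $\beta\,\sigma_{\min}(W^*)^4\ge\beta\,m^2/4$. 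Since $md^2\le m^2/C$ for $m>Cd^2$, taking the absolute constant $C$ large enough (depending only on $\mu_2,\mu_4$) makes the right-hand side strictly smaller than $\beta\,\sigma_{\min}(W^*)^4$, which is the claim, with the probabilities as stated. The main obstacle is step (ii) of part (b): with only a finite fourth-moment assumption and in the regime $d/m\to0$, pinning the extreme eigenvalues of $(W^*)^TW^*$ near $m$ requires the Wishart semicircle law (Marchenko--Pastur degenerates here) rather than a simple $\varepsilon$-net argument, and this is precisely why part (b) delivers only probability $1-o(1)$ in place of $1-e^{-\Omega(d)}$.
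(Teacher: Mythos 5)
Your proposal is correct and follows the same overall strategy as the paper: upper-bound $\mathcal{L}(W_0)$ via the analytic risk formula (Theorem~\ref{thm:analytic-exp-pop-risk}(c)) in terms of ${\rm trace}(\Delta_0)$ and $\|\Delta_0\|_F$, lower-bound the rank-deficient minimum by $\beta\,\sigma_{\min}(W^*)^4\gtrsim \beta m^2/16$ via Theorem~\ref{thm:band-gap}(a), control the trace through concentration of $\|W^*\|_F^2$, and control the spectrum through concentration of the Wishart matrix, closing the gap with $m>Cd^2$. The differences are in the choice of tools and are all legitimate: for part (a) you use Laurent--Massart for the $\chi^2_{md}$ trace term where the paper uses Bernstein for sub-exponential variables (equivalent bounds); for part (b) you use Chebyshev for the trace where the paper uses a CLT argument (yours is cleaner), and you bound $\|\Delta_0\|_F^2\le d\|\Delta_0\|_2^2$ via extreme-eigenvalue control where the paper instead uses the second moment of the empirical spectral distribution from the Bai--Yin semicircle law. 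One caveat on your part (b): the ESD semicircle law by itself does not control $\|(W^*)^TW^*-mI_d\|_2$ (weak convergence of the spectral measure says nothing about the edge); what you actually need is the Bai--Yin theorem on extreme singular values under a finite fourth moment (recorded in the paper as Theorem~\ref{thm:baiyinvershy}), which does deliver $\sigma_{\max/\min}(W^*)=\sqrt m\pm\sqrt d+o(\sqrt d)$ and hence your $(2+o(1))\sqrt{md}$ bound --- so this is a mislabeling rather than a gap, and it is also the source of the weaker $1-o(1)$ probability in part (b), exactly as you observe.
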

The proof of this theorem is provided in Section \ref{sec:pf-initialization}. 

Note that, the part ${\rm (a)}$ of Theorem \ref{thm:initialization} gives an explicit rate for probability, in the case when the i.i.d. entries of the planted weight matrix $W^*$ are standard normal, and is based on a non-asymptotic concentration  result for the spectrum of such matrices. The extension in part ${\rm (b)}$ is based on a result of Bai and Yin \cite{bai1988convergence}. 



With this, we now turn our attention to the number of training samples required to learn such models.
\subsection{Critical Number of Training Samples}\label{sec:data-no}
The focus of previous sections is on landscape results pertaining the empirical risk minimization problem. One can then naturally ask the following question: what is the smallest number of samples required to claim that small empirical risk also controls the generalization error?

In this section, our focus is on the number of training samples required for controlling the generalization error. We identify a necessary and sufficient condition on the training data under which any minimizer of the empirical risk (which, in the case we consider of planted weights, necessarily interpolates the data) has zero generalization error. We obtain our results for potentially overparametrized interpolators, that is of potentially larger width than the width of the original network generating the weights. Furthermore we identify the smallest number $N^*$ of training samples, such that (randomly generated) training data $X_1,\dots,X_N$ satisfies the aforementioned condition, so long as $N\geqslant N^*$.


\subsubsection*{A Necessary and Sufficient Geometric Condition on the Training Data}
We start by providing a necessary and sufficient (geometric) condition on the training data under which any minimizer of the empirical risk (which, in the case of planted weights, necessarily interpolates the data) has zero generalization error.
\begin{theorem}\label{thm:geo-condition}
Let $X_1,\dots,X_n\in \R^d$ be a set of data.
\begin{itemize}
    \item[(a)] Suppose
    \[
    {\rm span}\{X_iX_i^T : 1\leqslant i\leqslant N\}= \mathcal{S},
    \]
    where $\mathcal{S}$ is the set of all $d\times d$ symmetric real-valued matrices. Let $\widehat{m}\in\mathbb{N}$ be arbitrary. Then for any $W\in\mathbb{R}^{\widehat{m}\times d}$ interpolating the data, that is $f(W^*;X_i)=f(W;X_i)$ for every $i\in[N]$, it holds that $W^T W=(W^*)^T W^*$. In particular, if $\widehat{m}\geqslant m$, then for some matrix $Q\in\mathbb{R}^{\widehat{m} \times m}$ with orthonormal columns, $W=QW^*$, and if $m\geqslant \widehat{m}$, then for some matrix $Q'\in\R^{m\times \widehat{m}}$ with orthonormal columns, $W^*=Q'W$. 
    \item[(b)] Suppose, 
    \[
    {\rm span}\{X_iX_i^T : 1\leqslant i\leqslant N\},
    \]
    is a strict subset of $\mathcal{S}$. Then, for any $W^*\in\mathbb{R}^{m\times d}$ with ${\rm rank}(W^*)=d$ and any positive integer $\widehat{m}\geqslant d$, there exists a $W\in\mathbb{R}^{\widehat{m}\times d}$ such that $W^T W \neq (W^*)^T W^*$, while $W$ interpolates the data,  that is, $f(W^*;X_i)=f(W;X_i)$ for all $i\in[N]$. In particular, for this $W\in\mathbb{R}^{m\times d}$, $\mathcal{L}(W)>0$,
    where $\mathcal{L}$ is defined with respect to any jointly continuous distribution on $\R^d$. 
\end{itemize}
\end{theorem}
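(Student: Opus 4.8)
The plan is to translate everything into the trace inner product $\langle A,B\rangle={\rm trace}(AB)$ on the space $\mathcal S$ of symmetric $d\times d$ matrices. For any weight matrix $W$ and any $x\in\R^d$ we have $f(W;x)=\|Wx\|_2^2=x^\top W^\top W x={\rm trace}(W^\top W\,xx^\top)=\langle W^\top W, xx^\top\rangle$. Hence, setting $M\triangleq W^\top W-(W^*)^\top W^*\in\mathcal S$, the interpolation condition $f(W^*;X_i)=f(W;X_i)$ for all $i\in[N]$ is equivalent to $\langle M, X_iX_i^\top\rangle=0$ for all $i$, i.e.\ to $M$ being orthogonal (inside $\mathcal S$) to ${\rm span}\{X_iX_i^\top:1\le i\le N\}$. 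This single observation drives both parts.

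For part (a): if that span is all of $\mathcal S$, then $M\in\mathcal S$ is orthogonal to $\mathcal S$, in particular $\langle M,M\rangle=\|M\|_F^2=0$, so $M=0$ and $W^\top W=(W^*)^\top W^*$. To obtain the partial isometry, note $\|Wx\|_2=\|W^*x\|_2$ for every $x$, so the assignment $W^*x\mapsto Wx$ is a well-defined linear isometry of ${\rm range}(W^*)\subseteq\R^m$ onto ${\rm range}(W)\subseteq\R^{\widehat m}$ (well-definedness: $W^*x=W^*x'$ forces $\|W(x-x')\|_2=0$). If $\widehat m\ge m$, extend an orthonormal basis of ${\rm range}(W^*)$ to one of $\R^m$, and extend its isometric image to an orthonormal system in $\R^{\widehat m}$ (possible since $\widehat m\ge m$); the resulting linear map is a matrix $Q\in\R^{\widehat m\times m}$ with $Q^\top Q=I_m$ and $QW^*=W$. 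The case $m\ge\widehat m$ is the mirror image, giving $Q'\in\R^{m\times\widehat m}$ with orthonormal columns and $Q'W=W^*$.

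For part (b): since the span is a proper subspace of $\mathcal S$, its orthogonal complement in $\mathcal S$ contains some $M_0\ne 0$. As ${\rm rank}(W^*)=d$, the matrix $(W^*)^\top W^*$ is positive definite, so $G\triangleq(W^*)^\top W^*+\varepsilon M_0$ is still positive definite for all small enough $\varepsilon>0$; fix such an $\varepsilon$. Write $G=V^\top V$ with $V\in\R^{d\times d}$ (Cholesky or symmetric square root) and pad with $\widehat m-d$ zero rows to get $W\in\R^{\widehat m\times d}$; then $W^\top W=G=(W^*)^\top W^*+\varepsilon M_0\ne (W^*)^\top W^*$, and for each $i$, $\langle W^\top W,X_iX_i^\top\rangle=\langle (W^*)^\top W^*,X_iX_i^\top\rangle$ because $\langle M_0,X_iX_i^\top\rangle=0$, so $W$ interpolates the data. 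Taking $\widehat m=m$ and using the reformulation, $\mathcal L(W)=\E{\langle\varepsilon M_0,XX^\top\rangle^2}=\varepsilon^2\,\E{(X^\top M_0 X)^2}$. The map $x\mapsto x^\top M_0 x$ is a nonzero polynomial in the coordinates of $x$ (a symmetric matrix killing this form must vanish), so its zero set is Lebesgue-null; since the law of $X$ is jointly continuous, $X^\top M_0 X\ne 0$ almost surely, giving $\mathcal L(W)>0$.

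Every step above is short, and I do not expect a genuine obstacle: the real content is the dictionary $f(W;x)=\langle W^\top W,xx^\top\rangle$ turning interpolation into orthogonality in $\mathcal S$. The only places needing a little care are the isometric-extension construction in (a)—this is exactly where the hypotheses $\widehat m\ge m$ (resp.\ $m\ge\widehat m$) are used—and, in (b), the final step from $M_0\ne 0$ to $\mathcal L(W)>0$, which relies on the fact that a nontrivial real polynomial vanishes only on a Lebesgue-null set together with the correct reading of ``jointly continuous''.
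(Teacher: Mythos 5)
Your proposal is correct and follows essentially the same route as the paper: the dictionary $f(W;x)=\langle W^\top W,xx^\top\rangle$ turning interpolation into orthogonality to ${\rm span}\{X_iX_i^\top\}$ in $\mathcal S$, the perturbation $(W^*)^\top W^*+\varepsilon M_0$ with positive definiteness for small $\varepsilon$ in part (b), and the polynomial-zero-set argument for $\mathcal L(W)>0$. Your two local shortcuts are both fine and slightly cleaner than the paper's: in (a) you conclude $M=0$ from $M\perp\mathcal S$ directly rather than via the paper's entrywise computation with $e_ke_\ell^\top+e_\ell e_k^\top$ (and you prove the partial-isometry fact the paper cites from Horn--Johnson), and in (b) you realize $G=V^\top V$ and pad with zero rows instead of the paper's weighted-row construction.
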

The proof of Theorem \ref{thm:geo-condition} is deferred to Section \ref{sec:proof-of-geo-cond}. 

Several remarks are now in order. The condition stated in Theorem \ref{thm:geo-condition} is not retrospective in manner: it can be checked ahead of the optimization process. Next, there are no randomness assumptions in the setting of Theorem \ref{thm:geo-condition}, and it provides a purely geometric necessary and sufficient condition: as long as ${\rm span}(X_iX_i^T : i\in [N])$ is the space of all symmetric matrices (in $\R^{d\times d}$) we have that any (global) minimizer of the empirical risk has zero generalization error. Conversely, in the absence of this geometric condition, there are optimizers $W\in\R^{m\times d}$ of the empirical risk $\risk{\cdot}$ such that while $\risk{W}=0$, the generalization error of $W$ is bounded away from zero, that is, $W^T W\neq (W^*)^T W^*$. It is also worth recalling that in the case when $W$ does not interpolate the data but has a rather small training error, the result of Theorem~\ref{thm:gd-conv-empirical}{\rm (c)} allows one to control $\|W^TW-(W^*)^TW^*\|_F$, and consequently the generalization error $\mathcal{L}(W)$. Soon in Theorem \ref{thm:main}, we give a  more refined version of this result, with a concrete lower bound on $\mathcal{L}(W)$, in the more realistic setting, where the training data is generated randomly. 

We further highlight the presence of the parameter $\widehat{m}\in\mathbb{N}$. In particular, part $({\rm a})$ of Theorem \ref{thm:geo-condition} states that provided the span condition is satisfied, any neural network with $\widehat{m}$ internal nodes interpolating the data has necessarily zero generalization error, regardless of whether $\widehat{m}$ is equal to $m$,  in particular, even when $\widehat{m}\geqslant m$. This, in fact, is an instance of an interesting phenomenon empirically observed about neural networks, which somewhat challenges one of the main paradigms in statistical learning theory:  overparametrizartion does not hurt generalization performance of neural networks once the data is interpolated. Namely beyond the interpolation threshold, one retains good generalization property. 

We note that Theorem \ref{thm:geo-condition} still remains valid under a slightly more general setup, where each node $j\in[m]$ has an associated positive but otherwise arbitrary output weight $a_j^*\in\R_+$.
\begin{coro}\label{coro-2}
Let $W\in\R^{m\times d}$, $a\in\R_+^m$, and $\widehat{f}(a,W,X)$ be the function computed by the neural network with input $X\in\R^d$, quadratic activation function, planted weights $W\in\R^{m\times d}$, and output weights $a\in\R_+^m$, that is, 
$\widehat{f}(a,W,X) = \sum_{j=1}^m a_j \ip{W_j}{X}^2$. Let $X_1,\dots,X_n\in\R^d$ be a set of data.
\begin{itemize}
	\item[(a)] Suppose, 
	\[
	{\rm span}\{X_iX_i^T:1\leqslant i\leqslant N\} = \mathcal{S}.
	\]
	Then for any $\widehat{m}\in\mathbb{N}$ and $(a,W)\in\R_+^{\widehat{m}} \times \R^{\widehat{m}\times d}$ interpolating the data, that is $\widehat{f}(a^*,W^*,X_i)= \widehat{f}(a,W,X_i)$ for every $i\in[N]$, it holds that $\widehat{f}(a,W,X)=\widehat{f}(a^*,W^*,X)$ for every $X\in\R^d$ (here, $a_j^*>0$ for all $j$). In particular, $(a,W)$ achieves zero generalization error. 
	\item[(b)] Suppose
	\[
	{\rm span}\{X_iX_i^T:1\leqslant i\leqslant N\} 
	\] 
	is a strict subset of $\mathcal{S}$. Then, for any $(a^*,W^*)\in\R_+^m\times \R^{m\times d}$, and every $\widehat{m}\geqslant d$, there is a pair $(a,W)\in\R_+^{\widehat{m}}\times \R^{\widehat{m} \times d}$, such that while $(a,W)$ interpolates the data, that is, $\widehat{f}(a,W,X_i)=\widehat{f}(a^*,W^*,X_i)$ for every $i\in[N]$, $(a,W)$ has strictly positive generalization error, with respect to any jointly continuous distribution on $\R^d$.
\end{itemize}
\end{coro}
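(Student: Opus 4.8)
The plan is to reduce everything to Theorem~\ref{thm:geo-condition} by absorbing the strictly positive output weights into a rescaling of the rows of the weight matrix. Since $a_j>0$ for all $j$, one has the identity
\[
\widehat{f}(a,W,X)=\sum_{j=1}^{\widehat m} a_j\ip{W_j}{X}^2=\sum_{j=1}^{\widehat m}\ip{\sqrt{a_j}\,W_j}{X}^2=f\!\left({\rm diag}(\sqrt{a})\,W;\,X\right)=X^T\!\left(W^T{\rm diag}(a)\,W\right)\!X,
\]
where ${\rm diag}(\sqrt{a})$ is the diagonal matrix with entries $\sqrt{a_j}$. Hence, setting $\widetilde W^*:={\rm diag}(\sqrt{a^*})W^*\in\R^{m\times d}$ and $\widetilde W:={\rm diag}(\sqrt{a})W\in\R^{\widehat m\times d}$, the pair $(a,W)$ interpolates the data for the network $\widehat f(\cdot,\cdot,\cdot)$ exactly when $\widetilde W$ interpolates the data for the network $f(\,\cdot\,;\,\cdot\,)$ with planted matrix $\widetilde W^*$; note ${\rm rank}(\widetilde W^*)={\rm rank}(W^*)$ since ${\rm diag}(\sqrt{a^*})$ is invertible.

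For part~(a) I would invoke Theorem~\ref{thm:geo-condition}(a): it gives $W^T{\rm diag}(a)W=\widetilde W^T\widetilde W=(\widetilde W^*)^T\widetilde W^*=(W^*)^T{\rm diag}(a^*)W^*$, and plugging this into the display above shows $\widehat{f}(a,W,X)=\widehat{f}(a^*,W^*,X)$ for every $X\in\R^d$, i.e. zero generalization error. (Equivalently one can argue directly that interpolation forces the symmetric matrix $W^T{\rm diag}(a)W-(W^*)^T{\rm diag}(a^*)W^*$ to be Frobenius-orthogonal to every $X_iX_i^T$, hence to all of $\mathcal{S}$, hence to vanish.) For part~(b), taking ${\rm rank}(W^*)=d$ so that $(W^*)^T{\rm diag}(a^*)W^*\succ 0$ and $\widetilde W^*$ is full rank, I would apply Theorem~\ref{thm:geo-condition}(b) to $\widetilde W^*$ to produce, for the prescribed $\widehat m\geqslant d$, a matrix $\widetilde W\in\R^{\widehat m\times d}$ that interpolates the data generated by the teacher with planted weights $\widetilde W^*$ while $\widetilde W^T\widetilde W\neq(\widetilde W^*)^T\widetilde W^*$ and $\mathcal{L}(\widetilde W)>0$ for every jointly continuous law. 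Then I would take $a:=\mathbf 1\in\R_+^{\widehat m}$ and $W:=\widetilde W$, so that $\widehat{f}(a,W,X)=f(\widetilde W;X)$: the pair $(a,W)$ interpolates the data, yet $W^T{\rm diag}(a)W=\widetilde W^T\widetilde W\neq(W^*)^T{\rm diag}(a^*)W^*$, and since two distinct symmetric matrices induce quadratic forms whose difference is a nonzero degree-two polynomial in $X$ (vanishing only on a Lebesgue-null set), $\E{(\widehat{f}(a,W,X)-\widehat{f}(a^*,W^*,X))^2}>0$.

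The only genuinely non-mechanical ingredient is the rescaling identity in the first display; the rest is bookkeeping. The one step I would double-check carefully is in part~(b): one needs the interpolating matrix delivered by Theorem~\ref{thm:geo-condition}(b) to be expressible back in the form $W^T{\rm diag}(a)W$ with strictly positive $a$ — equivalently, a small symmetric perturbation of $(W^*)^T{\rm diag}(a^*)W^*$ lying in the orthogonal complement of ${\rm span}\{X_iX_i^T:i\in[N]\}$ must remain positive semidefinite — which is precisely where positive definiteness of $(W^*)^T{\rm diag}(a^*)W^*$, i.e. the full-rank assumption on $W^*$ together with $a^*\in\R_+^m$, gets used.
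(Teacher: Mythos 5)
Your proposal is correct and is essentially identical to the paper's proof: the paper also absorbs the positive output weights into the rows via $\widehat{W}_j=\sqrt{a_j}\,W_j$, observes $\widehat{f}(a,W,X)=f(\widehat{W};X)$, and then invokes Theorem~\ref{thm:geo-condition} for both parts (with $a=\mathbf{1}$ implicitly in part~(b)). Your added remarks — that ${\rm rank}(\widetilde W^*)={\rm rank}(W^*)$ because ${\rm diag}(\sqrt{a^*})$ is invertible, and that the PSD-perturbation issue is already handled inside the proof of Theorem~\ref{thm:geo-condition}(b) — are accurate and merely make explicit what the paper leaves implicit.
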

The proof of this corollary is deferred to Section \ref{sec:proof-of-coro-2}.
\subsubsection*{Randomized Data Enjoys the Geometric Condition}
We now identify the smallest number $N^*$ of training samples, such that (randomly generated) training data $X_1,\dots,X_N$ satisfies the aforementioned geometric condition almost surely; as soon as $N\geqslant N^*$.
\begin{theorem}\label{thm:random-data-geo-cond}
Let $N^* = d(d+1)/2$, and $X_1,\dots,X_N \in \R^d$ be i.i.d. random vectors with jointly continuous distribution. Then,
\begin{itemize}
    \item[(a)] If $N\geqslant N^*$, then $\mathbb{P}({\rm span}(X_iX_i^T : i\in [N])=\mathcal{S})=1$.
    \item[(b)] If $N<N^*$, then for arbitrary $Z_1,\dots,Z_N\in\R^d$, ${\rm span}(Z_iZ_i^T:i\in[N])\subsetneq \mathcal{S}$. 
    \end{itemize}
\end{theorem}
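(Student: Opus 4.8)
The plan is as follows. Part (b) is a pure dimension count: the real vector space $\mathcal{S}$ of symmetric $d\times d$ matrices has dimension $d+\binom{d}{2}=d(d+1)/2=N^*$, whereas $\mathrm{span}(Z_iZ_i^T:i\in[N])$ is generated by $N$ elements and hence has dimension at most $N<N^*$; since moreover each $Z_iZ_i^T$ lies in $\mathcal{S}$, we conclude $\mathrm{span}(Z_iZ_i^T:i\in[N])\subsetneq\mathcal{S}$.

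For part (a), since enlarging the index set only enlarges the span and each $X_iX_i^T\in\mathcal{S}$, it suffices to treat $N=N^*$ and prove that $X_1X_1^T,\dots,X_{N^*}X_{N^*}^T$ are almost surely linearly independent: being $N^*=\dim\mathcal{S}$ such matrices, they would then form a basis of $\mathcal{S}$, and on that event $\mathrm{span}(X_iX_i^T:i\in[N])=\mathcal{S}$. Fixing the half-vectorization isomorphism $\mathrm{vech}\colon\mathcal{S}\to\R^{N^*}$ that lists the on-and-above-diagonal entries of a symmetric matrix, linear independence of the $X_iX_i^T$ is equivalent to $P(X_1,\dots,X_{N^*})\ne 0$, where $P\triangleq\det M$ and $M$ is the $N^*\times N^*$ matrix whose $i$-th row is $\mathrm{vech}(X_iX_i^T)$. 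Each entry of $M$ is a quadratic monomial in the coordinates of the corresponding $X_i$, so $P$ is a polynomial in the $dN^*$ real variables $\{X_i(\ell):i\in[N^*],\ell\in[d]\}$.

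The crux is to check that $P$ is not the zero polynomial, i.e. to exhibit one deterministic configuration of vectors whose associated rank-one symmetric matrices are linearly independent. I would take the $N^*$ vectors $\{e_i:1\le i\le d\}\cup\{e_i+e_j:1\le i<j\le d\}$, where $e_1,\dots,e_d$ is the standard basis of $\R^d$ and $E_{jk}$ denotes the matrix with a $1$ in entry $(j,k)$ and zeros elsewhere: then $e_ie_i^T=E_{ii}$ and $(e_i+e_j)(e_i+e_j)^T-E_{ii}-E_{jj}=E_{ij}+E_{ji}$, so these $N^*$ matrices generate the standard basis $\{E_{ii}:i\le d\}\cup\{E_{ij}+E_{ji}:i<j\}$ of $\mathcal{S}$ and hence span $\mathcal{S}$; being $N^*$ of them, they are linearly independent, so $P$ is nonzero at this point. (Alternatively one may argue by induction on $k\le N^*$: conditioning on $X_1,\dots,X_{k-1}$ whose rank-one matrices are independent, their span $V$ is a proper subspace of $\mathcal{S}$ — using that $\{xx^T:x\in\R^d\}$ spans $\mathcal{S}$ by the spectral theorem — so $\{x:xx^T\in V\}$ is the zero set of a nontrivial quadratic form.)

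It then remains to invoke that a nonzero polynomial on $\R^{dN^*}$ vanishes only on a set of Lebesgue measure zero, and that since the $X_i$ are i.i.d. with a jointly continuous (absolutely continuous) law, the joint law of $(X_1,\dots,X_{N^*})$ is absolutely continuous with respect to Lebesgue measure on $\R^{dN^*}$; hence $\mathbb{P}(P(X_1,\dots,X_{N^*})=0)=0$, which yields the almost sure linear independence and therefore $\mathrm{span}(X_iX_i^T:i\in[N])=\mathcal{S}$ almost surely. The only step requiring genuine care is verifying $P\not\equiv 0$ through the explicit witness above; the rest is routine.
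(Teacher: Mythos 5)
Your proposal is correct and follows the same overall strategy as the paper: part (b) is the identical dimension count, and part (a) reduces to showing that the determinant of the $N^*\times N^*$ matrix of (half-)vectorized $X_iX_i^T$ is a not-identically-zero polynomial, then invoking the fact that a nonzero polynomial vanishes on a Lebesgue-null set (the paper's Theorem on zero sets of polynomials) together with absolute continuity of the product law. The only point of divergence is the witness certifying $P\not\equiv 0$: you use the configuration $\{e_i\}\cup\{e_i+e_j: i<j\}$, whose rank-one matrices visibly generate the standard basis $\{E_{ii}\}\cup\{E_{ij}+E_{ji}\}$ of $\mathcal{S}$, whereas the paper takes $X_t=(p_1^{t-1},\dots,p_d^{t-1})$ for distinct primes so that the matrix becomes a Vandermonde matrix with pairwise distinct nodes. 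Your witness is arguably more elementary (no appeal to unique factorization or Vandermonde determinants), and both are complete; no gaps.
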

The proof of Theorem \ref{thm:random-data-geo-cond} is deferred to Section \ref{sec:proof-of-random-data-geo}.

The critical number $N^*$ is obtained to be $d(d+1)/2$ since ${\rm dim}(\mathcal{S})= \binom{d}{2}+d=d(d+1)/2$. Note also that, with this observation, part $(b)$ of Theorem \ref{thm:random-data-geo-cond} is trivial, since we do not have enough number of matrices to span the space $\mathcal{S}$. 

\subsubsection*{Sample Complexity Bound for the Planted Network Model}
Combining Theorems \ref{thm:geo-condition} and \ref{thm:random-data-geo-cond}, we arrive at the following sample complexity result. 
\begin{theorem}\label{thm:main}
Let $X_i, 1\leqslant i\leqslant N$ be i.i.d. with a jointly continuous distribution on $\R^d$. 
Let the corresponding outputs $(Y_i)_{i=1}^N$ be generated via $Y_i = f(W^*;X_i)$, with $W^*\in\R^{m\times d}$  with  ${\rm rank}(W^*)=d$. 
\begin{itemize}
    \item[(a)] Suppose $N\geqslant  N^*$, and $\widehat{m}\in\mathbb{N}$. Then with probability one over the training data $X_1,\dots,X_n$, if $W\in\mathbb{R}^{\widehat{m}\times d}$ is such that $f(W;X_i)=Y_i$ for every $i\in[N]$, then $f(W;X)=f(W^*;X)$ for every $X\in\R^d$.
    \item[(b)] Suppose $X_i, 1\leqslant i\leqslant N$ are i.i.d. random vectors with i.i.d. centered coordinates having variance $\mu_2$ and finite fourth moment $\mu_4$. Suppose that $N<N^*$. Then there exists a $W\in\R^{m\times d}$ such that $f(W;X_i)=Y_i$ for every $i\in[N]$, yet the generalization error satisfies
    \begin{align*}
    \mathcal{L}(W) \geqslant \min\{\mu_4-\mu_2^2,2\mu_2^2\}\sigma_{\min}(W^*)^4.
    \end{align*} 
\end{itemize}
\end{theorem}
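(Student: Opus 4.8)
The plan is to obtain both parts by combining Theorem~\ref{thm:random-data-geo-cond} with landscape results already established: Theorem~\ref{thm:geo-condition} for part~(a), and the energy-barrier bound of Theorem~\ref{thm:band-gap}(a) for part~(b).

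For part~(a), I would argue as follows. Since $N\geqslant N^*$ and the $X_i$ are i.i.d.\ with a jointly continuous distribution, Theorem~\ref{thm:random-data-geo-cond}(a) yields that, with probability one, ${\rm span}(X_iX_i^T:i\in[N])=\mathcal{S}$, the space of all $d\times d$ real symmetric matrices. On this probability-one event Theorem~\ref{thm:geo-condition}(a) applies directly: any $W\in\mathbb{R}^{\widehat m\times d}$ with $f(W;X_i)=Y_i=f(W^*;X_i)$ for all $i\in[N]$ must satisfy $W^TW=(W^*)^TW^*$, and hence $f(W;X)=X^TW^TWX=X^T(W^*)^TW^*X=f(W^*;X)$ for every $X\in\mathbb{R}^d$. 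This is exactly the claim.

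For part~(b), the key is to produce a \emph{rank-deficient} interpolator, so that Theorem~\ref{thm:band-gap}(a) applies verbatim and delivers the stated constant. Since $N<N^*=\dim\mathcal{S}$, the matrices $X_1X_1^T,\dots,X_NX_N^T$ cannot span $\mathcal{S}$; working with the trace inner product $\langle A,B\rangle={\rm trace}(AB)$ on $\mathcal{S}$, I would pick a nonzero $\Delta\in\mathcal{S}$ orthogonal to all of them, i.e.\ with $X_i^T\Delta X_i=\langle\Delta,X_iX_i^T\rangle=0$ for all $i\in[N]$. Then I would consider the matrix pencil $M_t=(W^*)^TW^*+t\Delta$: since ${\rm rank}(W^*)=d$ one has $M_0\succ 0$ with $\lambda_{\min}(M_0)=\sigma_{\min}(W^*)^2>0$; replacing $\Delta$ by $-\Delta$ if necessary (which keeps $\Delta$ nonzero and orthogonal to the $X_iX_i^T$), I may assume $\Delta$ has a negative eigenvalue, so that $\lambda_{\min}(M_t)\to-\infty$ as $t\to+\infty$. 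By continuity of $\lambda_{\min}$ and the intermediate value theorem there is a smallest $t_0>0$ with $M_{t_0}\succeq 0$ and $M_{t_0}$ singular. Setting $M=M_{t_0}$ and letting $W\in\mathbb{R}^{m\times d}$ be the matrix whose top $d\times d$ block is $M^{1/2}$ and whose remaining rows vanish (using $m\geqslant d$, which holds since ${\rm rank}(W^*)=d$), I get $W^TW=M$, so ${\rm rank}(W)={\rm rank}(M)\leqslant d-1$, and $f(W;X_i)=X_i^TMX_i=f(W^*;X_i)+t_0\langle\Delta,X_iX_i^T\rangle=Y_i$, i.e.\ $W$ interpolates the data. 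Finally, since ${\rm rank}(W)<d$ and the coordinates of $X_i$ are i.i.d., centered, with variance $\mu_2$ and finite fourth moment $\mu_4$, Theorem~\ref{thm:band-gap}(a) gives $\mathcal{L}(W)\geqslant\min\{\mu_4-\mu_2^2,2\mu_2^2\}\,\sigma_{\min}(W^*)^4$.

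The only genuinely new work is the construction in part~(b), and the delicate point there is the sign normalization of $\Delta$: one must ensure the eigenvalue path $\lambda_{\min}(M_t)$ actually reaches $0$ at some finite $t>0$ while $M_t$ remains positive semidefinite up to that point, which is precisely what lets us realize $M$ as $W^TW$ while simultaneously forcing ${\rm rank}(W)\leqslant d-1$. Everything else is routine bookkeeping, and part~(a) is essentially immediate once Theorems~\ref{thm:random-data-geo-cond} and~\ref{thm:geo-condition} are in hand.
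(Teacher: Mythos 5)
Your proof is correct. Part (a) is identical in substance to the paper's argument: combine Theorem~\ref{thm:random-data-geo-cond}(a) with Theorem~\ref{thm:geo-condition}(a) to get $W^TW=(W^*)^TW^*$ on a probability-one event, hence equality of the quadratic forms everywhere.

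Part (b) reaches the same constant by a genuinely different route. The paper reuses the construction from the proof of Theorem~\ref{thm:geo-condition}(b): it takes a unit-spectral-norm symmetric $M$ orthogonal to all $X_iX_i^T$ and perturbs by exactly $\delta=\sigma_{\min}(W^*)^2$, so that $A=W^TW-(W^*)^TW^*=\delta M$; Weyl's inequality guarantees $(W^*)^TW^*+\delta M\succeq 0$ for this $\delta$, and the lower bound then comes from the analytic formula of Theorem~\ref{thm:analytic-exp-pop-risk}(c) via ${\rm trace}(A^2)\geqslant\delta^2=\sigma_{\min}(W^*)^4$ --- the resulting $W$ need not be rank-deficient. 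You instead push the pencil $M_t=(W^*)^TW^*+t\Delta$ all the way to the first time $t_0$ it becomes singular while still PSD (your sign normalization of $\Delta$ is sound: a nonzero symmetric matrix or its negative always has a negative eigenvalue, so $\lambda_{\min}(M_t)\to-\infty$ along one of the two directions), obtain a genuinely rank-deficient interpolator, and then invoke the energy-barrier Theorem~\ref{thm:band-gap}(a) as a black box. Your version makes the connection to the rank-deficiency barrier literal rather than merely "very similar" (as the paper phrases it), at the cost of an eigenvalue-continuity/IVT argument that the paper avoids by fixing the perturbation size in advance; the paper's version is shorter because the magnitude $\delta$ is chosen exactly so that the direct computation yields $\sigma_{\min}(W^*)^4$. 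Both arguments are complete and give the stated bound.
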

The proof of Theorem \ref{thm:main} is deferred to Section \ref{sec:proof-of-thm-mainn}. 

We highlight that the lower bound arising in Theorem \ref{thm:main} ${\rm (b)}$ is very similar to the energy barrier bounds obtained earlier for rank-deficient matrices in Theorem \ref{thm:band-gap} ${\rm (a)}$ and Theorem~\ref{thm:energy-barrier-empirical}. 
Note also that the interpolating network in in part ${\rm (a)}$ can potentially be larger than the original network generating the data: any large network, despite being overparametrized, still generalizes well, provided it interpolates on a training set enjoying the aforementioned geometric condition. 

Theorems \ref{thm:geo-condition} and \ref{thm:main} together provide the necessary and sufficient number of data points for training a shallow neural  network with quadratic activation function so as to guarantee good (perfect) generalization property. 

\section{Auxiliary Results}\label{sec:auxi}
We collect herein several useful auxiliary results that we  utilize in our proofs. The proofs of these auxiliary results  are provided in Section \ref{sec:pf-thm:analytic-exp-pop-risk}.

\subsection{An Analytical Expression for the Population Risk}
Towards proving our energy barrier results, Theorem \ref{thm:energy-barrier-empirical} and Theorem \ref{thm:band-gap}, we start with
providing an analytical expression for the population risk $\mathcal{L}(W)$ of any $W\in\mathbb{R}^{m\times d}$ in terms of how close it is to the planted weight matrix $W^*\in\R^{m\times d}$. 

We recall that a random vector $X$ in $\R^d$ is defined to have jointly continuous distribution
if there exists a measurable function $f:\R^d\to \R$ such that for any $i\in [N]$ and Borel set $\mathcal{B}\subseteq \R^d$, 
\begin{align*}
\mathbb{P}(X \in \mathcal{B})=\int_{\mathcal{B}}f(x_1,\dots,x_d)\;d\lambda(x_1,\dots,x_d),
\end{align*}
where $\lambda$ is the Lebesgue measure on $\R^d$. 
\begin{theorem}\label{thm:analytic-exp-pop-risk}
Let $W^*\in \R^{m\times d}$, $f(W^*;X)$ be the function computed by (\ref{eq:fnc-nn-computes}); and $f(W;X)$ be similarly the function computed by (\ref{eq:fnc-nn-computes}) for $W\in\R^{m\times d}$. Recall,
$$
\mathcal{L}(W)=\mathbb{E}[(f(W^*;X)-f(W;X))^2],
$$
where the expectation is with respect to the distribution of $X\in\R^d$.
\begin{itemize}
    \item[(a)] 
    Suppose the distribution of $X$ is jointly continuous. Then
    $\mathcal{L}(W)=0$, that is, $f(W^*;X)=f(W;X)$ almost surely with respect to $X$, if and only if $W=QW^*$ for some orthonormal matrix $Q\in\mathbb{R}^{m\times m}$.
\end{itemize}
Suppose now that the coordinates of $X\in\R^d$ are i.i.d. with $\E{X_i}=0,\E{X_i^2}=\mu_2$, and $\E{X_i^4}=\mu_4$.
\begin{itemize}
\item[(b)] It holds that:
$$
\mathcal{L}(W)  = \mu_2^2 \cdot {\rm trace}(A)^2 +2\mu_2^2 \cdot {\rm trace}(A^2) +(\mu_4-3\mu_2^2)\cdot {\rm trace}(A\circ A),
$$
where $A=(W^*)^T W^* - W^T W\in\R^{d\times d}$, and $A\circ A$ is the Hadamard product of $A$ with itself. In particular, if $X\in\R^d$ has i.i.d. standard normal coordinates, we obtain $\mathcal{L}(W) = {\rm trace}(A)^2+2{\rm trace}(A^2)$.
\item[(c)] The following bounds hold:
$$
\mu_2^2 \cdot {\rm trace}(A)^2 +  \min\left\{\mu_4 -\mu_2^2 ,2\mu_2^2 \right\} \cdot{\rm trace}(A^2)\leqslant \mathcal{L}(W),
$$
and
$$\mu_2^2 \cdot {\rm trace}(A)^2 +  \max\left\{\mu_4 -\mu_2^2 ,2\mu_2^2 \right\}\cdot {\rm trace}(A^2)\geqslant \mathcal{L}(W).$$
\end{itemize}
\end{theorem}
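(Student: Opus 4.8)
The plan is to reduce all three parts to the single observation that $f(W^*;X)-f(W;X)=X^T A X$ with $A\triangleq (W^*)^T W^* - W^T W\in\R^{d\times d}$ symmetric, so that $\mathcal{L}(W)=\E{(X^T A X)^2}$, and then to evaluate and estimate this expectation.

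For part (b), I would write $X^T A X=\sum_{i,j}A_{ij}X_iX_j$, expand the square, and use linearity to get $\mathcal{L}(W)=\sum_{i,j,k,l}A_{ij}A_{kl}\,\E{X_iX_jX_kX_l}$. Since the coordinates are i.i.d.\ and centered with $\E{X_1^2}=\mu_2$ and $\E{X_1^4}=\mu_4$, one has the standard fourth–moment identity
\[
\E{X_iX_jX_kX_l}=\mu_2^2\bigl(\delta_{ij}\delta_{kl}+\delta_{ik}\delta_{jl}+\delta_{il}\delta_{jk}\bigr)+(\mu_4-3\mu_2^2)\,\delta_{ijkl},
\]
where $\delta_{ijkl}=1$ iff $i=j=k=l$; the $\delta_{ijkl}$ correction is precisely what prevents triple–counting the all–indices–equal case. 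Substituting, the three Kronecker terms collapse to ${\rm trace}(A)^2$, ${\rm trace}(A^T A)={\rm trace}(A^2)$, and $\sum_{i,j}A_{ij}A_{ji}={\rm trace}(A^2)$ (using the symmetry of $A$), while the correction term contributes $(\mu_4-3\mu_2^2)\sum_i A_{ii}^2=(\mu_4-3\mu_2^2)\,{\rm trace}(A\circ A)$. This yields the claimed formula, and the Gaussian specialization follows by inserting $\mu_2=1,\mu_4=3$.

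For part (c), I note that $A$ symmetric gives ${\rm trace}(A^2)=\|A\|_F^2=\sum_{i,j}A_{ij}^2$, whence $0\le {\rm trace}(A\circ A)=\sum_i A_{ii}^2\le {\rm trace}(A^2)$; also $\mu_4-\mu_2^2={\rm Var}(X_1^2)\ge 0$. Splitting on the sign of the coefficient $\mu_4-3\mu_2^2$ in the formula from (b): if it is nonnegative, replace ${\rm trace}(A\circ A)$ by $0$ for the lower bound and by ${\rm trace}(A^2)$ for the upper bound; if it is negative, do the reverse. In either case the resulting coefficient of ${\rm trace}(A^2)$ lies between $\min\{\mu_4-\mu_2^2,2\mu_2^2\}$ and $\max\{\mu_4-\mu_2^2,2\mu_2^2\}$, and the term $\mu_2^2\,{\rm trace}(A)^2$ is left untouched, giving the two inequalities.

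For part (a), the ``if'' direction is immediate: if $W=QW^*$ with $Q$ orthonormal then $A=0$ and $\mathcal{L}(W)=0$. For ``only if'', $\mathcal{L}(W)=\E{(X^T A X)^2}=0$ forces $X^T A X=0$ almost surely, i.e.\ $\mathbb{P}(X\in Z)=1$ where $Z=\{x\in\R^d:x^T A x=0\}$ is the zero set of the polynomial $p(x)=x^T A x$. If $A\ne 0$ then $p\not\equiv 0$, so $Z$ has Lebesgue measure zero, and joint continuity of $X$ gives $\mathbb{P}(X\in Z)=\int_Z f\,d\lambda=0$, a contradiction; hence $A=0$, i.e.\ $W^T W=(W^*)^T W^*$. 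From this equality of Gram matrices I would build the orthonormal $Q$ directly: the assignment $W^*v\mapsto Wv$ is well defined and norm–preserving on ${\rm col}(W^*)$ because $\|Wv\|_2^2=v^T W^T W v=v^T(W^*)^T W^* v=\|W^*v\|_2^2$, so it is a linear isometry from ${\rm col}(W^*)$ onto ${\rm col}(W)$, two subspaces of $\R^m$ of equal dimension; extending it by any isometry between their (equidimensional) orthogonal complements yields an orthogonal $Q\in\R^{m\times m}$ with $QW^*=W$. The only place demanding care is the combinatorial bookkeeping of $\E{X_iX_jX_kX_l}$ in part (b), particularly retaining the $\delta_{ijkl}$ correction so the all–equal case is counted once rather than three times; the rest is routine.
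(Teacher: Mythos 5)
Your proposal is correct and follows essentially the same route as the paper: reduce everything to $\mathcal{L}(W)=\E{(X^TAX)^2}$ with $A=(W^*)^TW^*-W^TW$, compute the fourth-moment expansion for (b), bound $0\leqslant{\rm trace}(A\circ A)\leqslant{\rm trace}(A^2)$ according to the sign of $\mu_4-3\mu_2^2$ for (c), and use the measure-zero property of a nonzero polynomial's zero set for (a). The only cosmetic differences are that you organize (b) via the Kronecker-delta moment identity rather than direct case analysis on the index multiset, and in (a) you construct the orthogonal $Q$ explicitly from the equality of Gram matrices instead of citing the standard factorization theorem the paper invokes; both substitutions are valid.
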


In a nutshell, Theorem \ref{thm:analytic-exp-pop-risk} states that the population risk $\mathcal{L}(W)$ of any $W\in\mathbb{R}^d$ is completely determined by how close it is to the planted weights $W^*$ as measured by the matrix $A=(W^*)^T W^* - W^T W$; and the second and fourth moments of the data. This is not surprising: $\mathcal{L}(W)$ is essentially a function of the first four moments of the data, and the difference of the quadratic forms generated by $W$  and $W^*$, which is precisely encapsulated by the matrix $A$. Note also that the characterization of the ``optimal orbit" per part $({\rm a})$ is not surprising either:  any matrix $W$ with the property $W=QW^*$ where $Q\in\R^{m\times m}$ is an orthonormal matrix, that is, $Q^T Q =I_m$, has the property that $f(W;X)=\|WX\|_2^2=X^TW^TWX=f(W^*;X)$ for any data $X\in\R^d$. 
Part $({\rm a})$ then says the the reverse is true as well, provided that the distribution of $X$ is jointly continuous. Note also that for $X$ with centered i.i.d.\ entries the thesis of part ${\rm  (a)}$ follows also from part ${\rm (c)}$: $\mathcal{L}(W)=0$ implies that ${\rm trace}(A^2)=0$, which, together with the fact that $A$ is symmetric, then yields $A=0$, that is, $W^T W=(W^*)^T W^*$. 
\subsection{Useful Lemmas and Results from Linear Algebra and Random Matrix Theory}
Our next result is a simple norm bound for the ensemble $X_i\in\R^d$, $1\leqslant i\leqslant N$ with sub-Gaussian coordinates.
\begin{lemma}\label{lemma:bd-data}
Let $X_i\in\R^d$, $1\leqslant i\leqslant N$ be an i.i.d. collection of random vectors with centered i.i.d. sub-Gaussian coordinates, that is, for some constant $C>0$,  $\mathbb{P}(|X_i(j)|>t)\leqslant\exp(-Ct^2)$ for every $i\in[N],j\in[d]$, and $t\geqslant 0$. Then,
\[
\mathbb{P}\left(\left\|X_i\right\|_\infty<d^{K_1},1\leqslant i\leqslant N\right)\geqslant 1-Nd\exp(-Cd^{2K_1}).
\]
\end{lemma}
Our energy barrier result Theorem \ref{thm:energy-barrier-empirical} for the empirical risk is proven by establishing the emergence of a barrier for a {\bf single} rank-deficient $A\in\R^{d\times d}$, together with a covering numbers argument. 
\begin{lemma}\label{lemma:single-concentration}
Let $X_i\in\R^d$, $1\leqslant i\leqslant N$ be a collection of i.i.d. data with centered i.i.d. sub-Gaussian coordinates where for any $M>0$, the mean of $|X_1(1)|$ conditional on $|X_1(1)|\leqslant M$ is zero; and let $Y_i=f(W^*;X_i)$ be the corresponding label generated by a neural network with planted weights $W^*\in\R^{m\times d}$ as per (\ref{eq:fnc-nn-computes}), where $\|W^*\|_F\leqslant d^{K_2}$. 
Fix any $A\in\R^{d\times d}$, where $\|A\|_F\leqslant d^{2K_2}$, ${\rm rank}(A)\leqslant d-1$, and $A\succeq 0$. Fix $K_1>0$ and define the event
\[
\mathcal{E}(A) \triangleq \left\{ \frac1N\sum_{1\leqslant i\leqslant N}\left(Y_i - X_i^T A X_i\right)^2 \geqslant \frac12 C_5 \sigma_{\rm min}(W^*)^4
\right\},
\]
where 
\[
C_5 = \min\{\mu_4(K_1)-\mu_2(K_1)^2,2\mu_2(K_1)^2\}
\]
where $\mu_n(K)=\mathbb{E}[X_1(1)^n\mid |X_1(1)|\leqslant d^{K}]$.
Then, there exists an constant $C'>0$ (independent of $W$,  and depending  only on data distribution, $K_1$, and $W^*$) such that 
\[
\mathbb{P}(\mathcal{E}(A))\geqslant 1-\exp\left(-C_3\frac{N}{d^{4K_1+4K_2+2}}\right)-Nde^{-Cd^{2K_1}},
\]
where $C>0$ is the same constant as in Lemma \ref{lemma:bd-data}.
\end{lemma}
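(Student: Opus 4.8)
The plan is to reduce the event $\mathcal{E}(A)$ to a concentration statement about the single quadratic form $X_i^TBX_i$, where $B\triangleq (W^*)^TW^*-A$ is symmetric (as $A\succeq 0$). Indeed $Y_i-X_i^TAX_i=X_i^T(W^*)^TW^*X_i-X_i^TAX_i=X_i^TBX_i$, so $\mathcal{E}(A)$ is an event about $\frac1N\sum_{i=1}^N(X_i^TBX_i)^2$. The key structural fact, which produces the constant $\sigma_{\min}(W^*)^4$, is that $B$ cannot be small: since ${\rm rank}(A)\le d-1$ and $A\succeq 0$ there is a unit vector $v$ with $Av=0$, whence $v^TBv=v^T(W^*)^TW^*v=\|W^*v\|_2^2\ge\sigma_{\min}(W^*)^2>0$; because $B$ is symmetric this forces $\|B\|_2\ge v^TBv\ge\sigma_{\min}(W^*)^2$, so ${\rm trace}(B^2)=\|B\|_F^2\ge\|B\|_2^2\ge\sigma_{\min}(W^*)^4$. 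I also record the crude bound $\|B\|_F\le\|(W^*)^TW^*\|_F+\|A\|_F\le\|W^*\|_F^2+d^{2K_2}\le 2d^{2K_2}$ (and we may assume $C_5>0$, else the claim is vacuous).

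Next I would pass to the truncated data $\widetilde X_i$, with $\widetilde X_i(j)=X_i(j)\,\mathbf 1\{|X_i(j)|\le d^{K_1}\}$. By Lemma~\ref{lemma:bd-data}, outside an event of probability at most $Nd\,e^{-Cd^{2K_1}}$ one has $X_i=\widetilde X_i$ for all $i\in[N]$, so on the complement $\frac1N\sum_i(X_i^TBX_i)^2=\frac1N\sum_i(\widetilde X_i^TB\widetilde X_i)^2$. By the conditional-mean-zero hypothesis the coordinates of $\widetilde X_i$ are i.i.d., centered, with $\E{\widetilde X_i(j)^2}=p\,\mu_2(K_1)$ and $\E{\widetilde X_i(j)^4}=p\,\mu_4(K_1)$, where $p\triangleq\mathbb{P}(|X_1(1)|\le d^{K_1})\in[1-e^{-Cd^{2K_1}},1]$. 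Writing $A=\widetilde W^T\widetilde W$ for some $\widetilde W\in\R^{m\times d}$ (possible since $A\succeq 0$ and $m\ge d$), so that $X^TBX=f(W^*;X)-f(\widetilde W;X)$, the lower bound in Theorem~\ref{thm:analytic-exp-pop-risk}(c) applied to the vector $\widetilde X_i$ gives
\[
\E{(\widetilde X_i^TB\widetilde X_i)^2}\ge\min\{p\mu_4(K_1)-p^2\mu_2(K_1)^2,\,2p^2\mu_2(K_1)^2\}\cdot{\rm trace}(B^2)\ge p^2C_5\,{\rm trace}(B^2)\ge p^2C_5\,\sigma_{\min}(W^*)^4,
\]
where I dropped the nonnegative ${\rm trace}(B)^2$ term and used $0\le p\le 1$. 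For $d$ large enough $p^2\ge 3/4$, so $\E{(\widetilde X_i^TB\widetilde X_i)^2}\ge\tfrac34C_5\sigma_{\min}(W^*)^4$.

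Finally, the concentration step. The variables $Z_i\triangleq(\widetilde X_i^TB\widetilde X_i)^2$ are i.i.d., nonnegative, and bounded: $Z_i\le(\|B\|_2\|\widetilde X_i\|_2^2)^2\le(\|B\|_F\cdot d\cdot d^{2K_1})^2\le 4d^{4K_1+4K_2+2}\triangleq M$, while $\E{Z_1}\ge\tfrac34C_5\sigma_{\min}(W^*)^4$. The multiplicative Chernoff (Bennett) inequality for bounded nonnegative i.i.d. summands, applied with deviation parameter $\delta=1/3$, then yields
\[
\mathbb{P}\!\Big(\tfrac1N\sum_{i=1}^N Z_i<\tfrac12C_5\sigma_{\min}(W^*)^4\Big)\le\mathbb{P}\!\Big(\tfrac1N\sum_{i=1}^N Z_i\le\tfrac23\E{Z_1}\Big)\le\exp\!\Big(-\frac{\delta^2\,N\,\E{Z_1}}{2M}\Big)\le\exp\!\Big(-C_3\,\frac{N}{d^{4K_1+4K_2+2}}\Big)
\]
for a constant $C_3>0$ depending only on $C_5$ and $\sigma_{\min}(W^*)$; here the first inclusion uses $\tfrac23\E{Z_1}\ge\tfrac12C_5\sigma_{\min}(W^*)^4$. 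A union bound over this event and the truncation event of Lemma~\ref{lemma:bd-data} gives $\mathbb{P}(\mathcal{E}(A))\ge 1-\exp(-C_3N/d^{4K_1+4K_2+2})-Nd\,e^{-Cd^{2K_1}}$, as claimed.

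The only delicate point is the choice of tail inequality in the last step. The summand $Z_i$ is a \emph{squared} quadratic form, so although $\E{Z_1}=\Theta(1)$ its range $M$ is as large as $d^{\Theta(K_1+K_2)}$; an additive Hoeffding or Bernstein bound would produce an exponent of order $N/d^{8K_1+8K_2+\Theta(1)}$, far weaker than what is claimed. The multiplicative form, whose exponent scales like $N\,\E{Z_1}/M$ rather than $N\,\E{Z_1}^2/{\rm Var}(Z_1)$, is precisely what recovers the stated rate $N/d^{4K_1+4K_2+2}$; the remaining bookkeeping — the factor $p\le 1$ separating the truncated moments $p\mu_n(K_1)$ from $\mu_n(K_1)$ — is routine and is the reason the target threshold carries the $\tfrac12$.
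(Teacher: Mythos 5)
Your proof is correct and follows the same architecture as the paper's: reduce to the quadratic form $X_i^TBX_i$ with $B=(W^*)^TW^*-A$, restrict to the high-probability event that all coordinates are bounded by $d^{K_1}$ (you truncate, the paper conditions on $\mathcal{E}_1=\{\|X_i\|_\infty<d^{K_1}\}$ --- equivalent here since the event is a product event), lower-bound the resulting mean of $(X^TBX)^2$ by $C_5\sigma_{\min}(W^*)^4$ via the rank-deficiency of $A$ and the population-risk formula, and finish with concentration of a bounded nonnegative i.i.d.\ sum. Your bookkeeping of the factor $p=\mathbb{P}(|X_1(1)|\le d^{K_1})$ relating truncated to conditional moments is slightly more careful than the paper's, which simply asserts the conditional energy-barrier bound. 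The one genuine divergence is the final step, and it is in your favor: the paper invokes ``Hoeffding's inequality for bounded random variables,'' which for summands ranging over $[0,4d^{4K_1+4K_2+2}]$ and a deviation of order $\sigma_{\min}(W^*)^4$ yields an exponent of order $N/d^{8K_1+8K_2+4}$, not the stated $N/d^{4K_1+4K_2+2}$. Your multiplicative Chernoff/Bernstein bound, whose exponent scales as $N\,\E{Z_1}/M$, is exactly what recovers the rate claimed in the lemma (with $C_3$ absorbing the $\sigma_{\min}(W^*)^4$ dependence, as the statement permits), and your closing remark correctly diagnoses why the additive form would not suffice.
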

The next result is a covering number bound, adopted from \cite[Lemma~3.1]{candes2011tight} with minor modifications.
\begin{lemma}\label{lemma:cov-number}
Let 
\[
S_R\triangleq \left\{A\in\R^{d\times d}:{\rm rank}(A)\leqslant r,A\succeq 0, \|A\|_F\leqslant R\right\}.
\]
Then there exists an $\epsilon-$net $\bar{S_R}$ for $S_R$ in Frobenius norm (that is, for every $A\in S_R$ there exists a $\widehat{A}\in\bar{S_R}$ such that $\|A-\widehat{A}\|_F\leqslant \epsilon$) such that
\[
|\bar{S}_R|\leqslant \left(\frac{9R}{\epsilon}\right)^{dr+r}.
\]
\end{lemma}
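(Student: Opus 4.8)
The plan is to reproduce the standard $\epsilon$-net construction for bounded-rank matrices (as in \cite[Lemma~3.1]{candes2011tight}), specialized to the symmetric positive semidefinite case, which lets one get away with a single ``Stiefel'' factor rather than two and thus yields exponent $dr+r=(d+1)r$. I may assume $\epsilon<R$, since otherwise $S_R$ lies inside a Frobenius ball of radius $R\le\epsilon$ and the single-point net $\{0\}$ suffices (and $(9R/\epsilon)^{dr+r}\ge1$ as long as $\epsilon\le9R$). The starting point is the spectral decomposition: every $A\in S_R$ can be written as $A=U\Lambda U^T$, where $U\in\R^{d\times d}$... more precisely $U\in\R^{d\times r}$ has orthonormal columns (pad the eigenbasis arbitrarily if ${\rm rank}(A)<r$) and $\Lambda={\rm diag}(\lambda_1,\dots,\lambda_r)\succeq 0$ with $\|\Lambda\|_F=\|A\|_F\le R$. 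So it suffices to discretize the two factors separately.

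For the factor $U$, I would take a $\delta_1$-net $\bar U$, in operator norm, of the operator-norm unit ball $\{M\in\R^{d\times r}:\|M\|_2\le1\}$ (which contains every admissible $U$); the usual volumetric bound gives $|\bar U|\le(1+2/\delta_1)^{dr}\le(3/\delta_1)^{dr}$ for $\delta_1\le1$, and one may take the net points to lie in the ball. For the factor $\Lambda$, identifying diagonal $r\times r$ matrices with vectors in $\R^r$, I would take a $\delta_2$-net $\bar D$, in Frobenius (Euclidean) norm, of the ball of radius $R$, with $|\bar D|\le(3R/\delta_2)^r$ and net points of Frobenius norm $\le R$. Choosing $\delta_1=\epsilon/(3R)$ and $\delta_2=\epsilon/3$ makes both factor bounds equal $9R/\epsilon$. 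The net is then
\[
\bar S_R\triangleq\{\widehat U\widehat\Lambda\widehat U^T:\widehat U\in\bar U,\ \widehat\Lambda\in\bar D\},
\]
of cardinality at most $(9R/\epsilon)^{dr}(9R/\epsilon)^r=(9R/\epsilon)^{dr+r}$.

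It remains to verify the covering property. Given $A=U\Lambda U^T\in S_R$, pick $\widehat U\in\bar U$ with $\|U-\widehat U\|_2\le\delta_1$ and $\widehat\Lambda\in\bar D$ with $\|\Lambda-\widehat\Lambda\|_F\le\delta_2$, and set $\widehat A=\widehat U\widehat\Lambda\widehat U^T\in\bar S_R$. Using the telescoping identity
\[
A-\widehat A=(U-\widehat U)\Lambda U^T+\widehat U(\Lambda-\widehat\Lambda)U^T+\widehat U\widehat\Lambda(U-\widehat U)^T,
\]
together with the submultiplicativity $\|XYZ\|_F\le\|X\|_2\|Y\|_F\|Z\|_2$ and the facts $\|U\|_2=\|U^T\|_2\le1$, $\|\widehat U\|_2\le1$, $\|\Lambda\|_F\le R$, $\|\widehat\Lambda\|_F\le R$, the three terms are bounded by $R\delta_1$, $\delta_2$, and $R\delta_1$ respectively, hence $\|A-\widehat A\|_F\le 2R\delta_1+\delta_2=\tfrac23\epsilon+\tfrac13\epsilon=\epsilon$, as desired.

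There is no genuinely hard step here. The only points requiring care are: (i) that the net points may be chosen to lie inside the respective balls (so that $\|\widehat U\|_2\le1$ and $\|\widehat\Lambda\|_F\le R$ can be used in the last display), which follows by taking maximal $\delta$-separated subsets; and (ii) the bookkeeping of operator- versus Frobenius-norm submultiplicativity in the telescoping bound, which is where the splitting $\delta_1=\epsilon/(3R)$, $\delta_2=\epsilon/3$ is calibrated to land exactly on the stated constant $9R/\epsilon$. Everything else is the textbook volumetric covering estimate.
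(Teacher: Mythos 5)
Your proposal is correct and follows essentially the same route as the paper: factor $A=U\Lambda U^T$, build separate nets for the (padded) orthonormal factor and the diagonal factor with radii $\epsilon/(3R)$ and $\epsilon/3$, and telescope. The only difference is cosmetic — you cover the $U$-factor in operator norm and use $\|XYZ\|_F\le\|X\|_2\|Y\|_F\|Z\|_2$, while the paper covers it in the max-column-norm $\|\cdot\|_{1,2}$ and bounds $\|(Q-\bar Q)\Lambda\|_F$ columnwise; both yield the same cardinality and the same final constant.
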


Some of our results use the following well-known results:
\begin{theorem}{(\cite{caron2005zero})}\label{thm:auxiliary}
Let $\ell$ be an arbitrary positive integer; and $P:\R^\ell\to \R$ be a polynomial. Then, either $P$ is identically $0$, or $\{x\in \R^\ell:P(x)=0\}$ has zero Lebesgue measure, namely, $P(x)$ is non-zero almost everywhere.
\end{theorem}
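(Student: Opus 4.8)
The plan is to argue by induction on the number $\ell$ of variables. For the base case $\ell=1$, a polynomial $P$ on $\R$ that is not identically zero has some finite degree $n\geq 0$ and hence at most $n$ real roots; a finite set is Lebesgue-null, so $\{x\in\R:P(x)=0\}$ has measure zero.

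For the inductive step, assume the statement for polynomials in $\ell-1$ variables and let $P:\R^\ell\to\R$ be a nonzero polynomial. Write $x=(x',x_\ell)$ with $x'=(x_1,\dots,x_{\ell-1})$ and expand $P$ in powers of the last coordinate, $P(x',x_\ell)=\sum_{j=0}^{n}P_j(x')\,x_\ell^j$, choosing $n$ so that the leading coefficient $P_n$ is not identically zero; this is possible precisely because $P\not\equiv 0$. Applying the inductive hypothesis to the $(\ell-1)$-variable polynomial $P_n$ gives that $A\triangleq\{x'\in\R^{\ell-1}:P_n(x')=0\}$ is Lebesgue-null in $\R^{\ell-1}$. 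For any fixed $x'\notin A$, the map $x_\ell\mapsto P(x',x_\ell)$ is a nonzero univariate polynomial (its top coefficient $P_n(x')$ is nonzero), hence has finitely many zeros, so the slice $\{x_\ell\in\R:P(x',x_\ell)=0\}$ is null in $\R$.

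To conclude I would invoke Tonelli's theorem. The zero set $Z\triangleq\{x\in\R^\ell:P(x)=0\}$ is closed, being the preimage of $\{0\}$ under the continuous map $P$, hence Borel measurable, and
\[
\lambda_\ell(Z)=\int_{\R^{\ell-1}}\lambda_1\bigl(\{x_\ell\in\R:(x',x_\ell)\in Z\}\bigr)\,d\lambda_{\ell-1}(x').
\]
By the previous paragraph the integrand vanishes for every $x'\notin A$, and $\lambda_{\ell-1}(A)=0$, so the integral is zero; therefore $\lambda_\ell(Z)=0$, completing the induction.

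I do not anticipate a genuine obstacle here: the only points needing a little care are (i) arranging the expansion so that the leading coefficient in $x_\ell$ is itself a \emph{nonzero} polynomial in the remaining variables, which is exactly the hook for the inductive hypothesis, and (ii) the measurability bookkeeping required to apply Tonelli, which is dispatched by observing that $Z$ is closed. An alternative route would bypass Fubini and use the identity theorem for real-analytic functions, but the inductive argument above is the most elementary and self-contained, and matches the level of generality claimed in \cite{caron2005zero}.
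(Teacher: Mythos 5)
Your argument is correct. Note that the paper does not prove this statement at all: it is imported verbatim from the cited reference \cite{caron2005zero}, so there is no in-paper proof to compare against. Your induction-on-dimension argument combined with Tonelli's theorem is the standard proof of this fact (and is essentially the argument in the cited reference). The two points you flag are indeed the only delicate ones, and you handle both: the leading coefficient $P_n$ in the $x_\ell$-expansion is a nonzero polynomial in $\ell-1$ variables by the choice of $n$, and $Z=P^{-1}(\{0\})$ is closed, hence Borel, so the slice function $x'\mapsto \lambda_1(Z_{x'})$ is measurable and Tonelli applies. One micro-remark: for $x'\in A$ the slice may be all of $\R$, so the integrand can equal $+\infty$ there; this is harmless since a nonnegative measurable function vanishing outside a null set still has integral zero, but it is worth saying explicitly that you are integrating a nonnegative function that is zero almost everywhere rather than a bounded one.
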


\begin{theorem} {(\cite[Theorem~7.3.11]{horn2012matrix})}\label{thm:auxiliary-2}
For two matrices $A \in\R^{p \times n}$ and $B\in\R^{q\times n}$ where $q\leqslant p$; $A^T A = B^T B$ holds if and only if $A=QB$ for some matrix $Q\in\R^{p\times q}$ with orthonormal columns. 
\end{theorem}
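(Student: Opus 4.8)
The plan is to prove the two directions separately, the converse being immediate. If $A=QB$ with $Q\in\R^{p\times q}$ satisfying $Q^TQ=I_q$, then $A^TA=B^TQ^TQB=B^TB$, which gives one implication with no work. For the substantive direction, suppose $A^TA=B^TB=:M\in\R^{n\times n}$. The key observation I would exploit is that this matrix identity is equivalent to the pointwise norm identity $\|Ax\|_2=\|Bx\|_2$ for every $x\in\R^n$, since $\|Ax\|_2^2=x^TA^TAx=x^TMx=x^TB^TBx=\|Bx\|_2^2$. Using this, I would define a linear map on the column space $\mathcal{R}_B=\{Bx:x\in\R^n\}\subseteq\R^q$ of $B$ by $\phi(Bx)=Ax$. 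First I would check well-definedness: if $Bx=Bx'$ then $B(x-x')=0$, hence $\|A(x-x')\|_2=\|B(x-x')\|_2=0$, so $Ax=Ax'$. Linearity is then clear, and $\|\phi(Bx)\|_2=\|Ax\|_2=\|Bx\|_2$, so $\phi$ is a linear isometry from the subspace $\mathcal{R}_B$ into $\R^p$.

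Next I would extend $\phi$ to a global isometry and read off $Q$. Let $r=\dim\mathcal{R}_B=\mathrm{rank}(B)\le q$ and pick an orthonormal basis $u_1,\dots,u_r$ of $\mathcal{R}_B$; then $\phi(u_1),\dots,\phi(u_r)$ is an orthonormal system in $\R^p$. Since $r\le q\le p$, this system can be completed to an orthonormal system $\phi(u_1),\dots,\phi(u_r),w_{r+1},\dots,w_q$ of size $q$ in $\R^p$, and $u_1,\dots,u_r$ can be completed to an orthonormal basis $u_1,\dots,u_q$ of $\R^q$. Define $Q\in\R^{p\times q}$ to be the linear map with $Qu_i=\phi(u_i)$ for $i\le r$ and $Qu_i=w_i$ for $r<i\le q$. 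By construction $Q$ carries an orthonormal basis of $\R^q$ to an orthonormal system in $\R^p$, so $Q^TQ=I_q$, i.e. $Q$ has orthonormal columns. Finally, for any $x\in\R^n$, writing $Bx=\sum_{i\le r}c_iu_i$ gives $Q(Bx)=\sum_{i\le r}c_i\phi(u_i)=\phi(Bx)=Ax$; hence $QB=A$, completing the argument.

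I do not expect a genuine obstacle here: the whole argument is elementary linear algebra, and the only point needing care is the dimension bookkeeping in the completion step, which is exactly where the hypothesis $q\le p$ is used — it guarantees there is enough room in $\R^p$ to enlarge an $r$-element orthonormal set to a $q$-element one. An equivalent route I could present instead passes through the spectral decomposition $M=V\Lambda V^T$ with $V\in\R^{n\times n}$ orthogonal and $\Lambda=\mathrm{diag}(\lambda_1,\dots,\lambda_r,0,\dots,0)$, $\lambda_i>0$: one verifies that the columns $Av_i$ of $AV$ satisfy $\ip{Av_i}{Av_j}=\lambda_i\delta_{ij}$ (and likewise for $BV$), so $AV$ and $BV$ have their first $r$ columns equal to $\sqrt{\lambda_i}$ times orthonormal vectors and their remaining columns zero; matching the two resulting orthonormal frames and completing yields the same $Q$ with $QB=A$.
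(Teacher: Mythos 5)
Your proof is correct. Note that the paper does not prove this statement at all — it is quoted verbatim as \cite[Theorem~7.3.11]{horn2012matrix} in the list of auxiliary results — so there is nothing internal to compare against; your argument (trivial converse; for the forward direction, the identity $A^TA=B^TB$ gives $\|Ax\|_2=\|Bx\|_2$ for all $x$, one builds the well-defined linear isometry $\phi(Bx)=Ax$ on $\mathrm{range}(B)$ and extends it to a $p\times q$ matrix with orthonormal columns, using $q\leqslant p$ for the completion) is the standard one. The only step you leave implicit is that a norm-preserving linear map preserves inner products (polarization), which is what lets you conclude that $\phi(u_1),\dots,\phi(u_r)$ is orthonormal; this is routine and not a gap.
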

Our results regarding the initialization guarantees use the several auxiliary results from random matrix theory: The spectrum of tall random matrices are essentially concentrated:
\begin{theorem}\rm{(\cite[Corollary~5.35]{vershynin2010introduction})}\label{thm:tall-matrix-spectra-concentrate}
Let $A$ be an $m\times d$ matrix with independent standard normal entries. For every $t\geqslant 0$,  with probability at least $1-2\exp(-t^2/2)$, we have:
$$
\sqrt{m}-\sqrt{d}-t \leqslant \sigma_{\min}(A)\leqslant \sigma_{\max}(A)\leqslant \sqrt{m}+\sqrt{d}+t.
$$
\end{theorem}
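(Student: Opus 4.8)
The plan is to prove the two-sided bound by combining Gaussian concentration of measure with sharp estimates for the expected extreme singular values coming from a Gaussian comparison inequality. First I would write $\sigma_{\max}(A)=\max_{u\in S^{d-1}}\|Au\|_2=\max_{u\in S^{d-1},\,v\in S^{m-1}}\ip{Au}{v}$ and $\sigma_{\min}(A)=\min_{u\in S^{d-1}}\|Au\|_2=\min_{u\in S^{d-1}}\max_{v\in S^{m-1}}\ip{Au}{v}$. Viewing $A$ as a point of $\R^{m\times d}$ equipped with the Frobenius norm (which is the Euclidean norm on $\R^{md}$, under which the entries $A_{ij}$ are i.i.d.\ standard normals), both maps $A\mapsto\sigma_{\max}(A)$ and $A\mapsto\sigma_{\min}(A)$ are $1$-Lipschitz: for fixed unit vectors $u,v$ one has $\ip{Au}{v}=\ip{A}{vu^{T}}$ in the Frobenius inner product with $\|vu^{T}\|_F=1$, so $A\mapsto\ip{Au}{v}$ is $1$-Lipschitz, and a pointwise supremum or infimum of $1$-Lipschitz functions is again $1$-Lipschitz.

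Next I would invoke the one-sided Gaussian concentration inequality: if $F:\R^{md}\to\R$ is $1$-Lipschitz and $g$ is a standard Gaussian vector in $\R^{md}$, then $\mathbb{P}(F(g)\geqslant\E{F(g)}+t)\leqslant\exp(-t^2/2)$ and $\mathbb{P}(F(g)\leqslant\E{F(g)}-t)\leqslant\exp(-t^2/2)$ for all $t\geqslant0$. Applied with $F=\sigma_{\max}$ (upper tail) and $F=\sigma_{\min}$ (lower tail), this reduces the theorem to the mean bounds $\E{\sigma_{\max}(A)}\leqslant\sqrt{m}+\sqrt{d}$ and $\E{\sigma_{\min}(A)}\geqslant\sqrt{m}-\sqrt{d}$, after which a union bound over the two complementary failure events (each of probability at most $\exp(-t^2/2)$) gives the stated bound with probability at least $1-2\exp(-t^2/2)$.

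For the expectation bounds I would use Gordon's min-max comparison inequality for Gaussian processes (a two-sided refinement of Slepian's lemma): comparing the process $(u,v)\mapsto\ip{Au}{v}$ on $S^{d-1}\times S^{m-1}$ with the simpler process $(u,v)\mapsto\ip{g}{u}+\ip{h}{v}$ for independent standard Gaussian vectors $g\in\R^{d}$ and $h\in\R^{m}$, whose covariance structure satisfies the ordering conditions required by Gordon's theorem, one gets $\E{\sigma_{\max}(A)}=\E{\max_{u,v}\ip{Au}{v}}\leqslant\E{\|h\|_2}+\E{\|g\|_2}$ and $\E{\sigma_{\min}(A)}=\E{\min_{u}\max_{v}\ip{Au}{v}}\geqslant\E{\|h\|_2}-\E{\|g\|_2}$; and $\E{\|h\|_2}\leqslant\sqrt{\E{\|h\|_2^2}}=\sqrt{m}$, $\E{\|g\|_2}\leqslant\sqrt{d}$ by Jensen's inequality. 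Substituting these into the concentration bounds of the previous step finishes the proof.

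The main obstacle is the lower estimate $\E{\sigma_{\min}(A)}\geqslant\sqrt{m}-\sqrt{d}$: this is precisely the point where the full Gaussian min-max comparison (Gordon's theorem) is needed, as Slepian's lemma alone only controls one-sided maxima. A more elementary, self-contained alternative is an $\varepsilon$-net argument on $S^{d-1}$ (a net of size at most $9^{d}$ suffices) together with $\chi^2$-concentration of $\|Au\|_2^2$ for each fixed $u$ and a union bound; this avoids Gordon's inequality but yields only the weaker constants, of the form $\sqrt{m}+C\sqrt{d}$ and $\sqrt{m}-C\sqrt{d}$ for an absolute constant $C>1$, so I would fall back to it only if those sufficed for the intended application.
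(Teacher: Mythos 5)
The paper does not prove this statement at all --- it is imported verbatim as \cite[Corollary~5.35]{vershynin2010introduction} --- so the only meaningful comparison is with the source's proof, and your outline (Gordon's min--max comparison for the expected extreme singular values, plus Gaussian concentration of the $1$-Lipschitz functions $\sigma_{\min},\sigma_{\max}$ around their means, plus a union bound) is exactly that proof. The Lipschitz argument via $\ip{Au}{v}=\ip{A}{vu^T}_F$ and the reduction to the two mean bounds are correct as written.

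There is, however, one step that fails as stated: the deduction of $\E{\sigma_{\min}(A)}\geqslant\sqrt{m}-\sqrt{d}$. Gordon gives $\E{\sigma_{\min}(A)}\geqslant\E{\|h\|_2}-\E{\|g\|_2}$, and to conclude you need a \emph{lower} bound on $\E{\|h\|_2}$, whereas Jensen only gives the upper bound $\E{\|h\|_2}\leqslant\sqrt{m}$ (indeed $\E{\|h\|_2}<\sqrt{m}$ strictly). The missing ingredient is the inequality $\E{\|h\|_2}-\E{\|g\|_2}\geqslant\sqrt{m}-\sqrt{d}$, equivalently that the defect $n\mapsto\sqrt{n}-\E{\|g_n\|_2}$ is non-increasing in the dimension $n$; this follows from the explicit formula $\E{\|g_n\|_2}=\sqrt{2}\,\Gamma\!\left(\tfrac{n+1}{2}\right)/\Gamma\!\left(\tfrac{n}{2}\right)$ (asymptotically the defect is $\approx\tfrac{1}{4\sqrt{n}}$) and is the small lemma Vershynin supplies alongside Gordon's theorem. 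With that lemma inserted, your argument is complete; your fallback $\varepsilon$-net route is also sound but, as you note, cannot recover the sharp constants $\sqrt{m}\pm\sqrt{d}$ that the statement asserts.
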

\begin{theorem}{(\cite{bai1993limit},\cite[Theorem~5.31]{vershynin2010introduction})}\label{thm:baiyinvershy}

 Let $A = A_{N,n}$ be an $N\times n$ random matrix whose entries are independent copies of a random variable with zero mean, unit variance, and finite fourth moment. Suppose that the dimensions $N$ and $n$ grow to infinity while the aspect ratio $n/N$ converges to a constant in $[0, 1]$. Then
$$
\sigma_{\min}(A) =\sqrt{N}-\sqrt{n}+o(\sqrt{n}),\quad\text{and}\quad 
\sigma_{\min}(A) =\sqrt{N}+\sqrt{n}+o(\sqrt{n}),
$$
almost surely.
\end{theorem}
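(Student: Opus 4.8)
The plan is to reduce the statement about the extreme singular values of the rectangular matrix $A=A_{N,n}$ to one about the extreme eigenvalues of the sample covariance matrix $S=\frac1N A^TA\in\R^{n\times n}$, via $\sigma_{\min}(A)^2=N\,\lambda_{\min}(S)$ and $\sigma_{\max}(A)^2=N\,\lambda_{\max}(S)$. Writing $y_N=n/N\to y\in[0,1]$, it then suffices to prove $\lambda_{\min}(S)\to(1-\sqrt y)^2$ and $\lambda_{\max}(S)\to(1+\sqrt y)^2$ almost surely, from which the displayed $\pm\sqrt n+o(\sqrt n)$ expansions follow by taking square roots ($t\mapsto\sqrt t$ being Lipschitz away from $0$, with extra care via tracking the scale $\sqrt N-\sqrt n$ directly in the degenerate regimes $y\to0$ and $y\to1$). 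The ``inner'' halves of these two limits I would obtain for free from the Marchenko--Pastur theorem: the empirical spectral distribution of $S$ converges weakly, almost surely, to the Marchenko--Pastur law of ratio $y$, whose support is exactly $[(1-\sqrt y)^2,(1+\sqrt y)^2]$; since that law puts positive mass in every neighbourhood of each endpoint, weak convergence forces $\liminf_N\lambda_{\max}(S)\ge(1+\sqrt y)^2$ and $\limsup_N\lambda_{\min}(S)\le(1-\sqrt y)^2$ almost surely.

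The substance is the complementary ``no outliers'' direction, $\limsup_N\lambda_{\max}(S)\le(1+\sqrt y)^2$ and $\liminf_N\lambda_{\min}(S)\ge(1-\sqrt y)^2$ a.s., which I would prove by the truncation-and-moment-method of Bai and Yin. First, truncate and recenter the entries at a scale $\delta_N\sqrt N$ with $\delta_N\downarrow0$ slowly; using only the finite fourth moment $\E{a_{11}^4}<\infty$, singular-value perturbation and rank inequalities together with Borel--Cantelli show that passing from $A$ to the truncated-rescaled matrix $\widetilde A$ changes $\lambda_{\max}$ and $\lambda_{\min}$ of the corresponding covariance matrix $\widetilde S$ by $o(1)$ almost surely -- this is the single place the fourth-moment hypothesis is genuinely used. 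For the now-bounded matrix $\widetilde A$ one estimates high even moments $\E{{\rm tr}(\widetilde S^{2k})}$, expanded as a sum over closed walks on the bipartite index graph; the classical combinatorics shows the non-tree walks are negligible once the power $k=k_N$ grows with $N$ (say $k_N\sim\log N$ or a small power of $N$), giving $\mathbb{P}\!\left(\lambda_{\max}(\widetilde S)>(1+\sqrt y)^2+\epsilon\right)$ summable in $N$ and hence, by Borel--Cantelli, the a.s. upper bound. The lower bound $\liminf_N\lambda_{\min}(\widetilde S)\ge(1-\sqrt y)^2$ is obtained from the analogous but heavier trace estimate applied to the spectrum of $\widetilde S$ reflected about the centre of the bulk, i.e. controlling the largest eigenvalue of $(1+\sqrt y)^2I-\widetilde S$ away from the trivial zero modes present when $y<1$, again with $k_N\to\infty$.

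I expect this last step -- the almost-sure lower bound on $\lambda_{\min}$ -- to be the main obstacle: the walks contributing to the reflected trace no longer enjoy the clean nonnegativity structure available for $\lambda_{\max}$, and when $y$ is close to $1$ the lower edge $(1-\sqrt y)^2$ is close to $0$, so the truncation error and the walk-counting remainders must be controlled to higher relative precision for the conclusion to survive taking square roots. Since the statement is entirely classical, an economical alternative I would also record is to cite \cite{bai1993limit} for the general finite-fourth-moment case and to remark that the sub-Gaussian case already follows from the non-asymptotic bound of Theorem~\ref{thm:tall-matrix-spectra-concentrate} together with Borel--Cantelli.
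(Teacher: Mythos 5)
The paper does not prove this statement at all: Theorem~\ref{thm:baiyinvershy} is imported verbatim as a black-box citation of Bai--Yin \cite{bai1993limit} (via \cite[Theorem~5.31]{vershynin2010introduction}), so there is no internal proof to compare against, and your closing remark --- cite \cite{bai1993limit} for the fourth-moment case, or use Theorem~\ref{thm:tall-matrix-spectra-concentrate} plus Borel--Cantelli in the sub-Gaussian case --- is exactly what the authors do. Your sketch of an actual proof is the correct classical route and matches the structure of the cited source: reduce to $\lambda_{\min},\lambda_{\max}$ of $S=\tfrac1N A^TA$, get the ``inner'' inequalities from Marchenko--Pastur, and obtain the ``no outliers'' bounds by truncating the entries at $\delta_N\sqrt N$ (the only place $\E{a_{11}^4}<\infty$ enters) and estimating $\E{{\rm tr}\,\widetilde S^{k_N}}$ with $k_N\to\infty$; Bai--Yin in fact bound $\|S-(1+y_N)I\|$, which delivers both edges at once, and your reflected-spectrum device is a variant of the same idea. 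Two points deserve more care than your outline gives them. First, the conversion from $\lambda_{\max}(S)\to(1+\sqrt y)^2$ to $\sigma_{\max}(A)=\sqrt N+\sqrt n+o(\sqrt n)$ must be carried out with the running ratio $y_N=n/N$ rather than its limit $y$, and when $y=0$ the error term $o(\sqrt n)$ is genuinely stronger than the $o(\sqrt N)$ that a naive square-root expansion yields --- you flag this but do not resolve it, and it is not a triviality. Second, the statement as printed in the paper contains a typo (both displayed formulas read $\sigma_{\min}(A)$; the second should be $\sigma_{\max}(A)$), which your proposal silently and correctly repairs. As a proof plan your proposal is sound; as written it is an outline whose combinatorial core (the walk counting for growing powers $k_N$ and the truncation error estimates) is asserted rather than executed, which is acceptable only because the result is classical and the paper itself relies on the citation.
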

The following concentration result, recorded herein verbatim from Vershynin~\cite{vershynin2010introduction}, will be beneficial for our approximate stationarity analysis. 

\begin{theorem}(\cite[Theorem~5.44]{vershynin2010introduction})\label{thm:vershy}
Let $A$ be an $N\times n$ matrix whose rows $A_i$ are independent random vectors in $\R^n$ with the common second moment matrix $\Sigma=\mathbb{E}[A_iA_i^T]$. Let $m$ be a number such that $\|A_i\|_2\leqslant \sqrt{m}$ almost surely for all $i$. Then, for every $t\geqslant 0$, the following inequality holds with probability at least $1-n\cdot \exp(-ct^2)$:
$$
\left\|\frac{1}{N} A^T A - \Sigma\right\|\leqslant \max\left(\|\Sigma\|^{1/2}\delta,\delta^2\right)\quad\text{where}\quad \delta=t\sqrt{m/N}.
$$
Here, $c>0$ is an absolute constant. 
\end{theorem}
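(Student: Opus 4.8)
The plan is to realize $\frac1N A^TA-\Sigma$ as a sum of independent, centered, symmetric random matrices and to invoke the matrix Bernstein inequality. Write $\frac1N A^TA-\Sigma=\sum_{i=1}^N Z_i$ where $Z_i=\frac1N\big(A_iA_i^T-\Sigma\big)$; each $Z_i$ is a symmetric $n\times n$ matrix with $\mathbb{E}[Z_i]=0$. First I would bound the two ingredients that matrix Bernstein consumes. For the uniform bound: since $A_iA_i^T\succeq 0$ with $\|A_iA_i^T\|=\|A_i\|_2^2\le m$ almost surely, taking expectations gives $\Sigma\preceq mI_n$, hence $\|\Sigma\|\le m$, and therefore $\|Z_i\|\le\frac1N\big(\|A_iA_i^T\|+\|\Sigma\|\big)\le \frac{2m}{N}=:L$ almost surely. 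For the matrix variance: using $A_i^TA_i=\|A_i\|_2^2$ and $\mathbb{E}[A_iA_i^T]=\Sigma$ to expand the square,
\[
\mathbb{E}\big[(A_iA_i^T-\Sigma)^2\big]=\mathbb{E}\big[\|A_i\|_2^2\,A_iA_i^T\big]-\Sigma^2\preceq m\Sigma ,
\]
so $\big\|\sum_{i=1}^N\mathbb{E}[Z_i^2]\big\|\le \frac{m\|\Sigma\|}{N}=:\sigma^2$.

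Second, matrix Bernstein (applied to the operator norm, hence with a leading factor $2n$) gives, for every $s\ge 0$,
\[
\mathbb{P}\Big(\Big\|\tfrac1N A^TA-\Sigma\Big\|\ge s\Big)\le 2n\exp\!\Big(\frac{-s^2/2}{\sigma^2+Ls/3}\Big).
\]
I would then substitute $s=\max\big(\|\Sigma\|^{1/2}\delta,\delta^2\big)$ with $\delta=t\sqrt{m/N}$, using the identities $\|\Sigma\|^{1/2}\delta=\sigma t$ and $\delta^2=t^2m/N$, and split into two cases. If $\delta\le\|\Sigma\|^{1/2}$, then $s=\sigma t$ and $Ls/3=\tfrac{2}{3}\sigma^2\,(\delta/\|\Sigma\|^{1/2})\le\tfrac23\sigma^2$, so the denominator is at most $\tfrac53\sigma^2$ and the exponent is at most $-c_0 t^2$. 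If $\delta>\|\Sigma\|^{1/2}$, then $s=\delta^2$ and $\delta^2>\|\Sigma\|$ forces $\sigma^2<\tfrac{m}{N}\delta^2=\tfrac32(Ls/3)$, so the denominator is a bounded multiple of $Ls/3$ and again the exponent is at most $-c_0 t^2$. Either way $\mathbb{P}(\cdot\ge s)\le 2n\exp(-c_0t^2)$, and the leading factor $2$ is absorbed into the final constant $c$: for $t$ below an absolute threshold the asserted bound $1-n\exp(-ct^2)$ is vacuous, and above it one has $2\exp(-c_0t^2)\le\exp(-ct^2)$.

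The argument is essentially routine once matrix Bernstein is taken as a black box; the only points requiring care are the case analysis that recovers the precise form $\max(\|\Sigma\|^{1/2}\delta,\delta^2)$ from the Bernstein denominator, and the bookkeeping of absolute constants. I expect the mildly delicate step to be verifying that the variance term $\sigma^2$ dominates in the small-$\delta$ regime while the deviation term $Ls/3$ dominates in the large-$\delta$ regime, since this dichotomy is exactly what produces the two branches of the maximum. An alternative — the route taken in Vershynin's book — is to first symmetrize the sum with Rademacher signs and then apply Rudelson's deviation inequality for sums of rank-one terms $\varepsilon_iA_iA_i^T$; this avoids citing matrix Bernstein but is longer, so I would prefer the Bernstein argument above.
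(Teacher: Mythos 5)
This statement is an imported auxiliary result: the paper cites it verbatim from Vershynin and supplies no proof of its own, so there is nothing internal to compare against. Your matrix-Bernstein derivation is correct and is essentially the standard proof of this theorem (Vershynin's own argument for it also runs through the non-commutative Bernstein inequality with exactly your bounds $\|Z_i\|\leqslant 2m/N$ and $\|\sum_i\E{Z_i^2}\|\leqslant m\|\Sigma\|/N$, followed by the same two-case substitution $s=\max(\|\Sigma\|^{1/2}\delta,\delta^2)$); your closing aside that his route is symmetrization plus Rudelson's inequality is inaccurate, but that remark plays no role in the argument.
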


Finally, we make use of the matrix-operator version of the H\"{o}lder's inequality:
\begin{theorem}\label{thm:matrix-holder}
For any matrix $U\in\R^{k\times \ell}$, let $\|U\|_{\sigma_p}$ be the $\ell_p$ norm of the vector \[
(\sigma_1(U),\dots,\sigma_{\min\{k,\ell\}}(U))
\] of singular values of $U$. Then, for any $p,q>0$ with  $\frac1p+\frac1q=1$, it holds that
\[
|\ip{U}{V}|=|{\rm trace}(U^T V)|\leqslant \|U\|_{\sigma_p}\|V\|_{\sigma_q}.
\]
\end{theorem}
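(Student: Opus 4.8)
The plan is to factor the bound through two classical ingredients: (i) von Neumann's trace inequality, $|{\rm trace}(U^TV)| \le \sum_{i}\sigma_i(U)\,\sigma_i(V)$, where on both sides the singular values are listed in non-increasing order, and (ii) the ordinary (scalar) H\"older inequality applied to the two nonnegative vectors of singular values. Here $V$ is of course also taken in $\R^{k\times\ell}$, so that $U^TV$ is square and its trace is defined; replacing $U,V$ by their zero-paddings we may assume $k=\ell$ without changing the trace, the singular values, or the Schatten quantities $\|\cdot\|_{\sigma_p}$.

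For (i), I would write singular value decompositions $U=P\Sigma_UQ^T$ and $V=R\Sigma_VS^T$ with $P,Q,R,S$ orthogonal and $\Sigma_U,\Sigma_V$ diagonal carrying the singular values. Cyclicity of the trace gives ${\rm trace}(U^TV)={\rm trace}(\Sigma_U M\Sigma_V N)$ for the orthogonal matrices $M=P^TR$, $N=S^TQ$, and expanding coordinatewise this equals $\sum_{i,j}\sigma_i(U)\sigma_j(V)\,M_{ij}N_{ji}$. Since $|M_{ij}N_{ji}|\le\tfrac12(M_{ij}^2+N_{ji}^2)=:D_{ij}$ and $D$ is doubly stochastic (the columns of $M$ and the rows of $N$ are unit vectors), one obtains $|{\rm trace}(U^TV)|\le\sum_{i,j}\sigma_i(U)\sigma_j(V)D_{ij}$; by the Birkhoff--von Neumann theorem $D$ is a convex combination of permutation matrices, and by the rearrangement inequality $\sum_i\sigma_i(U)\sigma_{\pi(i)}(V)$ is maximized over permutations $\pi$ by the identity (both sequences being sorted non-increasingly), so the right-hand side is at most $\sum_i\sigma_i(U)\sigma_i(V)$.

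For (ii), the scalar H\"older inequality with exponents $p,q$ gives $\sum_i\sigma_i(U)\sigma_i(V)\le\big(\sum_i\sigma_i(U)^p\big)^{1/p}\big(\sum_i\sigma_i(V)^q\big)^{1/q}=\|U\|_{\sigma_p}\|V\|_{\sigma_q}$. Chaining (i) and (ii) yields the claim.

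The only non-routine step is the von Neumann trace inequality in (i); everything else is bookkeeping. If one prefers to avoid invoking Birkhoff--von Neumann, the identity $\max\{{\rm trace}(\Sigma_UM\Sigma_VN):M,N\text{ orthogonal}\}=\sum_i\sigma_i(U)\sigma_i(V)$ can instead be obtained by a direct exchange argument on pairs of singular vectors. Since the statement is entirely standard --- it is the H\"older inequality for Schatten norms --- it would also be acceptable simply to cite a textbook such as Bhatia's \emph{Matrix Analysis}; I include the short argument above only for self-containedness.
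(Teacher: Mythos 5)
Your proof is correct. The paper itself states Theorem~\ref{thm:matrix-holder} as a known auxiliary fact and supplies no proof (it is only ever invoked with $p=1$, $q=\infty$, i.e.\ $|\ip{U}{V}|\leqslant\|U\|_{\sigma_1}\|V\|_{\sigma_\infty}$), so there is nothing to compare against; your argument is the standard one, namely von Neumann's trace inequality followed by scalar H\"older on the sorted singular value vectors. The details all check out: after zero-padding, the matrix $D_{ij}=\tfrac12(M_{ij}^2+N_{ji}^2)$ is indeed doubly stochastic because every row and every column of an orthogonal matrix is a unit vector, and Birkhoff--von Neumann plus the rearrangement inequality then give $|{\rm trace}(\Sigma_U M\Sigma_V N)|\leqslant\sum_i\sigma_i(U)\sigma_i(V)$. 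One small remark: as literally stated the hypothesis ``$p,q>0$ with $\tfrac1p+\tfrac1q=1$'' forces $p,q>1$ and so formally excludes the endpoint $p=1$, $q=\infty$ that the paper actually uses; your chain of inequalities covers that case too, since the H\"older step degenerates to $\sum_i\sigma_i(U)\sigma_i(V)\leqslant\sigma_{\max}(V)\sum_i\sigma_i(U)$. Citing a standard reference for the Schatten H\"older inequality would also have been acceptable, as you note.
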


\section{Proofs}\label{sec:proofs}
In this section, we present the proofs of the main results of this paper.
\subsection{Proofs of Auxiliary Results}
\subsubsection*{Proof of Theorem  \ref{thm:analytic-exp-pop-risk}} \label{sec:pf-thm:analytic-exp-pop-risk} 
First, we have
\begin{equation}\label{eq:super-useful}
f(W;X)-f(W^*;X)= X^T((W^*)^T W^* - W^T W)X \triangleq X^T A X,
\end{equation}
where $A=(W^*)^T W^* - W^T W\in \R^{d\times d}$ is a symmetric matrix. Note also that, 
\begin{equation}\label{eq:tr-of-A-squared}
 {\rm trace}(A)^2 = \sum_{i=1}^d A_{ii}^2+2\sum_{i<j}A_{ii}A_{jj},
\end{equation} 
and
\begin{equation}\label{eq:tr-of-A2}
{\rm trace}(A^2)={\rm trace}(A^T A)=\|A\|_F^2 = \sum_{i,j}A_{ij}^2 = \sum_{i=1}^d A_{ii}^2 + 2\sum_{i<j}A_{ij}^2,
\end{equation}
where $A^2$ is equal to $A^T A$, as $A$ is symmetric.

\begin{itemize}
\item[(a)] Recall Theorem \ref{thm:auxiliary}. In particular, if $\mathcal{L}(W)=0$, then we have $P(X)=X^T A X=0$ almost surely. Since $P(\cdot):\R^d \to \R$ a polynomial, it then follows that $P(X)=0$ identically. Now, since $A$ is symmetric, it has real eigenvalues, called $\lambda_1,\dots,\lambda_d$ with corresponding (real) eigenvectors $\xi_1,\dots,\xi_d$. Now, taking $X=\xi_i$, we have $X^T A X = \xi_i^T A\xi = \lambda_i \ip{\xi_i}{\xi_i}=0$. Since $\xi_i\neq 0$, we get $\lambda_i=0$ for any $i$. Finally, since $A=Q\Lambda Q^T$, it must necessarily be the case that $A=0$. Hence, $W^T W = (W^*)^T W^*$, which imply $W=QW^*$ for some $Q\in\R^{m\times m}$ orthonormal, per Theorem \ref{thm:auxiliary-2}.
\item[(b)] Using Equation (\ref{eq:super-useful}), we first have
$$
\mathcal{L}(W) = \sum_{1\leqslant i,j,i',j'\leqslant d}A_{ij}A_{i',j'}\E{X_iX_jX_{i'}X_{j'}}.
$$
Note that if $|\{i,j,i',j'\}|  \in\{3,4\}$, then $\E{X_iX_jX_{i'}X_{j'}}=0$, since $X$ has centered i.i.d. coordinates. Keeping this in mind, and carrying out the algebra we then get:
\begin{align*}
\mathcal{L}(W) &= \sum_{i=1}^d A_{ii}^2 \E{X_i^4} + 2\sum_{i<j}A_{ii}A_{jj}\E{X_i^2}\E{X_j^2} +4\sum_{i<j}A_{ij}^2 \E{X_i^2}\E{X_j^2}\\
& = \mu_4\sum_{i=1}^d  A_{ii}^2 + 2\mu_2^2 \sum_{i<j}A_{ii}A_{jj}+4\mu_2^2 \sum_{i<j}A_{ij}^2.
\end{align*}
Using now Equations (\ref{eq:tr-of-A-squared}) and (\ref{eq:tr-of-A2}), we get:
$$
\mathcal{L}(W) = (\mu_4-3\mu_2^2)\cdot {\rm trace}(A\circ A) +\mu_2^2 \cdot {\rm trace}(A)^2+ 2\mu_2^2 \cdot {\rm trace}(A^2),
$$
since $A_{ii}^2 = (A\circ A)_{ii}$. 
\item[(c)] Define $k$ to be such that $\mu_4-\mu_2^2=2k\mu_2^2$, namely, $k$ is  related to measures of dispersion pertaining $X_i$: $\sqrt{2k}$ is the coefficient of variation and $(2k+1)$ is the kurtosis associated to the random variable $X_i$. With this, we have:
$$
\mathcal{L}(W) = \mu_2^2 \cdot {\rm trace}(A)^2 +2\mu_2^2 \left(k \sum_{i=1}^d A_{ii}^2 + 2\sum_{i<j}A_{ij}^2\right).
$$
From here, the desired conclusion follows since 
$$
\mu_2^2 \cdot {\rm trace}(A)^2 +2\min\{k,1\}\mu_2^2 \left( \sum_{i=1}^d A_{ii}^2 + 2\sum_{i<j}A_{ij}^2\right)\leqslant \mathcal{L}(W),
$$
and
$$
\mu_2^2 \cdot {\rm trace}(A)^2 +2\max\{k,1\}\mu_2^2 \left( \sum_{i=1}^d A_{ii}^2 + 2\sum_{i<j}A_{ij}^2\right)\geqslant  \mathcal{L}(W),
$$
together with Equation (\ref{eq:tr-of-A2}). 
\end{itemize}
\subsubsection*{Proof of Lemma \ref{lemma:bd-data}}
For any fixed $i\in[N],j\in[d]$, note that using sub-Gaussian property one has $\mathbb{P}(|X_i(j)|>d^{K_1})\leqslant \exp(-Cd^{2K_1})$, thus $\mathbb{P}(\exists i\in[N],j\in[d]:|X_i(j)|>d^{K_1})\leqslant Nd\exp(-Cd^{2K_1})$, using union bound, which yields the conclusion. 
\subsubsection*{Proof of Lemma \ref{lemma:single-concentration}}
Let 
\[
\mathcal{E}_1\triangleq \left\{\|X_i\|_\infty<d^{K_1},1\leqslant i\leqslant N\right\}.
\]
By Lemma \ref{lemma:bd-data}, $\mathbb{P}(\mathcal{E}_1)\geqslant 1-Nd\exp(-Cd^{2K_1})$. Now, note that
\begin{equation}\label{eq:probbbb}
\mathbb{P}(\mathcal{E}(A)^c) = \mathbb{P}(\mathcal{E}(A)^c|\mathcal{E}_1)\mathbb{P}(\mathcal{E}_1) +\mathbb{P}(\mathcal{E}(A)^c|\mathcal{E}_1^c)\mathbb{P}(\mathcal{E}_1^c)  
\leqslant \mathbb{P}(\mathcal{E}(A)^c|\mathcal{E}_1) + N\exp(-Cd^{2K_1}).
\end{equation}
We now study $\mathbb{P}(\mathcal{E}(A)^c|\mathcal{E}_1)$, hence assume we condition of $\mathcal{E}_1$ from now on. Triangle inequality yields
\[
|Y_i-X_i^T AX_i| \leqslant |X_i^T A X_i | + |X_i^T (W^*)^T W^* X_i|.
\]
Observe now that
\[
\|X_iX_i\|_F^2={\rm trace}(X_iX_i^T X_iX_i^T)=\|X_i\|_2^2 {\rm trace}(X_iX_i^T)=\|X_i\|_2^4,
\]
which implies (conditional on $\mathcal{E}_1$)
\[
\|X_iX_i\|_F =\|X_i\|_2^2\leqslant d^{2K_1+1}.
\]
Now, Cauchy-Schwarz inequality with respect to inner product $\ip{U}{V}\triangleq {\rm trace}(U^T V)$  yields
\begin{align*}
    |X_i^T AX_i| = \ip{A}{X_iX_i^T}\leqslant \|A\|_F\|X_iX_i^T\|_F\leqslant d^{2K_1+2K_2+1},
\end{align*}
for every $i\in[N]$, using $\|A\|_F\leqslant d^{2K_2}$.

Next, let $A^*=(W^*)^T W^*\in\R^{d\times d}$, and let $\eta_1^*,\dots,\eta_d^*$ be the eigenvalues of $A^*$, all non-negative. Observe that
\[
\|W^*\|_F^2 = {\rm trace}(A^*)=\sum_{1\leqslant j\leqslant d}\eta_j^*\leqslant  d^{2K_2}.
\]
Now note that $(\eta_1^*)^2,(\eta_2^*)^2,\dots, (\eta_d^*)^2$ are the eigenvalues of $(A^*)^2 = (A^*)^T A^*$. With this reasoning, we have
\[
\|A^*\|_F^2 ={\rm trace}((A^*)^T A^*)={\rm trace}((A^*)^2) = \sum_{1\leqslant j\leqslant d}(\eta_j^*)^2\leqslant \left(\sum_{1\leqslant j\leqslant d}\eta_j^*\right)^2 \leqslant d^{4K_2}.
\]
Consequently, $\|A^*\|_F\leqslant d^{2K_2}$, and therefore, the exact same reasoning yields
\[
|X_i^T (W^*)^T W^* X_i|=X_i^T A^* X_i\leqslant d^{2K_1+2K_2+1},
\]
for  every $i\in[N]$. 
Hence, conditional on $\mathcal{E}_1$, it holds that for every $i\in[N]$:
\[
\left(X_i^TAX_i - X_i^T (W^*)^T W^* X_i\right)^2 \leqslant 4d^{4K_1+4K_2+2}.
\]
We now apply concentration to i.i.d. sum
\[
\frac1N\sum_{1\leqslant i\leqslant N}\left(X_i^TAX_i - X_i^T (W^*)^T W^* X_i\right)^2
\]
is a sum of bounded random variables that are at most $4d^{4K_1+4K_2+2}$. 

Now, recalling the distributional assumption on the data, we have that conditional on $\|X_i\|_\infty\leqslant d^{K_1}$, the data still has i.i.d. centered coordinates. In particular, the  ``energy barrier" result for the population risk as per Theorem \ref{thm:band-gap} applies: 
\[
\mathbb{E}\left[\left(X^TAX - X^T (W^*)^T W^* X\right)^2
\bigr\vert \mathcal{E}_1\right]\geqslant C_5 \sigma_{\min}(W^*)^4,
\]
where
\[
C_5 = \min\{\mu_4(K_1)-\mu_2(K_1)^2,2\mu_2(K_1)^2\},
\]
is controlled by the conditional moments of data coordinates. 

Finally applying Hoeffding's inequality for bounded random variables we arrive at
$$
\frac1N\sum_{1\leqslant i\leqslant N}\left(X_i^T A X_i - X_i^T (W^*)^T W^* X_i\right)^2\geqslant \frac12 C_5\sigma_{\min}(W^*)^4,
$$
with probability at least $1-\exp\left(-C_3 Nd^{-4K_1-4K_2-2}\right)$. Namely,
$$
\mathbb{P}(\mathcal{E}(A)^c|\mathcal{E}_1)\leqslant \exp(-C_3N d^{-4K_1-4K_2-2}).
$$
Returning to (\ref{eq:probbbb}), this yields
$$
\mathbb{P}(\mathcal{E}_A)\geqslant  1-\exp(-C_3Nd^{-4K_1-4K_2-2}) - Nd\exp(-Cd^{2K_1}),
$$
thus concluding the proof. 
\subsubsection*{Proof of Lemma \ref{lemma:cov-number}}
The proof is almost verbatim from \cite[Lemma~3.1]{candes2011tight},  and included herein for completeness.

Note that any $A\in\R^{d\times d}$, $A\succeq 0$ and  ${\rm rank}(A)=r$ decomposes as $A=Q\Lambda Q^T$, where $Q\in\R^{d\times r}$ satisfying $Q^TQ=I_d$, and $\Lambda\in\R^{r\times r}$, a diagonal matrix with non-negative diagonal entries. Notice, furthermore, that $\|A\|_F=\|\Lambda\|_F\leqslant R$ as $Q$ is orthonormal. With this, we now construct an appropriate net covering the set of all permissible $Q$ and $\Sigma$. 

Let $D$ be the set of all $r\times r$ diagonal matrices with non-negative diagonal entries with Frobenius norm at most $R$. Let $\bar{D}$ be an $\frac{\epsilon}{3}-$net for $D$ in Frobenius norm. Using standard results (see, e.g. \cite[Lemma~5.2]{vershynin2010introduction}), we have 
$$
|\bar{D}|\leqslant \left(\frac{9R}{\epsilon}\right)^r.
$$
Now let $O_{d,r}=\{Q\in\R^{d\times r}:Q^TQ=I_d\}$. To cover $O_{d,r}$ we use a more convenient norm $\|\cdot\|_{1,2}$ defined as
$$
\|X\|_{1,2}=\max_i \|X_i\|_2,
$$
where $X_i$ is the $i^{\rm th}$ column of $X$. Define $Q_{d,r}=\{X\in\R^{d\times r}:\|X\|_{1,2}\leqslant 1\}$. Note that $O_{d,r}\subset Q_{d,r}$. Furthermore, observe also that $Q_{d,r}$ has an $\epsilon-$net of cardinality at most $(3/\epsilon)^{dr}$. With this, we now take $\bar{O}_{d,r}$ to be an $\frac{\epsilon}{3R}-$net for $O_{d,r}$. Consider now the set
$$
\bar{S}_R \triangleq \{\bar{Q}\bar{\Lambda}\bar{Q}^T:\bar{Q}\in\bar{O}_{d,r},\bar{\Lambda}\in\bar{D}\}.
$$
Clearly, 
$$
|\bar{S}_R|\leqslant |\bar{O}_{d,r}||\bar{D}| \leqslant (9R/\epsilon)^{dr+r}.
$$
We now claim $\bar{S}_R$ is indeed an $\epsilon-$net for $S_R$ in Frobenius norm. 
To prove this, take an arbitrary $A\in S_R$, and let $A=Q\Lambda Q^T$. There exists a $\bar{Q}\in \bar{O}_{d,r}$, and a $\bar{\Sigma}\in\bar{D}$ such that $\|\Sigma-\bar{\Sigma}\|_F\leqslant \epsilon/3$, and $\|Q-\bar{Q}\|_{1,2}\leqslant \epsilon/3R$. Now, let $\bar{A}=\bar{Q}\bar{\Sigma}\bar{Q}^T$. Observe that using triangle inequality
\begin{align*}
\|\bar{A}-A\|_F &= \|Q\Lambda Q^T-\bar{Q}\bar{\Lambda}\bar{Q}^T\|_F \\
&\leqslant \|Q\Lambda Q^T-\bar{Q}\Lambda Q^T\|_F +\|\bar{Q}\Lambda Q^T-\bar{Q}\bar{\Lambda}Q^T\|_F+\|\bar{Q}\bar{\Lambda}Q^T-\bar{Q}\bar{\Lambda}\bar{Q}^T\|_F.
\end{align*}
For the first term, notee that since $Q$ is orthonormal, $\|(Q-\bar{Q})\Lambda Q^T\|_F=\|(Q-\bar{Q})\Lambda\|_F$. Next,
$$
\|(Q-\bar{Q})\Lambda\|_F^2 =\sum_{1\leqslant i\leqslant d}\Lambda_{ii}^2 \|Q_i-\bar{Q}_i\|_2^2\leqslant \|Q-\bar{Q}\|_{1,2}^2 \|\Sigma\|_F^2\leqslant (\epsilon/3)^2,
$$
using  $\|Q-\bar{Q}\|_{1,2}\leqslant \epsilon/3R$ and $\|\Sigma\|_F\leqslant R$. Thus, $\|Q\Lambda Q^T-\bar{Q}\Lambda Q^T\|_F\leqslant \epsilon/3$. Similarly, we also have $\|\bar{Q}\bar{\Lambda}Q^T-\bar{Q}\bar{\Lambda}\bar{Q}^T\|_F\leqslant \epsilon/3$. Finally, $\|\bar{Q}\Lambda Q^T-\bar{Q}\bar{\Lambda}Q^T\|_F=\|\Lambda Q^T-\bar{\Lambda}Q^T\|_F=\|\Lambda-\bar{\Lambda}\|_F\leqslant \epsilon/3$ using again the facts that $Q$ and $\bar{Q}$  are both orthonormal. This concludes that $\|\bar{A}-A\|_F\leqslant \epsilon$; thus $|\bar{S}_R|$ is indeed an $\epsilon-$net for $S_R$, in Frobenius norm, of cardinality at most $(9R/\epsilon)^{dr+r}$. 

As a side remark observe that we gain an extra factor of $2$ in the exponent owing to the fact that $A$ is positive semidefinite (otherwise the bound would be $(9R/\epsilon)^{2dr+r}$). 

\subsection{Proof of Theorem \ref{thm:energy-barrier-empirical}}\label{sec:pf-thm:energy-barrier-empirical} 
First, let 
$$
\mathcal{S}_1 \triangleq \left\{W\in\R^{m\times d}:{\rm rank}(W)<d,\risk{W}<\frac12 C_5 \sigma_{\min}(W^*)^4\right\}.
$$
We start with the following claim.
\begin{claim}\label{claim:bounded-norm-W-belowbarrier}
In the setting of Theorem~\ref{thm:energy-barrier-empirical} the following holds. For any $W\in\R^{m\times d}$ with $\risk{W} \leqslant \frac12 C_5\sigma_{\min}(W^*)^4$, it holds that with probability at least $1-2\exp(-C'N)$ for some absolute constant $C'>0$,
$$
\|W\|_F\leqslant d^{K_2+1}.
$$
\end{claim}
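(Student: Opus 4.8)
The plan is to lower bound $\risk{W}$ by a quantity that grows with $\|W\|_F$, so that a small empirical risk forces $\|W\|_F$ to be small. The natural route is to compare $\risk{W}$ with the population risk $\mathcal{L}(W)$ and invoke Theorem \ref{thm:analytic-exp-pop-risk}(c), which gives $\mathcal{L}(W) \geqslant \mu_2^2\,{\rm trace}(A)^2$ with $A = (W^*)^TW^* - W^TW$; since ${\rm trace}(A) = \|W^*\|_F^2 - \|W\|_F^2$, this says that $\mathcal{L}(W)$ controls $\big(\|W\|_F^2 - \|W^*\|_F^2\big)^2$. Concretely: if $\mathcal{L}(W) \leqslant c$ then $\|W\|_F^2 \leqslant \|W^*\|_F^2 + \sqrt{c}/\mu_2$, and with $\|W^*\|_F \leqslant d^{K_2}$ and $c$ of constant order (the energy barrier value is $O(\sigma_{\min}(W^*)^4)$, which is $O(d^{4K_2})$ but also a fixed constant in the moments), one gets $\|W\|_F \leqslant d^{K_2+1}$ with room to spare. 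So the real content is to pass from the \emph{empirical} risk being small to the \emph{population} risk being small, uniformly enough for this single fixed $W$.

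First I would write $\risk{W} = \frac1N\sum_i (X_i^T A X_i)^2$ using \eqref{eq:super-useful}, and observe that $(X_i^T A X_i)^2$ has expectation $\mathcal{L}(W)$. The difficulty is that $A$ may have large norm, so the summands are not bounded by a constant; I cannot directly apply Hoeffding as in Lemma \ref{lemma:single-concentration}. The cleanest fix is a self-bounding/contradiction argument: suppose $\|W\|_F > d^{K_2+1}$. Then ${\rm trace}(A) = \|W^*\|_F^2 - \|W\|_F^2 < d^{2K_2} - d^{2K_2+2} < 0$ and $|{\rm trace}(A)|$ is large, specifically $|{\rm trace}(A)| \geqslant \|W\|_F^2 - d^{2K_2} \geqslant \tfrac12\|W\|_F^2$. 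Now $X_i^T A X_i = \sum_j \langle W_j^*, X_i\rangle^2 - \sum_j \langle W_j, X_i\rangle^2 = f(W^*;X_i) - f(W;X_i)$, and since $f(W;X_i) \geqslant 0$ always, we get $\risk{W} = \frac1N\sum_i (f(W^*;X_i) - f(W;X_i))^2 \geqslant \big(\frac1N\sum_i f(W;X_i) - \frac1N\sum_i f(W^*;X_i)\big)^2$ once $\frac1N\sum_i f(W;X_i) \geqslant \frac1N\sum_i f(W^*;X_i)$, by convexity of $t\mapsto t^2$ (Jensen, or just $(a-b)^2 \geqslant (\bar a - \bar b)^2$ when averaging with $b\geqslant 0$... more carefully one uses $\frac1N\sum (a_i - b_i)^2 \geqslant (\frac1N\sum(a_i-b_i))^2$ which is plain Jensen). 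Thus it suffices to show that $\frac1N\sum_i f(W;X_i)$ is much larger than $\frac1N\sum_i f(W^*;X_i)$ when $\|W\|_F$ is large.

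For that, note $\frac1N\sum_i f(W;X_i) = {\rm trace}\big(W^TW \cdot \widehat\Sigma\big)$ where $\widehat\Sigma = \frac1N\sum_i X_iX_i^T$, and by a standard concentration bound for sample covariance matrices of sub-Gaussian vectors, $\widehat\Sigma \succeq \tfrac12\mu_2 I_d$ with probability at least $1 - 2\exp(-C'N)$ (here I would cite the sub-Gaussian covariance estimation bound, e.g.\ via Theorem \ref{thm:vershy} or a truncation argument; the constant depends only on the sub-Gaussian norm). On that event $\frac1N\sum_i f(W;X_i) \geqslant \tfrac12\mu_2 \|W\|_F^2$, while $\frac1N\sum_i f(W^*;X_i) \leqslant \|\widehat\Sigma\|\,\|W^*\|_F^2 \leqslant C''\mu_2 d^{2K_2}$ on the same (or an intersecting) high-probability event. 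Hence $\risk{W} \geqslant \big(\tfrac12\mu_2\|W\|_F^2 - C''\mu_2 d^{2K_2}\big)^2$, which for $\|W\|_F > d^{K_2+1}$ and $d$ large is at least, say, $\tfrac19\mu_2^2 d^{4K_2+4}$ — far exceeding the energy-barrier threshold $\tfrac12 C_5\sigma_{\min}(W^*)^4 = O(d^{4K_2})$. This contradicts $\risk{W} \leqslant \tfrac12 C_5\sigma_{\min}(W^*)^4$, so $\|W\|_F \leqslant d^{K_2+1}$. The main obstacle is purely the concentration of $\widehat\Sigma$ from below for sub-Gaussian coordinates with the stated $1 - \exp(-\Omega(N))$ probability; everything else is deterministic algebra, and one should double-check that the numeric constants ($\tfrac12$, $C''$, the resulting $\tfrac19$) compose correctly and that the $d^{2K_2}$ versus $d^{4K_2}$ comparison indeed goes the right way for all $d$ beyond an absolute threshold (absorbing small $d$ into constants).
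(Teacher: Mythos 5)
Your proof is correct and is essentially the paper's argument: both reduce the squared risk to the first moment via Jensen/Cauchy--Schwarz, write $\frac1N\sum_i f(W;X_i)={\rm trace}(W^TW\widehat\Sigma)$, invoke sub-Gaussian sample-covariance concentration to sandwich this between $(1\pm\epsilon)\mu_2\|W\|_F^2$ (and similarly for $W^*$), and then compare $d^{2K_2+2}$ against $d^{2K_2}$ and the $O(d^{4K_2})$ barrier. The only cosmetic difference is that you argue by contradiction while the paper solves directly for $\|W\|_F$.
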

\begin{proof}{(of Claim \ref{claim:bounded-norm-W-belowbarrier})}

For convenience, let $\riskk_0\triangleq \frac12 C_5\sigma_{\min}(W^*)^4$, and for the random data vector $X=(X_1,\dots,X_d)\in\R^d$ let $\sigma^2=\mathbb{E}[X_1^2]$. Recall that $X$ has i.i.d. centered coordinates with sub-Gaussian coordinate distribution.

We have the following, where the  implication is due to Cauchy-Schwarz:
$$
\riskk_0\geqslant \frac1N \sum_{1\leqslant i\leqslant N}(Y_i-f(X_i;W))^2 \Rightarrow (\riskk_0)^{1/2}\geqslant \left|\frac1N\sum_{1\leqslant i\leqslant N}(Y_i-f(X_i;W))\right|
$$

We now establish that with probability at least $1-2\exp(-t^2d)$, the following holds, provided $N\geqslant C(t/\epsilon)^2d$: {\bf for every} $W\in\R^{m\times d}$,
$$
\left|\frac1N \sum_{1\leqslant i\leqslant N}X_i^T W^T WX_i - \sigma^2\|W\|_F^2\right|\leqslant \epsilon\sigma^2\|W\|_F^2.
$$
To see this, we begin by  noticing $X_i^T W^T WX_i = {\rm  trace}(X_i^T W^T WX_i)=\ip{W^TW}{X_iX_i^T}$. Using this we have
$$
\left|\frac1N \sum_{1\leqslant i\leqslant N}X_i^T W^T WX_i - \sigma^2\|W\|_F^2\right| = \left|\left\langle W^TW, \frac1N \sum_{1\leqslant i\leqslant N}X_iX_i^T -\sigma^2 I_d\right\rangle\right|.
$$
We now use H\"{o}lder's inequality Theorem \ref{thm:matrix-holder} with $p=1,q=\infty$, $U=W^TW$ and $V=\frac1N\sum_i X_iX_i^T-\sigma^2 I_d$. This yields
$$
\left|\left\langle W^TW, \frac1N \sum_{1\leqslant i\leqslant N}X_iX_i^T -\sigma^2 I_d\right\rangle\right|\leqslant  \|W\|_F^2 \left\|\frac1N \sum_{1\leqslant i\leqslant N}X_iX_i^T -\sigma^2 I_d\right\|.
$$
Observing now $\mathbb{E}[X_iX_i^T]=\sigma^2 I_d$, we have 
$$
 \left\|\frac1N \sum_{1\leqslant i\leqslant N}X_iX_i^T -\sigma^2 I_d\right\|\leqslant \epsilon\sigma^2
$$
with probability at least $1-2\exp(-t^2d)$ provided $N\geqslant C(t/\epsilon)^2 d$, using the concentration result on sample covariance matrix from Vershynin \cite[Corollary~5.50]{vershynin2010introduction}. 
Hence, on this high probability event, the following holds:
$$
\frac1N \sum_{1\leqslant i\leqslant N}X_i^T (W^*)T W^* X_i \leqslant \sigma^2(1+\epsilon)\|W^*\|_F^2 \quad\text{and}\quad 
\frac1N \sum_{1\leqslant i\leqslant N}X_i^T W^T W X_i \geqslant \sigma^2(1-\epsilon)\|W\|_F^2
$$
Hence,
$$
\riskk_0 \geqslant \frac1N\sum_{1\leqslant i\leqslant N}(X_i^T W^T WX_i - X_i^T (W^*)^T W^* X_i)\geqslant \sigma^2(1-\epsilon)\|W\|_F^2 - \sigma^2(1+\epsilon)\|W^*\|_F^2.
$$
This yields, for any $W$ with $\risk{W}\leqslant \riskk_0$, 
$$
\|W\|_F\leqslant \left(\frac{(\riskk_0)^{1/2}}{\sigma^2(1-\epsilon)}+\frac{1+\epsilon}{1-\epsilon}\|W^*\|_F^2\right)^{1/2}
$$
with probability at least $1-2\exp(-t^2d)$. Now, observe that
$$
\sigma_{\min}(W^*)^2 = \lambda_{\min}((W^*)^T W^*) \leqslant {\rm trace}((W^*)^T W^*)\leqslant \|W^*\|_F^2\leqslant d^{2K_2}.
$$
Furthermore, $C_5=O(1)$. This yields
\begin{equation}\label{eq:initial-risk-poly}
\riskk_0=\frac12C_5 \sigma_{\min}(W^*)^4 =O(d^{4K_2}). 
\end{equation}
We now take $\epsilon=1/2$ above, and conclude that 
\begin{align*}
\|W\|_F\leqslant \left(\frac{(\riskk_0)^{1/2}}{\sigma^2(1-\epsilon)}+\frac{1+\epsilon}{1-\epsilon}\|W^*\|_F^2\right)^{1/2} \leqslant  d^{K_2+1}
\end{align*}
for $d$ large enough; with probability at least $1-2\exp(-t^2d)$, which is at least $1-2\exp(-C'N)$ for some constant $C'$ as $N\geqslant C(t/\epsilon)^2d$. 
\end{proof}
Let now
$$
\mathcal{S}_2\triangleq \left\{W\in\R^{m\times d}:{\rm  rank}(W)<d,\risk{W}<\frac12 C_5\sigma_{\min}(W^*)^4,\|W\|_F\leqslant d^{K_2+1}\right\}.
$$
A consequence of Claim~\ref{claim:bounded-norm-W-belowbarrier} is that $\mathbb{P}(\mathcal{S}_1=\mathcal{S}_2)\geqslant 1-2\exp(-C'N)$. We now establish that 
$$
\mathbb{P}(\mathcal{S}_2=\varnothing) \geqslant 1-\left(9d^{4K_1+4K_2+7}\right)^{d^2-1}\left(\exp\left(-C_3 Nd^{-4K_1-4K_2-6}\right)+Nde^{-Cd^{2K_1}}\right),
$$
which, through the union bound, will then yield
$$
\inf_{W\in\R^{m\times d}:{\rm rank}(W)<d}\risk{W}\geqslant \frac12 C_5 \sigma_{\min}(W^*)^4,
$$
with probability at least
$$
1-\exp(-C'N)-\left(9d^{4K_1+4K_2+7}\right)^{d^2-1}\left(\exp\left(-C_3 Nd^{-4K_1-4K_2-6}\right)+Nde^{-Cd^{2K_1}}\right).
$$
Let $A=W^TW\in\R^{d\times d}$. We claim $\|A\|_F\leqslant d^{2K_2+2}$. To see this, note that $\|A\|_F^2 = {\rm trace}(A^T A)={\rm trace}(A^2)$. Let $\theta_1,\dots,\theta_d$ be the eigenvalues of $A$, all non-negative as $A\succeq 0$; and $\theta_1^2,\dots,\theta_d^2$ are the eigenvalues of $A^2$. With this,
$$
{\rm trace}(A^2)=\sum_{1\leqslant i\leqslant d}\theta_i^2 \leqslant \left(\sum_{1\leqslant i\leqslant d}\lambda_i\right)^2 = {\rm trace}(A)^2.
$$
Hence, $\|A\|_F\leqslant {\rm trace}(A)=\|W\|_F^2\leqslant d^{2K_2+2}$, as requested.

Now, let 
$$
S_R =\{A\in\R^{d\times d}:{\rm rank}(A)\leqslant d-1,A\succeq 0,\|A\|_F\leqslant  R\}.
$$
Let $\bar{S}_\epsilon$ be an $\epsilon-$net for $S_{d^{2K_2+2}}$ in  Frobenius norm, where $\epsilon$ to be tuned appropriately later. Using Lemma \ref{lemma:cov-number} we have
$$
|\bar{S}_\epsilon|\leqslant \left(\frac{9d^{2K_2}}{\epsilon}\right)^{d^2-1}.
$$
Using Lemma \ref{lemma:single-concentration}, together with the union bound across the net, it holds that with probability at least 
$$
1-\left(\frac{9d^{2K_2+2}}{\epsilon}\right)^{d^2-1} \left(\exp\left(-C_3\frac{N}{d^{4K_1+4K_2+6}}\right)+Ne^{-Cd^{2K_1}}\right)
$$
it is the case that 
$$
 \frac1N\sum_{1\leqslant i\leqslant N}\left(Y_i - X_i^T A X_i\right)^2 \geqslant \frac12 C_5 \sigma_{\rm min}(W^*)^4,
$$
for every $A\in \bar{S}_\epsilon$, where 
$$
C_5 = \min\{\mu_4(K_1)-\mu_2(K_1)^2,2\mu_2(K_1)^2\}
$$
and $\mu_n(K)=\mathbb{E}[X_i^n||X_i|\leqslant d^{K}]$. 

In the remainder  of the proof, suppose for every $A\in\bar{S}_\epsilon$,
$$
\frac1N\sum_{1\leqslant i\leqslant  N}(Y_i-X_i^T AX_i)^2 \geqslant \frac12 C_5\sigma_{\min}(W^*)^4,
$$
and $\|X_i\|_\infty <d^{K_1}$, $i\in[N]$, which holds with probability at least 
$$
1-\left(\frac{9d^{2K_2+2}}{\epsilon}\right)^{d^2-1} \left(\exp\left(-C_3\frac{N}{d^{4K_1+4K_2+6}}\right)+Ne^{-Cd^{K_1}}\right)-Nd\exp(-Cd^{2K_1}).
$$
Now, let $W\in\R^{m\times d}$ with $\|W\|_F\leqslant d^{K_2+1}$, ${\rm rank}(W)\leqslant d-1$. Let $A=W^T W$ (thus $\|A\|_F\leqslant d^{2K_2+2}$) and $\widehat{A}\in \bar{S}_\epsilon$ be such that $\|A-\widehat{A}\|_F\leqslant \epsilon$. We now estimate
$$
\Delta\triangleq \left|\frac1N \sum_{1\leqslant i\leqslant N}(Y_i-X_i^T AX_i)^2 - \frac1N\sum_{1\leqslant i\leqslant N}(Y_i-X_i^T\widehat{A}X_i)^2\right|.
$$
For notational convenience, let $A^*=(W^*)^T W^*$. Now
\begin{align*}
    \Delta& \leqslant \frac1N \sum_{1\leqslant i\leqslant N}\left|(X_i^T (A-A^*)X_i)^2 - (X_i^T (\widehat{A}-A^*)X_i)^2\right|\\ 
    &=\frac1N\sum_{1\leqslant i\leqslant N}\left|X_i^T (A-\widehat{A})X_i\right|\cdot \left| X_i^T (A+\widehat{A}-2A^*)X_i\right|.
\end{align*}
Now, using Cauchy-Schwarz (for inner product $\ip{M}{N}\triangleq {\rm trace}(M^T N)$)
$$
|X_i^T (A-\widehat{A})X_i| = |\ip{A-\widehat{A}}{X_iX_i^T}|\leqslant \|A-\widehat{A}\|_F \cdot \|X_i\|_2^2,
$$
using $\|X_iX_i^T \|_F = \|X_i\|_2^2$. In particular, we obtain
$$
|X_i^T (A-\widehat{A})X_i |\leqslant \epsilon d^{2K_1+1}.
$$
For the term $|X_i^T (A+\widehat{A}-2A^*)X_i|$, we observe that  triangle inequality yields
$$
\|A+\widehat{A}-2A^*\|_F\leqslant 4d^{2K_2+2}.
$$
Thus
$$
|X_i^T (A+\widehat{A}-2A^*)X_i| \leqslant 4d^{2K_1+2K_2+3}.
$$
Using these, we obtain
$$
\left|\risk{W}-\frac1N\sum_{1\leqslant i\leqslant N}(Y_i-X_i^T \widehat{A}X_i)^2\right|\leqslant 4\epsilon d^{4K_1+2K_2+4}=O(d^{-1})=o_d(1),
$$
taking $\epsilon =d^{-4K_1-2K_2-5}$. Using finally the fact that
$$
\frac1N\sum_{1\leqslant i\leqslant N}(Y_i-X_i^T \widehat{A}X_i)^2 
$$
is bounded away from zero across the net $\bar{S}_\epsilon$, we conclude the proof.
\subsection{Proof of Theorem \ref{thm:band-gap}}\label{sec:proof-of-band-gap}
\begin{itemize}
   \item[(a)] Note first that using Theorem \ref{thm:analytic-exp-pop-risk} part $({\rm c})$, we have:
    $$
    \mathcal{L}(W)\geqslant \min\{{\rm Var}(X_i^2),2\E{X_i^2}^2\}{\rm trace}(A^2).
    $$
Now, fix any $W\in\mathbb{R}^{m\times d}$ with ${\rm rank}(W)<d$. Let $a_1\geqslant \cdots\geqslant a_d$ be the eigenvalues of $(W^*)^T W^*$; $b_1\geqslant\cdots\geqslant b_d$ be the eigenvalues $-W^TW$; and $\lambda_1\geqslant\cdots\geqslant \lambda_d$ be the eigenvalues of $(W^*)^T W^* - W^T W$. Since $W$ is rank-deficient, we have $b_1=0$. Furthermore, $a_d=\sigma_{\min}(W^*)^2$, since the eigenvalues of $(W^*)^T W^*$ are precisely the squares of the singular values of $W^*$. Now, recall the (Courant-Fischer) variational characterization of the eigenvalues \cite{horn2012matrix}. If $M$ is a $d\times d$ matrix with eigenvalues $c_1\geqslant \cdots \geqslant c_d$, then:
$$
c_1 = \max_{x:\|x\|_2=1}x^T M x\quad\text{and}\quad c_d = \min_{x:\|x\|_2=1}x^T M x.
$$
With this, fix an $x\in \R^d$ with $\|x\|_2=1$. Then,
$$
x^T((W^*)^T W^* - W^T W)x \geqslant \min_{x:\|x\|_2=1} x^T(W^*)^T W^*x +x^T(-W^TW)x = a_d + x^T(-W^TW)x.
$$
Since this inequality holds for every $x$ with $\|x\|_2=1$, we can take the max over all $x$, and arrive at,
$$
\lambda_1 = \max_{x:\|x\|_2=1}x^T((W^*)^T W^* - W^T W) x \geqslant a_d+b_1 =a_d \geqslant\sigma_{\min}(W^*)^2.
$$
Now, since $\lambda_1^2,\dots,\lambda_d^2$  are precisely the eigenvalues of $A^2$, we have ${\rm trace}(A^2)= \sum_{i=1}^d \lambda_i^2\geqslant \lambda_1^2$. Hence, for any $W$ with ${\rm rank}(W)<d$, it holds that:
$$
\mathcal{L}(W)\geqslant \min\left\{{\rm Var}(X_i^2),2\E{X_i^2}^2\right\}\lambda_1^2.
$$
Finally, since $\lambda_1^2\geqslant \sigma_{\min}(W^*)^4$, the desired conclusion follows by taking the minimum over all rank-deficient $W$.
\item[(b)] 
Let the eigenvalues of $(W^*)^T W^*$ be denoted by $\lambda_1^*,\dots,\lambda_d^*$, with the corresponding orthogonal eigenvectors $q_1^*,\dots,q_d^*$. Namely,  diagonalize $(W^*)^T W^*$ as $Q^*\Lambda^*(Q^*)^T$ where the columns of $Q^*\in\R^{d\times d}$ are $q_1^*,\dots,q_d^*$, and $\Lambda^*\in\R^{d\times d}$ is a diagonal matrix with $(\Lambda^*)_{i,i}=\lambda_i^*$ for every $1\leqslant i\leqslant d$. Let
$$
\overline{W}=\sum_{j=1}^{d-1}\sqrt{\lambda_j^*}q_j^*(q_j^*)^T \in\R^{d\times d}.
$$
Observe that, $\overline{W}^T \overline{W}=Q^*\overline{\Lambda}Q^*$, where $\overline{\Lambda}\in\R^{d\times d}$ is a diagonal matrix with $(\overline{\Lambda})_{i,i}=(\Lambda^*)_{i,i}$ for every $1\leqslant i\leqslant d-1$, and $(\overline{\Lambda})_{d,d}=0$; and that, ${\rm rank}(\overline{W})=d-1$. Now, let $\overline{W_1},\dots,\overline{W_d}\in\R^d$  be the rows of $\overline{W}$, and fix a $j\in[d]$ such that $\overline{W_j}\neq 0$.  

Having constructed a $\overline{W}\in\R^{d\times d}$, we now prescribe $W\in\R^{m\times d}$ as follows. For $1\leqslant i\leqslant d$, $i\neq j$, let $W_i=\overline{W_i}$, where $W_i$ is the $i^{th}$ row of $W$. Then set $W_j=\frac12 \overline{W_j}$, and for every $d+1\leqslant i\leqslant m$, set $W_i = \frac{\sqrt{3}}{2\sqrt{m-d}}\overline{W_j}$. For this matrix, we now claim
\[
W^T W=\overline{W}^T \overline{W}.
\]   
To see this, fix an $X\in\R^d$, and recall that $X^T W^T W X - X^T \overline{W}^T \overline{W} X= \|WX\|_2^2 - \|\overline{W}X\|_2^2$. We now compute this quantity more explicitly:
\begin{align*}
 \|WX\|_2^2 - \|\overline{W}X\|_2^2&= \sum_{k=1}^d \ip{W_k}{X}^2 - \sum_{k=1}^m \ip{\overline{W}_k}{X}^2 \\
&= \sum_{k=1,k\neq j}^d \ip{W_k}{X}^2 + \ip{W_j}{X}^2 \\
&-\sum_{k=1,k\neq j}^d \ip{W_k}{X}^2  - \ip{\frac12 W_j}{X}^2 - \sum_{k=d+1}^m \ip{\frac{\sqrt{3}}{2\sqrt{m-d}}W_j}{X}^2 \\
&=\ip{W_j}{X}^2  -\frac14\ip{W_j}{X}^2 - \frac{3}{4(m-d)}(m-d) \ip{W_j}{X}^2 = 0.
\end{align*}
Hence, for every $X\in\R^d$, we have:
\[
X^T W^T W X = X^T \overline{W}^T \overline{W} X. 
\]
Now let $\Xi = W^T W - \overline{W}^T \overline{W}$. Note that $\Xi\in\R^{d\times d}$ is symmetric, and $X^T  \Xi X=0$ for every $X\in\R^d$. Now, taking $X$ to be $e_i$, that is, the $i^{th}$ element of the standard basis for the Euclidean space $\R^d$, we deduce $\Xi_{i,i}=0$ for every $i\in[d]$. For the off-diagonal entries, let $X=e_i +e_j$. Then, $X^T \Xi X  = \Xi_{i,i}+\Xi_{i,j}+\Xi_{j,i}+\Xi_{j,j}=0$, which, together with the fact that the diagonal entries of $\Xi$ are zero, imply $\Xi_{i,j}=-\Xi_{j,i}$; namely $\Xi$ is skew-symmetric. Finally, since $\Xi$ is also symmetric we have $\Xi_{i,j}=\Xi_{j,i}$, which then implies for every $i,j\in[d]$, $\Xi_{i,j}=0$, that is, $\Xi=0$, and thus, $W^T W = \overline{W}^T \overline{W}$.

Hence, we have for $W\in\R^{m\times d}$ with ${\rm rank}(W)=d-1$,
$$
W^T W - (W^*)^T W^* = Q^* \Lambda' (Q^*)^T,
$$
with $(\Lambda')_{i,i}=0$ for every $1\leqslant i\leqslant d-1$; and $(\Lambda')_{d,d}=-\lambda_d^*$. Namely, the spectrum of the matrix $A=(W^*)^T W^*-W^T W$ contains only two values: $0$ with multiplicity $d-1$, and $\lambda_d^*$ with multiplicity one. In particular,
$$
{\rm trace}(A) = \lambda_d^* \quad\text{and}\quad {\rm trace}(A^2) = (\lambda_d^*)^2.
$$
Using now the upper bound provided by Theorem (\ref{thm:analytic-exp-pop-risk}) part $({\rm c})$ yields the desired claim. Therefore, the energy band lower bound is tight, up to a  multiplicative constant. 
\end{itemize}

\subsection{Proof of Corollary \ref{coro-1}}\label{sec:proof-of-coro-1}
We do both parts together.
Let $W\in\R^{m\times d}$, and $\widetilde{f}(W;X)= \sum_{j=1}^m \widetilde{\sigma}(\ip{W_j}{X})$ where $\widetilde{\sigma}(x)=\alpha x^2+\beta x+\gamma$. Now,  note the decomposition: $\widetilde{f}(W;X)=\alpha f(W;X)+\beta g(W;X)+\gamma m$, where $f(W;X)=\sum_{j=1}^m \ip{W_j}{X}^2$ and $g(W;X)=\sum_{j=1}^m \ip{W_j}{X}$. 
In particular, defining:
\[
\Delta_f = f(W;X)-f(W^*;X) \quad\text{and}\quad \Delta_g = g(W;X)-g(W^*;X),
\]
we have $(\widetilde{f}(W;X)-\widetilde{f}(W^*;X))^2 = (\alpha\Delta_f+\beta\Delta_g)^2\geqslant \alpha^2\Delta_f^2 +2\alpha\beta\Delta_f \Delta_g$. Taking expectations on both sides with respect to $X$, we then have for
$$\widetilde{\mathcal{L}}(W)\triangleq \mathbb{E}[(\widetilde{f}(W;X)-\widetilde{f}(W^*;X))^2],
$$
it is the case that
\begin{equation}\label{eq:quad-lower}
\widetilde{\mathcal{L}}(W)\geqslant \alpha^2\mathcal{L}(W) + 2\alpha\beta\E{\Delta_f\Delta_g}=\alpha^2\mathcal{L}(W) +2\alpha\beta\sum_{1\leqslant i,j,k\leqslant d}\E{X_i X_jX_k}A_{ij}\theta_k =\alpha^2\mathcal{L}(W),
\end{equation}
where $\mathcal{L}(W)=\mathbb{E}[(f(W;X)-f(W^*;X))^2]$, when $f(W;X)=\sum_{1\leqslant j\leqslant m}\ip{W_j}{X}^2$, $A=W^T W-(W^*)^T W^*$, and $\theta_k = \sum_{j=1}^m W_{j,k}-W^*_{j,k}$. Taking the minimum over all rank deficient matrices in Equation (\ref{eq:quad-lower}), we arrive at:
\begin{align*}
 \min_{W\in\mathbb{R}^{m\times d}:{\rm rank}(W)<d} \widetilde{\mathcal{L}}(W)&\geqslant \alpha^2 \min_{W\in\mathbb{R}^{m\times d}:{\rm rank}(W)<d} \mathcal{L}(W)\\
 &\geqslant \alpha^2 \min\{{\rm  Var}(X_i^2),2\E{X_i^2}^2\}\cdot \sigma_{\min}(W^*)^4,
\end{align*} 
where the second inequality is due to Theorem \ref{thm:band-gap} $({\rm a})$. This concludes the population version.  
With this, the extension to empirical risk is immediate, by inspecting the proof of Theorem~\ref{thm:energy-barrier-empirical}. 
\subsection{Proof of Theorem \ref{thm:full-rank-empirical-global-opt} }\label{sec:pf-thm:full-rank-empirical-global-opt}
We start by computing $\nabla \risk{W}$. Taking  derivatives with respect to $j^{th}$ row $W_j$ of $W\in\R^{m\times d}$,  we arrive at
$$
\nabla_{W_j}\risk{W} = \frac4N \sum_{1\leq i \leq N}\left(\sum_{1\leq j\leq m}\ip{W_j}{X_i}^2-Y_i\right)\ip{W_j}{X_i}X_i.
$$
Interpreting these gradients as a row vector and aggregating into a matrix, we then have
$$
\nabla_W \risk{W} = W\left(\frac4N\sum_{1\leq i \leq N}\left(\sum_{1\leq j\leq m}\ip{W_j}{X_i}^2-Y_i\right)X_iX_i^T\right).
$$
Assume now that ${\rm rank}(W)=d$, and $\nabla \risk{W}=0$. We then arrive at
$$
\frac1N \sum_{1\leq i \leq N}\left(\sum_{1\leq j\leq m}\ip{W_j}{X_i}^2-Y_i\right)X_iX_i^T = 0.
$$
We now claim that $\risk{W}=0$. To see this, we take a route  similar to \cite[Lemma~6.1]{soltanolkotabi2018theoretical}.

Let $M\triangleq W^TW$, and consider  the function
$$
f(M)\triangleq \frac1N \sum_{1\leq i\leq N}(Y_i-X_i^T MX_i)^2.
$$
Observe that $f(\cdot)$ is quadratic in $M$. Thus, any $\widehat{M}$ with $\nabla f(\widehat{M})=0$, that is
$$
\frac1N\sum_{1\leq i\leq N}(X_i^T \widehat{M} X_i-Y_i)X_iX_i^T=0
$$
it is the case that $\widehat{M}$ is a global optimum of $f$. In particular for any $M\in \R^{d\times d}$,  $f(M)\geqslant f(\widehat{M})$. Now, take any $\bar{W}\in\R^{m\times d}$, and observe that $\risk{\bar{W}}=f(\bar{W}^T \bar{W})$. Since  $\nabla  f(W^T W)=0$, it follows that
$$
\risk{\bar{W}} = f(\bar{W}^T \bar{W})\geqslant  f(W^T W)=\risk{W}.
$$
Namely, $W$ is indeed a global optimizer of $\risk{\cdot}$. Since $W=W^*$ makes the cost zero, we  obtain $\risk{W}=0$.

Now, using Theorem \ref{thm:random-data-geo-cond}, we obtain that ${\rm span}(X_iX_i^T:1\leqslant i\leqslant N)$ is the set of all $d\times d$ symmetric matrices; with probability one, provided $N\geqslant d(d+1)/2$. In this case, using Theorem \ref{thm:geo-condition}, we conclude that $W^T W=(W^*)^T W^*$, concluding the proof. 


\subsection{Proof of Theorem \ref{thm:full-rank-global-opt}}\label{sec:proof-of-full-rank-global-opt}
We first establish the following proposition, for any $W$, which is a stationary point of the population risk.
\begin{proposition}\label{prop-stat-eq} Let $\mathcal{D}^*\in\R^{d\times d}$ be a diagonal matrix with $\mathcal{D}^*_{ii}=((W^*)^T W^*)_{ii}$, and define $\mathcal{D}\in\R^{d\times d}$ analogously. Then,   $W\in\R^{m\times d}$ enjoys the ``stationarity equation":
\begin{align*}
&(\mu_4-3\mu_2^2)W\mathcal{D}^* + \mu_2^2 W\|W^*\|_F^2 +2\mu_2^2(W(W^*)^T W^*)\\
&=(\mu_4-3\mu_2^2)W\mathcal{D} +\mu_2^2 W\|W\|_F^2 +2\mu_2^2(W(W^TW)).
\end{align*}
\end{proposition}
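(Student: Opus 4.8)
The plan is to bypass differentiating under the expectation and instead start from the closed-form expression for the population risk furnished by Theorem~\ref{thm:analytic-exp-pop-risk}(b), namely
$$\mathcal{L}(W) = \mu_2^2\,{\rm trace}(A)^2 + 2\mu_2^2\,{\rm trace}(A^2) + (\mu_4 - 3\mu_2^2)\,{\rm trace}(A\circ A),\qquad A = (W^*)^T W^* - W^T W,$$
which is a polynomial in $W$, and to simply compute $\nabla_W\mathcal{L}(W)$ and set it to zero. As a preliminary step I would record the two elementary identities ${\rm trace}(A) = \|W^*\|_F^2 - \|W\|_F^2$ and ${\rm trace}(A\circ A) = \sum_{i} A_{ii}^2 = \sum_i(\mathcal{D}^*_{ii} - \mathcal{D}_{ii})^2$, which isolate the $W$-dependence of each summand in a form that is convenient for the chain rule.

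Next I would differentiate the three terms one at a time via standard matrix calculus, dropping any pieces that depend only on $W^*$ since they contribute nothing to the gradient. For the first term, ${\rm trace}(A)$ is a scalar function of $\|W\|_F^2$, so $\nabla_W {\rm trace}(A)^2 = -4\,{\rm trace}(A)\,W$. For the second term, expand ${\rm trace}(A^2) = {\rm trace}(((W^*)^TW^*)^2) - 2\,{\rm trace}((W^*)^TW^*\,W^TW) + {\rm trace}((W^TW)^2)$ and use the identities $\nabla_W {\rm trace}(BW^TW) = 2WB$ for symmetric $B$ and $\nabla_W {\rm trace}((W^TW)^2) = 4WW^TW$ to get $\nabla_W {\rm trace}(A^2) = -4W(W^*)^TW^* + 4WW^TW = -4WA$. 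For the Hadamard term, differentiating $\sum_i(\mathcal{D}^*_{ii} - (W^TW)_{ii})^2$ column by column (each $(W^TW)_{ii}$ is the squared norm of the $i$-th column of $W$) gives $\nabla_W {\rm trace}(A\circ A) = -4W(\mathcal{D}^* - \mathcal{D})$.

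Assembling these, the condition $\nabla_W \mathcal{L}(W) = 0$ reads
$$-4\mu_2^2\,{\rm trace}(A)\,W - 8\mu_2^2\,WA - 4(\mu_4 - 3\mu_2^2)\,W(\mathcal{D}^* - \mathcal{D}) = 0;$$
dividing by $-4$, substituting ${\rm trace}(A) = \|W^*\|_F^2 - \|W\|_F^2$ and $A = (W^*)^TW^* - W^TW$, and collecting the terms carrying $W^*$ on one side and those carrying $W$ on the other produces exactly the claimed stationarity equation (using that $\mu_2^2\,W\|W^*\|_F^2$ is the same as the scalar multiple $\mu_2^2\|W^*\|_F^2\,W$).

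The argument contains no genuine obstacle: its entire content is the (careful) matrix differentiation. The one point that demands attention is the gradient of ${\rm trace}((W^TW)^2)$ together with the bookkeeping of which cross terms in ${\rm trace}(A^2)$ and ${\rm trace}(A\circ A)$ are constant versus $W$-dependent, since a stray sign or factor of two there corrupts the final identity. An alternative route would be to differentiate $\mathcal{L}(W) = \mathbb{E}[(X^TAX)^2]$ directly and then evaluate the resulting fourth-moment expectations of the i.i.d.\ coordinates, but this is strictly more work and merely reproduces the same algebra that Theorem~\ref{thm:analytic-exp-pop-risk} has already packaged.
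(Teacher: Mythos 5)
Your proof is correct, and it takes a genuinely different (and shorter) route than the paper's. The paper differentiates under the expectation sign coordinate-by-coordinate, i.e.\ it computes $\nabla_{k_0,\ell_0}\mathcal{L}(W)=\E{\nabla_{k_0,\ell_0}(f(W^*;X)-f(W;X))^2}$ via dominated convergence and then evaluates the resulting fourth-moment expectations $\E{\ip{W_j^*}{X}^2\ip{W_{k_0}}{X}X_{\ell_0}}$ from scratch, reassembling the sums into the matrices $W\mathcal{D}^*$, $\|W^*\|_F^2 W$ and $W(W^*)^TW^*$. You instead reuse the closed form of Theorem~\ref{thm:analytic-exp-pop-risk}(b) and apply matrix calculus to the polynomial $\mu_2^2\,{\rm trace}(A)^2+2\mu_2^2\,{\rm trace}(A^2)+(\mu_4-3\mu_2^2)\,{\rm trace}(A\circ A)$; I checked your three gradient identities ($\nabla_W{\rm trace}(A)^2=-4\,{\rm trace}(A)W$, $\nabla_W{\rm trace}(A^2)=-4WA$, $\nabla_W{\rm trace}(A\circ A)=-4W(\mathcal{D}^*-\mathcal{D})$) and the final assembly, and they are all right, including the signs and factors of two. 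Your approach buys economy: the moment computation is done only once (inside Theorem~\ref{thm:analytic-exp-pop-risk}) rather than repeated, and the diagonal structure of $\mathcal{D}^*-\mathcal{D}$ emerges automatically from the Hadamard term instead of from bookkeeping over index coincidences. The paper's approach buys an explicit entry-wise formula for each $\nabla_{k_0,\ell_0}\mathcal{L}(W)$ that does not presuppose part (b). The only point worth making explicit in your write-up is that the paper defines $\nabla\mathcal{L}(W)$ as $\E{\nabla(f(W^*;X)-f(W;X))^2}$, whereas you compute the ordinary gradient of the function $W\mapsto\mathcal{L}(W)$; these coincide by the same dominated-convergence interchange the paper invokes in its first line, so this is a one-sentence remark rather than a gap.
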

\begin{proof}
To that end, fix a $k_0\in[m]$ and $\ell_0\in [d]$. Note that, $\nabla_{k_0,\ell_0}\mathcal{L}(W)=\E{\nabla_{k_0,\ell_0}(f(W^*;X)-f(W;X))^2}$, using dominated convergence theorem. Next, $\E{\nabla_{k_0,\ell_0}(f(W^*;X)-f(W;X))^2}=0$ implies that, for every $k_0\in [m]$ and $\ell_0\in[d]$:
    $$
    \sum_{j=1}^m \E{\ip{W_j^*}{X}^2 \ip{W_{k_0}}{X}X_{\ell_0}} =  \sum_{j=1}^m \E{\ip{W_j}{X}^2 \ip{W_{k_0}}{X}X_{\ell_0}}.
    $$
 Note next that, $ \sum_{j=1}^m \E{\ip{W_j^*}{X}^2 \ip{W_{k_0}}{X}X_{\ell_0}}$ computes as,
    \begin{align*}
   & \mu_4\sum_{j=1}^m (W_{j,\ell_0}^*)^2 W_{k_0,\ell_0} + \mu_2^2\sum_{j=1}^m \sum_{1\leqslant\ell\leqslant d,\ell\neq \ell_0}W_{k_0,\ell_0}(W_{j,\ell}^*)^2 +2\mu_2^2\sum_{j=1}^m \sum_{1\leqslant\ell\leqslant d,\ell\neq \ell_0}W_{k_0,\ell}W_{j,\ell}^* W_{j,\ell_0}^*.
   \end{align*}
   We now put this object into a more convenient form. Notice that the expression above is
   $$
   (\mu_4-3\mu_2^2) A_{k_0,\ell_0}+ \mu_2^2 B_{k_0,\ell_0} + 2\mu_2^2 C_{k_0,\ell_0},
   $$
   where
   $$
   A_{k_0,\ell_0} = W_{k_0,\ell_0}\sum_{j=1}^m (W_{j,\ell_0}^*)^2 \quad\text{and}\quad B_{k_0,\ell_0}=\sum_{j=1}^m \sum_{\ell=1}^d W_{k_0,\ell_0}(W_{j,\ell}^*)^2 \quad\text{and}\quad C_{k_0,\ell_0}= \sum_{j=1}^m \sum_{\ell=1}^d W_{k_0,\ell} W_{j,\ell_0}^* W_{j,\ell}^*.
   $$
   Observe that, $B_{k_0,\ell_0} = W_{k_0,\ell_0}\|W^*\|_F^2$. We now study $A_{k_0,\ell_0}$ and  $C_{k_0,\ell_0}$ more carefully. Observe that $\sum_{j=1}^m (W_{j,\ell_0}^*)^2 = ((W^*)^T W^*)_{\ell_0,\ell_0}$. Now, let $\mathcal{D}^*\in\R{d\times d}$ be a diagonal matrix where $(\mathcal{D}^*)_{ij} = ((W^*)^T W^*)_{ii}$, if $i=j$; and $0$ otherwise. We then have $A_{k_0,\ell_0}=(W\mathcal{D}^*)_{k_0,\ell_0}$. 
   We now study $C_{k_0,\ell_0}$. Recall that $W_i^*$ is the $i^{th}$ row $W^*$.  Observe that, $\sum_{j=1}^m W_{j,\ell_0}^* W_{j,\ell}^*=((W^*)^TW^*)_{\ell_0,\ell}$. Hence, 
   $$
   \sum_{j=1}^m \sum_{\ell=1}^d W_{k_0,\ell}W_{j,\ell_0}^*W_{j,\ell}^*=\sum_{\ell=1}^d \sum_{j=1}^m W_{k_0,\ell}W_{j,\ell_0}^*W_{j,\ell}^*=\sum_{\ell=1}^d W_{k_0,\ell}((W^*)^TW^*)_{\ell_0,\ell}=(W((W^*)^T W^*))_{k_0,\ell_0},
   $$
   that is, $C_{k_0,\ell_0}=(W((W^*)^T W^*))_{k_0,\ell_0}$. Combining everything, we have that for every $k_0\in[m]$ and $\ell_0\in[d]$:
   $$
   \sum_{j=1}^m \E{\ip{W_j^*}{X}^2 \ip{W_{k_0}}{X}X_{\ell_0}} = (\mu_4-3\mu_2^2)(W\mathcal{D}^*)_{k_0,\ell_0} + \mu_2^2 W_{k_0,\ell_0}\|W^*\|_F^2 + 2\mu_2^2(W((W^*)^T W^*))_{k_0,\ell_0}.
   $$
    In particular, stationarity yields:
    \begin{equation}\label{eq:stationarityy}
    (\mu_4-3\mu_2^2)W\mathcal{D}^* + \mu_2^2 W\|W^*\|_F^2 + 2\mu_2^2(W((W^*)^T W^*)) = 
    (\mu_4-3\mu_2^2)W\mathcal{D} + \mu_2^2 W\|W\|_F^2 + 2\mu_2^2 W(W^TW),
   \end{equation}
   where the $d\times d$ diagonal matrix $\mathcal{D}$ is defined as $\mathcal{D}_{ii}=(W^T W)_{ii}$; and entrywise equalities are converted into equality of two matrices by varying $k_0\in[m]$ and $\ell_0\in[d]$. 
  \end{proof}
  Having now established the Proposition \ref{prop-stat-eq}  for the ''stationarity equation", we now study its implications for any full-rank $W$.

    Let $W\in \R^{m\times d}$ be a stationary point with ${\rm rank}(W)=d$. We first establish  $\|W\|_F=\|W^*\|_F$. Since $W\in \R^{m\times d}$ is a stationary point, it holds that for every $(k_0,\ell_0)\in[m]\times [d]$, $\nabla_{k_0,\ell_0}\mathcal{L}(W) = 0$. In particular, Equation (\ref{eq:stationarityy}) holds. 
    
        Recalling now that $W$ is full rank, it follows from the rank-nullity theorem that ${\rm ker}(W)$ is trivial, that is, ${\rm ker}(W)=\{0\}$. Hence, for matrices $M_1,M_2$ (with matching dimensions), whenever $WM_1=WM_2$ holds, we deduce $M_1=M_2$, since each column of $M_1-M_2$ is contained in ${\rm ker}(W)$. Thus, Equation (\ref{eq:stationarityy}) then yields:
    \begin{equation}\label{eq:take-trace-here}
    (\mu_4-3\mu_2^2)\mathcal{D}^* + \mu_2^2 \|W^*\|_F^2 I_d + 2\mu_2^2(W^*)^T W^*  = 
    (\mu_4-3\mu_2^2)\mathcal{D} + \mu_2^2 \|W\|_F^2 I_d + 2\mu_2^2 W^T W.
    \end{equation} 
    Next, note that ${\rm trace}(\mathcal{D^*})=\sum_{i=1}^d ((W^*)^T W^*)_{ii} = {\rm trace}((W^*)^T W^*) = \|W^*\|_F^2$, and similarly, ${\rm trace}(\mathcal{D}) = \|W\|_F^2$. In particular, taking traces of both sides in Equation (\ref{eq:take-trace-here}), we get
    $$
    (\mu_4-\mu_2^2)\|W^*\|_F^2 + \mu_2^2 d\|W^*\|_F^2= 
    (\mu_4-\mu_2^2)\|W\|_F^2 + \mu_2^2 d\|W\|_F^2,
    $$
    implying that $\|W^*\|_F^2=\|W\|_F^2$. Incorporating this into Equation (\ref{eq:take-trace-here}), we then arrive at:
    $$
    (\mu_4-3\mu_2^2)\mathcal{D}^* +2\mu_2^2 (W^*)^T W^* = 
    (\mu_4-3\mu_2^2)\mathcal{D} +2\mu_2^2 W^T W.
    $$
    Now, suppose $i\in[d]$. Note that inspecting $(i,i)$ coordinate above, we get:
    $$
    (\mu_4-3\mu_2^2)((W^*)^T W^*)_{ii} + 2\mu_2^2 ((W^*)^T W^*)_{ii}=
    (\mu_4-3\mu_2^2)(W^T W)_{ii} + 2\mu_2^2 (W^T W)_{ii}.
    $$
    Since $\mu_4-\mu_2^2={\rm Var}(X_i^2)>0$, we then get
    $$
    ((W^*)^T W^*)_{ii}=(W^T W)_{ii}.
    $$
    Now, focus on off-diagonal entries, by fixing $i\neq j$. Observe that since ${\rm Var}(X_i^2)>0$, it also holds $\E{X_i^2}=\mu_2>0$. Now note that, $\mathcal{D}^*_{ij} = \mathcal{D}_{ij}=0$ in this case. We then have,
    $$
    2\mu_2 ((W^*)^T W^*)_{ij} =2\mu_2(W^T W)_{ij}\Rightarrow (W^*)^T W^* = W^T W.
    $$
We conclude that the matrix $(W^*)^T W^*-W^TW$ is a zero matrix. Hence, $W=QW^*$ for some orthonormal $Q\in\R^{m\times m}$ per Theorem \ref{thm:auxiliary-2}, and $\mathcal{L}(W)=0$.

   \subsection{Proof of Theorem \ref{thm:gd-conv-empirical}}\label{sec:pf-thm:gd-conv-empirical}
   \subsubsection*{Part ${\rm (a)}$}
Note that by Claim~\ref{claim:bounded-norm-W-belowbarrier}, it follows that with probability at least $1-2\exp(-C'N)$, it is the case that for any $W$ with $\risk{W}\leqslant \risk{W_0}<\frac12 C_5\sigma_{\min}(W^*)^4$, $\|W\|_F\leqslant d^{K_2+1}$. 
Now let 
\begin{equation}\label{eq:upper-bd-normW-belowbarrier}
\mathcal{E}_1\triangleq \left\{\sup_{W:\risk{W}\leqslant \riskk_0}\|W\|_F\leqslant d^{K_2+1}\right\}
\end{equation}
thus $\mathbb{P}(\mathcal{E}_1)\geqslant 1-2\exp(-C'N)$ and
\begin{equation}\label{eq:highprobdatabd}
    \mathcal{E}_2\triangleq  \left\{\|X_i\|<d^{K_1},1\leqslant i\leqslant N\right\},
\end{equation}
such that $\mathbb{P}(\mathcal{E}_2)\geqslant 1-Nd\exp(-Cd^{2K_1})$ as per Lemma~\ref{lemma:bd-data}. 

Note that the $\|\nabla^2\risk{W}\| = {\rm poly}(\|W\|_F,\|X_1\|,\dots,\|X_N\|)$. Thus on the event $\mathcal{E}_1\cap \mathcal{E}_2$, which holds with probability at least $1-Nd\exp(-Cd^{2K_1})-2\exp(-C'N)$, we have that 
$$
L=\sup\left\{\|\nabla^2\risk{W}\|:\risk{W}\leqslant \riskk_0\right\}={\rm poly}(d)<+\infty
$$
as claimed. 

\subsubsection*{Part {\rm (b)}}
Suppose that the event $\mathcal{E}_1\cap\mathcal{E}_2$ (defined above in \eqref{eq:upper-bd-normW-belowbarrier} and \eqref{eq:highprobdatabd} takes place. 

We now run the gradient descent with a step size of $\eta<1/2L$: a second order Taylor expansion reveals that
$$
\risk{W_1}-\risk{W_0}\leqslant -\eta \|\nabla \risk{W_0}\|_F^2/2
$$
where $\nabla \risk{W}$ is the gradient of the empirical risk evaluated at $W$. In particular, $\risk{W_1}\leqslant \risk{W_0}$. Since $\mathcal{E}_1$ takes place, we conclude $\|\nabla^2 \risk{W_1}\|\leqslant L={\rm poly}(d)$, where $\|\nabla^2 \risk{W}\|$ is the spectral norm of the Hessian matrix $\nabla^2 \risk{W}$. From here, we induct on $k$: induction argument reveals that we can retain a step size of $\eta<1/2L$ (thus $\eta={\rm poly}(d)$), and furthermore along the trajectory $\{W_k\}_{k\geqslant 0}$, it holds:
$$
\risk{W_{k+1}} - \risk{W_k}\leqslant -\eta \|\nabla \risk{W_k}\|_F^2/2.
$$
Now let $T$ be the first time for which $\|\nabla \risk{W}\|_F\leqslant \epsilon$, namely the horizon required to arrive at an $\epsilon-$stationary point. We claim $T={\rm poly}(\epsilon^{-1},d,\sigma_{\min}(W^*)^{-1})$. 

To see this, note that from the definition of $T$, it holds that $\|\nabla \risk{W_t}\|_F\geqslant \epsilon$ as $t\leqslant T-1$. Now, a telescoping argument together with $\eta=1/{\rm poly}(d)$ reveals
$$
\risk{W_T} - \risk{W_0}\leqslant -T({\rm poly}(d))^{-1}\epsilon^2.
$$
Using now $\risk{W_T}\geqslant 0$, we conclude $\risk{W_0}\geqslant T\epsilon^2 {\rm poly}(d)$. Since $\risk{W_0}=\riskk_0$ is at most polynomial in $d$ as per \eqref{eq:initial-risk-poly}, we conclude $T={\rm poly}(\epsilon^{-1},d)$. 

We now turn our attention bounding its risk. Let $r_i\triangleq Y_i-X_i^T W^T WX_i$. Note that $\risk{W}=\frac1N\sum_{1\leqslant i\leqslant N}r_i^2$. Now,
    \begin{align*}
        \risk{W}&=\frac1N\sum_{1\leqslant i\leqslant N}r_i (X_i^T (W^*)^T W^* X_i-X_i^T W^T WX_i)\\
        &=\ip{W^TW-(W^*)^TW^*}{\frac1N\sum_{1\leqslant i\leqslant N}r_iX_iX_i^T}.
    \end{align*}
    Using Cauchy-Schwarz inequality, we have
    \begin{align*}
    \risk{W} &= \left|\ip{ W^TW-(W^*)^TW^*}{\frac1N\sum_{1\leqslant i\leqslant N}r_iX_iX_i^T}\right|\\
    &\leqslant \|W^TW-(W^*)^T W^*\|_F \cdot \left\|\frac1N\sum_{1\leqslant i\leqslant N}r_iX_iX_i^T\right\|_F.
        \end{align*}
    Next, $\|W^TW\|_F^2 = {\rm trace}((W^TW)^2)\leqslant ({\rm trace}(W^TW))^2=\|W\|_F^4$, using the fact that $W^TW\succeq 0$. In particular, on the event $\mathcal{E}_1$ defined as per \eqref{eq:upper-bd-normW-belowbarrier}, we conclude that $\|W\|_F\leqslant d^{K_2+1}$, and therefore $\|W^TW\|_F\leqslant d^{2K_2+2}$. This, together with $\|W^*\|_F\leqslant d^{K_2}$ and triangle inequality then yields
    \[
    \|W^TW-(W^*)^TW^*\|_F\leqslant 2d^{2K_2+2},
    \]
    with probability at least $1-\exp(-CN)$. 
    Hence, on this event
    \begin{equation}\label{eq:bound-risik}
        \risk{W}\leqslant 2d^{2K_2+2}\left\|\frac1N\sum_{1\leqslant i\leqslant N}r_iX_iX_i^T\right\|_F.
    \end{equation}
    With this, we now turn our attention to bounding
    \[
    \left\|\frac1N\sum_{1\leqslant i\leqslant N}r_iX_iX_i^T\right\|_F.
    \]
    We claim that for the event
    \begin{equation}\label{eq:event-3}
    \mathcal{E}_3\triangleq \left\{\inf_{\substack{W\in\R^{m\times d}:\sigma_{\min}(W)<\frac12\sigma_{\min}(W^*)\\ \|W\|_F\leqslant d^{K_2+1}}} \risk{W} \geqslant \frac12C_5 \sigma_{\min}(W^*)^4\right\},
    \end{equation}
    it is the case that
    \begin{equation}\label{eq:event-3sprob}
    1-(9d^{4K_1+4K_2+3})^{d^2-1}\left(\exp(-C_4 Nd^{-4K_1-4K_2-2}) + Nd\exp(-Cd^{2K_1})\right).
        \end{equation}
    This is almost a straightforward modification of the proof of earlier energy barrier result Theorem \ref{thm:energy-barrier-empirical}, and we only point out required modifications. Take any $W\in\R^{m\times d}$ with $\sigma_{\min}(W)<\frac12\sigma_{\min}(W^*)$. In particular,
    \[
    \lambda_{\min}(W^TW) = \sigma_{\min}(W)^2<\frac14\sigma_{\min}(W^*)^2.
    \]
    Inspecting now the proof of Theorem~\ref{thm:band-gap}{\rm (a)}, we obtain that for such a $W$, 
    \[
    \mathbb{E}\left[(X^TW^TWX-X^T(W^*)^TW^*X)^2\big\vert \|X\|_\infty<d^{K_1}\right]\geqslant \frac34 C_5 \sigma_{\min}(W^*)^4,
    \]
    and consequently, modifying Lemma \ref{lemma:single-concentration}, we have that
    \[
    \mathbb{P}\left(\frac1N\sum_{1\leqslant i\leqslant N}(Y_i-X_i^TW^TWX_i)^2 \geqslant  \frac12 C_5\sigma_{\min}(W^*)^4\right)\geqslant   1-\exp(-C'Nd^{-4K_1-4K_2-2})-Nd\exp(-Cd^{2K_1}).
    \]
    Using now a covering numbers bound, in an exact same manner as in the proof of Theorem~\ref{thm:energy-barrier-empirical}, we conclude that
    \[
    \inf_{\substack{W\in\R^{m\times d}:\sigma_{\min}(W)<\frac12\sigma_{\min}(W^*)\\ \|W\|_F\leqslant d^{K_2+1}}} \risk{W} \geqslant \frac12C_5 \sigma_{\min}(W^*)^4
    \]
    with probability at least
    \[
    1-(9d^{4K_1+4K_2+3})^{d^2-1}\left( \exp(-C_4 Nd^{-4K_1-4K_2-2}) +(9d^{4K_1+4K_2+3})^{d^2-1} Nd\exp(-Cd^{2K_1})\right).
    \]
    Now suppose in the remainder of this part that the event $\mathcal{E}_1\cap \mathcal{E}_2\cap \mathcal{E}_3$ which is 
    \begin{align*}
       &  \left\{\sup_{W:\risk{W}\leqslant \riskk_0}\|W\|_F\leqslant d^{K_2+1}\right\} \bigcap \{\|X_i\|_\infty<d^{K_1},1\leqslant i\leqslant N\} \\
&\bigcap \left\{\inf_{\substack{W\in\R^{m\times d}:\sigma_{\min}(W)<\frac12\sigma_{\min}(W^*)\\ \|W\|_F\leqslant d^{K_2+1}}} \risk{W} \geqslant \frac12C_5 \sigma_{\min}(W^*)^4\right\}
    \end{align*}
    holds true. In particular, for any $W$ with risk less than $\frac12C_5\sigma_{\min}(W^*)^4$, we have $\sigma_{\min}(W)>\frac12\sigma_{\min}(W^*)$. Now, take the $\epsilon-$stationary point $W$ generated by the gradient descent. Due to the event $\mathcal{E}_3$, and the fact $\risk{W}<\riskk_0$ proven earlier; it holds that ${\rm rank}(W)=d$, and from the definition of $\epsilon-$stationarity, we have
   $$
   \| \nabla\risk{W}\|_F\leqslant \epsilon.
   $$
   Inspecting the proof of Theorem~\ref{thm:full-rank-empirical-global-opt}, we observe that
   $$
   \nabla \risk{W} = 4W\left(\frac1N\sum_{1\leqslant i\leqslant N}r_iX_iX_i^T\right).
   $$
   Thus we arrive at
   $$
   \left\|W\left(\frac1N\sum_{1\leqslant i\leqslant N}r_iX_iX_i^T\right)\right\|_F\leqslant 4\epsilon.
   $$
   Let
   $$
   B\triangleq W\left(\frac1N\sum_{1\leqslant i\leqslant N}r_iX_iX_i^T\right).
   $$
   Note now that
   $$
   \frac1N\sum_{1\leqslant i\leqslant N}r_iX_iX_i^T = (W^TW)^{-1}W^TB.
   $$
   Next, we have
   $$
   \|(W^TW)^{-1}\|_2 =\frac{1}{\sigma_{\min}(W^TW)} = \frac{1}{\sigma_{\min}(W)^2}<\frac{4}{\sigma_{\min}(W^*)^2},
   $$
   due to conditioning on $\mathcal{E}_3$ \eqref{eq:event-3} above. Furthermore,
   $$
   \|W^T\|_2 = \|W\|_2 = \sqrt{\lambda_{\max}(W^TW)}\leqslant \sqrt{{\rm trace}(W^TW)}=\|W\|_F\leqslant d^{K_2+1}.
   $$
   We now combine these finding.
   \begin{align*}
      \left\|\frac1N\sum_{1\leqslant i\leqslant N}r_iX_iX_i^T\right\|_F&=\|(W^TW)^{-1}W^T B\|_F\\
     & \leqslant \|(W^TW)^{-1}\|_2 \|W^T B\|_F\\
     & \leqslant \|(W^TW)^{-1}\|_2 \|W^T\|_2  \|B\|_F\\
     &\leqslant 16\epsilon \sigma_{\min}(W^*)^{-2}d^{K_2+1}.
   \end{align*}

 We now use the bounds on $\mathbb{P}(\mathcal{E}_1)$ as per \eqref{eq:upper-bd-normW-belowbarrier}, on $\mathbb{P}(\mathcal{E}_2)$ as per \eqref{eq:highprobdatabd}, and on $\mathbb{P}(\mathcal{E}_3)$ as per \eqref{eq:event-3sprob}; to control $\mathbb{P}(\mathcal{E}_1\cap \mathcal{E}_2\cap\mathcal{E}_3)$. We conclude by the union bound that with probability at least
      $$
   1-\exp(-CN)-(9d^{4K_1+4K_2+3})^{d^2-1}\left(\exp(-C_4 Nd^{-4K_1-4K_2-2}) + Nd\exp(-Cd^{2K_1})\right),
   $$
   it holds that for any $W$ with $\|\nabla \risk{W}\|_F\leqslant \epsilon$, its empirical risk is controlled as per \eqref{eq:bound-risik}:
   $$
   \risk{W}\leqslant 32\epsilon\sigma_{\min}(W^*)^{-2}d^{4K_2+4}.
   $$
\subsubsection*{Part {\rm (c)}}

Let $W\in\R^{m\times d}$ be such that $\risk{W}\leqslant  \kappa$. Define the matrix 
$$
M\triangleq W^TW-(W^*)^TW^*.
$$
We will bound $\|M\|_F$, which will ensure weights $W^TW$ are uniformly close to ground truth weights defined $(W^*)^TW^*$. We start by conditioning: assume in the remainder that the event $\mathcal{E}_2$ in \eqref{eq:highprobdatabd} stating $\|X_i\|_\infty<d^{K_1}$, for every $i\in[N]$ is true: this holds with probability at least $1-Nd\exp(-Cd^{2K_1})$, as per Lemma~\ref{lemma:bd-data}. 

Note that
$$
\risk{W} =  \frac1N\sum_{1\leqslant i\leqslant N}(X_i^TMX_i)^2.
$$
To this end, consider a matrix $\Xi\in\R^{N\times d(d+1)/2}$, consisting of i.i.d. rows where $i^{\rm th}$ row of $\Xi$ is $\mathcal{R}_i\triangleq (X_i(1)^2,\dots,X_i(d)^2,X_i(k)X_i(\ell):1\leqslant k<\ell\leqslant d)\in\R^{d(d+1)/2}$. Next, let $$
\Sigma = \mathbb{E}[\mathcal{R}_i\mathcal{R}_i^T]\in\R^{\frac{d(d+1)}{2}\times \frac{d(d+1)}{2}},
$$
where $\mathcal{R}_i$ is the $i^{\rm th}$ row of matrix $\Xi$. 
 Furthermore, let $\mathcal{M}\in\R^{d(d+1)/2}$ be a vector consisting of entries $M_{11},\dots,M_{dd}$; and $2M_{ij}$, $1\leqslant i<j\leqslant  d$. With this notation, if $v=\Xi \mathcal{M}\in\R^{N\times 1}$, then we have
$$
\risk{W} = \|v\|_2^2/N\Rightarrow \|v\|_2^2 \leqslant N\kappa.
$$
Now we have
\begin{equation}\label{eq:salca}
\mathcal{M} = (\Xi^T \Xi)^{-1}\Xi^T v\Rightarrow \|\mathcal{M}\|_2^2\leqslant \|(\Xi^T\Xi)^{-1}\|_2^2 \|\Xi^Tv\|_2^2.
\end{equation}
We start with the second term. Recall that $\|v\|_2\leqslant \sqrt{N\kappa}$, and we condition on $\|X_i\|_\infty<d^{K_1}$, $1\leqslant i\leqslant N$. Next, $|(\Xi^Tv)_i|\leqslant \|v\|_2 \sqrt{Nd^{2K_1}}\leqslant Nd^{K_1}\sqrt{\kappa}$. Hence,
\begin{equation}\label{eq:salca-1}
    \|\Xi^Tv\|_2^2\leqslant N^2 d^{2K_1+2}\kappa.
\end{equation}
We now control $\|(\Xi^T\Xi)^{-1}\|_2^2$. This is done in a manner similar to the proof of \cite[Theorem~3.2]{emschwiller2020neural}. The main tool is the result Theorem \ref{thm:vershy} for concentration
of the spectrum of random matrices with i.i.d. non-isotropic rows.
The parameter setting we operate under is provided below.
  \begin{center}
 \begin{tabular}{||c c||} 
 \hline
 Parameter  & Value \\ [0.5ex] 
 \hline\hline
 $m$  & $d^{2K_1+1}$ \\
 \hline
 $t$  & $N^{1/8}$ \\
 \hline
 $\delta$ & $N^{-3/8}d^{K_1+\frac12}$ \\
 \hline
 $\gamma$ & $\max(\|\Sigma\|^{1/2}\delta,\delta^2)$
 \\
 \hline 
\end{tabular}
\end{center}
Start by verifying  that since we condition on $\|X_i\|_\infty<d^{K_1}$, it is indeed the case that $\ell_2-$norm of each row of $\Xi$ is at most $d^{K_1+\frac12}$, thus the value of $m$ above works.

We now claim $\gamma = \|\Sigma\|^{1/2}\delta$. To prove this it suffices to show $$
N>\|\Sigma\|^{-4/3}d^{\frac{8K_1}{3}+\frac{4}{3}}.
$$
Using \cite[Theorem~5.1]{emschwiller2020neural} with $k=2$, we obtain $\sigma_{\min}(\Sigma)\geqslant cd^{-4}$, for some absolute constant $c>0$ depending only on the data  coordinate distribution. Consequently,
$$
\|\Sigma\|^{-4/3}\leqslant  \sigma_{\min}(\Sigma)^{-4/3}\leqslant c^{-4/3}d^{16/3}\Rightarrow \|\Sigma\|^{-4/3}d^{\frac{8K_1}{3}+\frac43} <c^{-4/3}d^{\frac{20}{3} + \frac{8K_1}{3}},
$$
which is below sample size $N$, as requested. Therefore, $\gamma = \|\Sigma\|^{1/2}\delta$. 

We now claim
$$
\frac12\sigma_{\min}(\Sigma)>\gamma = \|\Sigma\|^{1/2}N^{-\frac38}d^{K_1+\frac12}.
$$
This is equivalent to establishing
$$
N>2^{8/3}\frac{\|\Sigma\|^{4/3}d^{\frac{8K_1}{3}+\frac43}}{\sigma_{\min}(\Sigma)^{8/3}}.
$$
Using again \cite[Theorem~5.1]{emschwiller2020neural}, we have $\|\Sigma\|<fd^4$ for some absolute constant $f>0$. This yields 
$$
2^{8/3}\frac{\|\Sigma\|^{4/3}d^{\frac{8K_1}{3}+\frac43}}{\sigma_{\min}(\Sigma)^{8/3}}<C'd^{\frac{52}{3}+\frac{8K_1}{3}}
$$
for some absolute constant $C'>0$, which again holds for our case as $N>d^{18+\frac{8K_1}{3}}$. 

The rest is verbatim from \cite[p45]{emschwiller2020neural}: we now apply Theorem \ref{thm:vershy}. With probability at least $1-d^{2K_1+1}\exp(-cN^{1/4})$ (here $c>0$ is an absolute constant), it holds that:
    $$
    \left\|\frac1N \Xi^T \Xi - \Sigma\right\|\leqslant \gamma.
    $$
    Now, for $D=d(d+1)/2$:
    $$
     \left\|\frac1N \Xi^T \Xi - \Sigma\right\|\leqslant \gamma \iff \forall v\in\R^{D}, \left|\|\frac{1}{\sqrt{N}}\Xi v\|_2^2 - v^T \Sigma v \right|\leqslant \gamma\|v\|_2^2,
    $$
    which implies, for every $v$ on the sphere $\mathbb{S}^{D-1}=\{v\in\mathbb{S}^{D}:\|v\|_2=1\}$,
    $$
    \frac1N \|\Xi v\|_2^2 \geqslant v^T \Sigma v-\gamma \Rightarrow \frac1N \inf_{v:\|v\|=1}\|\Xi v\|_2^2 \geqslant \inf_{v:\|v\|=1}v^T \Sigma v - \gamma.
    $$
    Now, using the Courant-Fischer variational characterization of the smallest singular value \cite{horn2012matrix}, we obtain
    \begin{equation}\label{eq:dddddd}
        \sigma_{\min}(\Xi) \geqslant N(\sigma_{\min}(\Sigma)-\gamma)>\frac{N}{2}\sigma_{\min}(\Sigma),
    \end{equation}
    with probability at least $1-\exp(-c'N^{1/4})$, where $c'>0$ is a positive  absolute constant smaller than $c$.

We now return to (\ref{eq:salca}), to specifically bound $\|(\Xi^T\Xi)^{-1}\|$. Let $A$ be any matrix $A$. Note that, $\|A^{-1}\|= \sigma_{\min}(A)^{-1}$. Indeed, taking the singular value decomposition $A=U\Sigma V^T$, and observing, $A^{-1}=(V^T)^{-1}\Sigma^{-1} U^{-1}$ we obtain $\|A^{-1}\| = \max_i (\sigma_i(A))^{-1} = \sigma_{\min}(A)^{-1}$. This, together with (\ref{eq:dddddd}), yields:
   \begin{equation}\label{eq:aha-bu-iki}
    \|(\Xi^T \Xi)^{-1}\| \leqslant \frac{2}{N\sigma_{\min}(\Sigma)},
      \end{equation}
    with probability at least $1-\exp(-c'N^{1/4})$.
    
We now have all ingredients to execute the bound in (\ref{eq:salca}). Combining Equations (\ref{eq:salca-1}) and (\ref{eq:aha-bu-iki}), we get:
\begin{align*}
    \mathcal{M} = (\Xi^T \Xi)^{-1}\Xi^T v\Rightarrow \|\mathcal{M}\|_2^2&\leqslant \|(\Xi^T\Xi)^{-1}\|_2^2 \cdot \|\Xi^Tv\|_2^2 \\
    &\leqslant \underbrace{\frac{4}{N^2\sigma_{\min}(\Sigma)^2}}_{\text{from (\ref{eq:aha-bu-iki})}}\cdot \underbrace{N^2d^{2K_1+2}\kappa}_{\text{from \ref{eq:salca-1} }} \\
    &=4\kappa \sigma_{\min}(\Sigma)^{-2}d^{2K_1+2}\leqslant 4C\kappa d^{2K_1+10},
\end{align*}
for some constant $C>0$. From part ${\rm (b)}$ done above, we have that $\kappa$ can be taken 
$$
32\epsilon\sigma_{\min}(W^*)^{-2}d^{4K_2+4}
$$
with probability at least
$$
   1-\exp(-CN)-(9d^{4K_1+4K_2+3})^{d^2-1}\left(\exp(-C_4 Nd^{-4K_1-4K_2-2}) + Nd\exp(-Cd^{2K_1})\right).
$$
Since $\|\mathcal{M}\|_2^2\leqslant 4C\kappa d^{2K_1+10}$ with probability at least $1-\exp(-c'N^{1/4})$, we have that
$$
\|\mathcal{M}\|_2 \leqslant C'\sqrt{\epsilon}d^{K_1+2K_2+7}\sigma_{\min}(W^*)^{-1}
$$
with probability  at least 
$$
1-\exp(-c'N^{1/4}) -(9d^{4K_1+4K_2+3})^{d^2-1}\left( \exp(-C_4 Nd^{-4K_1-4K_2-2}) +(9d^{4K_1+4K_2+3})^{d^2-1} Nd\exp(-Cd^{2K_1})\right),
$$
by the union bound. As $\|M\|_F\leqslant \|\mathcal{M}\|_2$, we have the conclusion. 

We now show the generalization ability. For any $W\in\R^{m\times d}$, using auxiliary result Theorem~\ref{thm:analytic-exp-pop-risk}{\rm (c)} we have
$$
\mathcal{L}(W)\leqslant \mu_2^2{\rm trace}(M) + \max\{\mu_4-\mu_2^2,2\mu_2^2\}{\rm trace}(M^2),
$$
where $M=W^TW-(W^*)^T W^*\in\R^{d\times d}$. Now note that ${\rm trace}(M)^2 = |\sum_{1\leqslant i\leqslant d}M_{ii}|^2\leqslant d\sum_{1\leqslant i\leqslant d}M_{ii}^2\leqslant d\|M\|_F^2$ by Cauchy-Schwarz. Furthermore ${\rm trace}(M^2)={\rm trace}(M^T M)=\|M\|_F^2$, thus yielding 
$$
\mathcal{L}(W)\leqslant \|M\|_F^2 \left(d\mu_2^2+\max\{\mu_4-\mu_2^2,2\mu_2^2\}\right)\leqslant  2d\mu_2^2\|M\|_F^2,
$$
for $d$ large. Since $\|M\|_F^2\leqslant \|\mathcal{M}\|_2^2 \leqslant C'\epsilon d^{2K_1+4K_2+14}\sigma_{\min}(W^*)^{-2}$, we conclude the proof of this part. 


\subsubsection*{Part {\rm (d)}}
Suppose the events $\mathcal{E}_1$ (defined in \eqref{eq:upper-bd-normW-belowbarrier}), $\mathcal{E}_2$ (defined in \eqref{eq:highprobdatabd}), and $\mathcal{E}_3$ (defined in \eqref{eq:event-3}), hold simultaneously; happening with probability at least
$$
1-2\exp(-CN)-\left(9d^{4K_1+4K_2+3}\right)^{d^2-1}\left(\exp\left(-C_3 Nd^{-4K_1-4K_2-2}\right)+Nde^{-Cd^{2K_1}}\right).
$$
    In particular, on this event, it holds 1) every $W$ with objective value at most $\risk{W_0}$ has Frobenius norm bounded  above by $d^{K_2+1}$, and $L=\sup\{\|\nabla^2 \risk{W}\|:\risk{W}\leqslant \riskk_0\}={\rm poly}(d)<\infty$, and 2) every rank-deficient $W$ with Frobenius norm at most  $d^{K_2+1}$ has objective value larger than $\risk{W_0}$. 
    
    We now establish $\|\nabla \risk{W}_k\|_F\to 0$ as $k\to\infty$. For this it is convenient to recall the findings from the proof of Part {\rm (b)} above: the gradient descent with a step size of $\eta<1/2L$ generates a trajectory $\{W_k\}_{k\geqslant 0}$ on which 1) $\risk{W_k}\geqslant \risk{W_{k+1}}$ for any $k\geqslant 0$; and 2) 
    $$
    \risk{W_{k+1}}-\risk{W_k}\leqslant -\eta \|\nabla \risk{W_k}\|_F^2/2.
    $$
    Note also that the objective function is lower bounded (by zero). If the gradient is non-vanishing then (by passing to an appropriate subsequence if necessary), we arrive at the conclusion that each step reduces the objective function at least by a certain amount, that is uniformly bounded away from zero. But this contradicts with the fact that the objective is lower-bounded. Thus, we obtain
    $$
    \lim_{k\to\infty}\|\nabla \risk{W_k}\|_F = 0.
    $$
    Observe now that in this event we are considering, it is the case that 1) $\risk{W_k}<\riskk_0$, 2) $\|W\|_F\leqslant d^{K_2+1}$ for any $W$ with $\risk{W}\leqslant \riskk_0$; and 3) for any $\|W\|_F\leqslant d^{K_2+1}$ with ${\rm rank}(W)<d$, $\risk{W}>\riskk_0$. Hence, we deduce $W_k\in\R^{m\times d}$ is full-rank, for all $k$. 
    
    We now establish 
    $$
    \lim_{k\to\infty}\risk{W_k}=0.
    $$
    To see this observe that the sequence $\{\risk{W_k}\}_{k\geqslant 0}$ is monotonically non-increasing, and furthermore bounded by zero from below. Hence, $\lim_{k\to\infty}\risk{W_k}\triangleq \ell$ exists, as per \cite[Theorem~3.14]{rudin1964principles}. 
    
       Since the  weights remain bounded along the trajectory, it follows that there exists a subsequence $\{W_{k_n}\}_{n\in\mathbb{N}}$  with  a limit, that is, $W_{k_n}\to W^{\infty}$ as $n\to\infty$,  where $W^\infty \in\mathbb{R}^{m\times d}$. Now, the continuity of $\nabla \risk{\cdot}$, together with the continuity of the norm $\|\cdot \|_2$, imply that $\|\nabla \risk{W^\infty}\|_F=0$. Furthermore, continuity of $\risk{\cdot}$ then implies $\risk{W^\infty}=\ell$. Now, since $W_{k_n}$'s are such that $\risk{W_{k_n}}\leqslant \riskk_0$ for all $n \in \mathbb{N}$, and $\risk{W_0}$ is stricly smaller than the rank-deficient energy barrier, by taking limits as $k \rightarrow \infty$ and using discussion above, we conclude that $W^\infty$ is full rank. Since $W^\infty$ is also a stationary point of the loss, by Theorem~\ref{thm:full-rank-empirical-global-opt}, we deduce $\risk{W^\infty}=0$, which yields $\ell=0$, as desired.


   \subsection{Proof of Theorem \ref{thm:gd-conv} }\label{sec:pf-thm:gd-conv}
   The proof follows the exact same outline, as in proof of Theorem~\ref{thm:gd-conv-empirical}. Nevertheless, we provide the whole proof for completeness.
   
   Let $\{W_t\}_{t\geqslant 0}$ be a sequence of $m\times d$ matrices corresponding to the weights along the trajectory of gradient descent, that is, $W_t\in\R^{m\times d}$ is the weight matrix at iteration $t$ of the algorithm. We first show $L<\infty$. To see this, recall Theorem \ref{thm:analytic-exp-pop-risk} ${\rm (c)}$: $\mathcal{L}(W)\geqslant \mu_2^2 \cdot{\rm trace}(A)^2$,
where ${\rm trace}(A)=\|W\|_F^2 -\|W^*\|_F^2$. In particular, this yields
$\mu_2^2(\|W\|_F^2-\|W^*\|_F^2)^2 \leqslant \mathcal{L}(W)$. Hence, for any $W$ with $\mathcal{L}(W)\leqslant \mathcal{L}(W_0)$, it holds that 
$$
\|W\|_F\leqslant \left(\frac{\sqrt{\mathcal{L}(W_0)}}{\mu_2}+\|W^*\|_F^2 \right)^{1/2}<\infty.
$$
   Namely, the (Frobenius) norm of the weights of any $W$ with $\mathcal{L}(W)\leqslant \mathcal{L}(W_0)$ remains uniformly bounded from above. This, in turn, yields that the (spectral norm of the) Hessian of the objective function remains uniformly bound from above for any such $W$, since the objective is a polynomial function of $W$, which is precisely what we denote by $L$. 
   
   We now run gradient descent with a step size of $\eta<1/2L$: a second order Taylor expansion reveals that
   $$
   \mathcal{L}(W_1) - \mathcal{L}(W_0)\leqslant -\eta \|\nabla \mathcal{L}(W_0)\|_2^2/2,
   $$
   where $\nabla \mathcal{L}(W)$ is the gradient of the population risk, evaluated at $W$. 
   
   In particular, $\mathcal{L}(W_1) \leqslant \mathcal{L}(W_0)$, and furthermore, $\|\nabla^2 \mathcal{L}(W_1)\|\leqslant L$, where $\|\nabla^2 \mathcal{L}(W)\|$ is the spectral norm of the Hessian matrix $\nabla^2 \mathcal{L}(W)$. From here, we induct on $k$: induction argument reveals we can retain a step size of $\eta<1/2L$,  and furthermore we deduce that the gradient descent trajectory $\{W_k\}_{k\geqslant 0}$ is such that: ${\rm (i)}$ $\mathcal{L}(W_k)\geqslant \mathcal{L}(W_{k+1})$, for every $k\geqslant 0$, and furthermore, ${\rm (ii)}$ it holds for every $k\geqslant 0$:
   $$
   \mathcal{L}(W_{k+1})-\mathcal{L}(W_k)\leqslant -\eta\|\nabla \mathcal{L}(W_k)\|_2^2/2.
   $$
   We now establish that $\|\nabla \mathcal{L}(W_k)\|_2\to 0$ as $k\to\infty$. Note that the objective function is lower bounded (by zero). If the gradient is non-vanishing then (by passing to a subsequence, if necessary) each step reduces the value of the objective function at least by a certain amount, that is (uniformly) bounded away from zero. But this contradicts with the fact that the objective is lower bounded.  Thus we deduce
   $$
   \lim_{k \rightarrow \infty}\|\nabla \mathcal{L}(W_k)\|_2= 0.
   $$
   Now, recall that the trajectory is such that $\mathcal{L}(W_k)\geqslant \mathcal{L}(W_{k+1})$, and that, $\|\nabla  \mathcal{L}(W_k)\|_2\to 0$ as $k\to\infty$. 
   Suppose that the initial value, $\mathcal{L}(W_0)$, is such that
   $$
   \mathcal{L}(W_0)<\min_{W\in\R^{m\times d}:{\rm rank}(W)<d} \mathcal{L}(W).
   $$
   In particular, for every $k\in\mathbb{Z}^+$,  \begin{equation}\label{eq:barrier_all_k}
   \mathcal{L}(W_k) \leqslant \mathcal{L}(W_0)<\min_{W\in\R^{m\times d}:{\rm rank}(W)<d} \mathcal{L}(W).
   \end{equation} and therefore $W_k\in\R^{m\times d}$ is full-rank, for all $k$, per Theorem \ref{thm:band-gap}. We now establish
   $$
   \lim_{k\to\infty} \mathcal{L}(W_k)=0.
   $$
   To see this, observe that the sequence $\{\mathcal{L}(W_k)\}_{k\geqslant 0}$ is monotonic (non-increasing), and furthermore, is bounded by zero from below. Hence,
   $$
   \lim_{k\to\infty}\mathcal{L}(W_k)\triangleq \ell
   $$
   exists \cite[Theorem~3.14]{rudin1964principles}. We now show $\ell=0$. 
   
   Since the  weights remain bounded along the trajectory, it follows that there exists a subsequence $\{W_{k_n}\}_{n\in\mathbb{N}}$  with  a limit, that is, $W_{k_n}\to W^{\infty}$ as $n\to\infty$,  where $W^\infty \in\mathbb{R}^{m\times d}$. Now, the continuity of $\nabla \mathcal{L}$, together with the continuity of the norm $\|\cdot \|_2$, imply that $\|\nabla \mathcal{L}(W^\infty)\|_2=0$. Furthermore, continuity of $\mathcal{L}(\cdot)$ then implies $\mathcal{L}(W^\infty)=\ell$. Now, since $W_{k_n}$'s are such that $\mathcal{L}(W_{k_n})\leqslant \mathcal{L}(W_0)$ for all $n \in \mathbb{N}$, and $\mathcal{L}(W_0)$ is stricly smaller than the rank-deficient energy barrier, by taking limits as $k \rightarrow \infty$ and using \eqref{eq:barrier_all_k}, we conclude that $W^\infty$ is full rank. Since $W^\infty$ is also a stationary point of the loss, by Theorem \ref{thm:full-rank-global-opt}, we deduce $\mathcal{L}(W^\infty)=0$, which yields $\ell=0$, as desired.
\subsection{Proof of Theorem \ref{thm:initialization-empirical} }\label{sec:pf-thm:initialization-empirical}
Let $W_0^T W_0=mI_d$, and let $\{\lambda_1,\dots,\lambda_d\}=\sigma((W^*)^T W^*-mI_d)$. In what follows below, recall the quantities from the proof of Theorem \ref{thm:initialization}{\rm (b)}: $\sigma_*\triangleq {\rm Var}((W_{ij}^*)^2-1)$, $\chi_2\triangleq \int x^2 \; d\omega(x)$, where $\omega(x)$ is the semicircle law. Fix now an arbitrary $\epsilon>0$ and a $K>0$. 

We start by definining several auxiliary events:
\begin{align*}
    \mathcal{E}_1 &\triangleq \left\{\sum_{1\leq i\leq d}\lambda_i^2 < 4(1+o(1))md^2\chi_2\right\},\\
    \mathcal{E}_2 &\triangleq \left\{\left|\sum_{1\leq i\leq d}\lambda_i\right|<\sigma_*\sqrt{md}d^{\epsilon}\right\},\\
       \mathcal{E}_3 &\triangleq\left\{\sigma_{\min}(W^*)^4\geqslant \frac{1}{16}m^2\right\},\\
          \mathcal{E}_4 &\triangleq \left\{\|X_i\|_\infty<d^K,1\leq i\leq N\right\}.
\end{align*}
Note that from the proof of Theorem \ref{thm:initialization}{\rm (b)}, we have $\mathbb{P}(\mathcal{E}_i)\geqslant 1-o_d(1)$ for $i=1,2,3$; and from union bound and sub-Gaussianity of $X$, $\mathbb{P}(\mathcal{E}_4)\geqslant 1-N\exp(-Cd^{2K})$. Thus,
\[
\mathbb{P}\left(\bigcap_{1\leq i\leq 4}\mathcal{E}_i\right)\geqslant 1-o_d(1)-N\exp\left(-Cd^{2K}\right).
\]
In what follows, suppose we condition on the event $\bigcap_{1\leq i\leq 4}\mathcal{E}_i$. Note that in this conditional universe, it is still the case that $X_i$, $1\leq i\leq N$ are i.i.d. random vectors with centered i.i.d. coordinates. Using now H\"{o}lder's inequality (Theorem \ref{thm:matrix-holder}) with $p=1,q=\infty$, $U=X_iX_i^T$ and $V=(W^*)^T W^*-mI_d$, we arrive at
\begin{align*}
|X_i^T ((W^*)^T W^*-mI_d) X_i|&=\left|\left\langle X_iX_i^T,(W^*)^T W^*-mI_d\right\rangle \right| \\
&\leqslant \|(W^*)^T W^*-mI_d\| {\rm trace}(X_iX_i^T)\\
&\leqslant 2\sqrt{md}d^{2K+1},
\end{align*}
where we use the fact that ${\rm trace}(X_iX_i^T)=\|X_i\|_2^2 \leqslant  d^{2K+1}$ (recall the conditioning on $\mathcal{E}_4$). Using Hoeffding's inequality, we have
\begin{align*}
\risk{W_0}=\frac1N \sum_{1\leq i\leq N}\left(X_i^T(W^*)^T W^* X_i - X_i^T W_0^T W_0X_i\right)^2\leqslant \frac32 \mathcal{L}(W_0),
\end{align*}
with probability at least
\[
1-\exp(-C' Nd^{-4K-3}m^{-1}),
\]
where
\[
\mathcal{L}(W_0)= \mathbb{E}\left[(X_i^T (W^*)^T W^* X_i - X_i^T W_0^T W_0 X_i)^2 \bigr\vert \|X_i\|_\infty<d^K\right].
\]
Namely, $\mathcal{L}(W_0)$ is the ``population risk" in the ``conditional universe". 

Next, in this conditional space, using Theorem \ref{thm:analytic-exp-pop-risk}(c), we arrive at
$$
\mathcal{L}(W_0)\leqslant \mu_2(K_1)^2\left|\sum_{1\leq i\leq d}\lambda_i\right|^2 +\max\{\mu_4(K_1)-\mu_2(K_1)^2,2\mu_2(K_1)^2\}\left(\sum_{1\leq i\leq d}\lambda_i^2\right).
 $$
 Finally, carrying out the exact same analysis as in the end of the proof of Theorem \ref{thm:initialization}, we deduce  
 $$
 \risk{W_0}<\frac12 C_5 \sigma_{\min}(W^*)^4,
 $$
 provided $m>Cd^2$ for a large enough constant $C$, namely provided that the network is sufficiently overparametrized.

\subsection{Proof of Theorem \ref{thm:initialization}}\label{sec:pf-initialization} 
\subsubsection*{Part $({\rm a})$} 
Let $t=\sqrt{d}$. Then, using Theorem \ref{thm:tall-matrix-spectra-concentrate}, it holds that with probability $1-2\exp(-d/2)$:
\begin{alignat*}{3}
\sqrt{m}-2\sqrt{d}&\leqslant \sigma_{\min}(W^*)& & \leqslant \sigma_{\max}(W^*)&&\leqslant \sqrt{m}+2\sqrt{d}  \\ \Rightarrow m+4d-4\sqrt{md}&\leqslant \lambda_{\min}((W^*)^T W^*)& & \leqslant \lambda_{max}((W^*)^T W^*)&&\leqslant m+4d+4\sqrt{md}.
\end{alignat*}
Recall that $\sigma(A)$ denotes the spectrum of $A$, i.e., $\sigma(A)=\{\lambda:\lambda\text{  is an eigenvalue of } A\}$. We claim then the spectrum of $\gamma I-A$ is $\gamma-\sigma(A)$. To see this, simply note the following line of reasoning:
$$
\gamma-\lambda\in\sigma(\gamma I-A)\iff {\rm det}((\gamma -\lambda)I -  (\gamma I-A))= 0 \Leftrightarrow {\rm det}(\lambda I -A)= 0 \Leftrightarrow \lambda \in \sigma(A).
$$
Now, let $W_0\in\R^{m\times d}$ be such that $W_0^T W_0 = \gamma I$ with $\gamma=m+4d$. In particular, if $\lambda_1\leqslant \cdots\leqslant \lambda_d$ are the eigenvalues of $\gamma I-(W^*)^T W^*$ with $\gamma=m+4d$; then, it holds that:
$$
-4\sqrt{md}\leqslant \lambda_1\leqslant \cdots\leqslant \lambda_d \leqslant 4\sqrt{md}.
$$
Now, recall by Theorem \ref{thm:analytic-exp-pop-risk} ${\rm (c)}$ that, 
$$
\mathcal{L}(W_0) \leqslant \mu_2^2 \left(\sum_{i=1}^d \lambda_i\right)^2 + \max\left\{{\rm Var}(X_i^2),2\E{X_i^2}^2 \right\} \left(\sum_{i=1}^d \lambda_i^2\right),$$
where $\sigma(W_0^T W_0-(W^*)^T W^*)  = \{\lambda_1,\dots,\lambda_d\}$. For the second term, we immediately have
$\sum_{i=1}^d \lambda_i^2 \leqslant 16md^2$. 

For the first term, note first that, if $\lambda_1'\leqslant\cdots\leqslant \lambda_d'$ are the eigenvalues of $(W^*)^T W^*$, then
$$
\sum_{k=1}^d\lambda_k'={\rm trace}((W^*)^T W^*)=\sum_{i=1}^m \sum_{j=1}^d (W^*_{ij})^2\Rightarrow \sum_{k=1}^d (\lambda_k'-m)=\sum_{i=1}^m\sum_{j=1}^d ((W_{ij}^*)^2 -1),
$$
where $W_{ij}^*\distr N(0,1)$ i.i.d.. Note also that, $(W_{ij}^*)^2 -1 $ is a centered random variable, and has sub-exponential tail, see \cite[Lemma~5.14]{vershynin2010introduction}. Now, letting  $Z_{ij}=(W_{ij}^*)^2-1$, and applying the Bernstein-type inequality \cite[Proposition~5.16]{vershynin2010introduction}, we have that for some absolute constants $K,c>0$, it holds:
$$
\mathbb{P}\left(\left|\sum_{i=1}^m\sum_{j=1}^ d Z_{ij}\right|>d\sqrt{m}\right)\leqslant 2\exp\left(-c\min\left(\frac{d}{K^2},\frac{d\sqrt{m}}{K}\right)\right)\leqslant 2\exp(-cd/K^2) = \exp(-\Omega(d)),
$$
for $m$ sufficiently large. In particular, with probability at least $1-\exp(-\Omega(d))$, it therefore holds that,
$$
\left|\sum_{k=1}^d(\lambda_k' - m)\right|\leqslant d\sqrt{m}.
$$
Finally, using triangle inequality,
$$
\left|\sum_{k=1}^d \lambda_k\right| = 
\left|\sum_{k=1}^d (\lambda_k'-(m+4d))\right|\leqslant 
\left|\sum_{k=1}^d (\lambda_k'-m)\right|+4d^2\leqslant d\sqrt{m}+4d^2,
$$
with probability $1-\exp(-\Omega(d))$. After squaring, we obtain that 
$\left(\sum_{i=1}^d \lambda_i\right)^2 \leqslant 16d^4+8d^3\sqrt{m}+ d^2m$. In particular, we get:
\begin{align*}
\mathcal{L}(W_0) &\leqslant \mu_2^2 \left(\sum_{i=1}^d \lambda_i\right)^2 + \max\left\{{\rm Var}(X_i^2),2\E{X_i^2}^2 \right\} \left(\sum_{i=1}^d \lambda_i^2\right) \\
&\leqslant  \mu_2^2 (16d^4+8d^3\sqrt{m}+ md^2) + \max\left\{{\rm Var}(X_i^2),2\E{X_i^2}^2 \right\}16md^2.
\end{align*}
Using now the overparametrization $m>Cd^2$, we further have:
\begin{align*}
\E{X_i^2}^2 (16d^4+8d^3\sqrt{m}+ md^2) + \max\left\{{\rm Var}(X_i^2),2\E{X_i^2}^2 \right\}16md^2\leqslant \mathcal{C'}(C) m^2,
\end{align*} 
where
$$
\mathcal{C'}(C) = \E{X_i^2}^2\left(\frac{16}{C^2} + \frac{8}{C^{3/2}} +\frac1C\right) +\frac{16}{C} \max\left\{{\rm Var}(X_i^2),2\E{X_i^2}^2\right\}.
$$
Note that for the constant $\mathcal{C'}(C)$, 
$$
\mathcal{C}'(C) \to 0\quad\text{as}\quad C\to+\infty.
$$
Next, observe that, $\sqrt{m}-2\sqrt{d}\geqslant \frac12\sqrt{m}$ for $m$ large (in the regime $m>Cd^2$, with $C$ large enough). Thus, using what we have established in Theorem \ref{thm:band-gap}, we  arrive at:
\begin{align*}
\min_{W\in \R^{m\times d}:{\rm rank}(W)<d}\mathcal{L}(W)&>\min\left\{{\rm Var}(X_i^2),2\E{X_i^2}^2\right\}\sigma_{\min}(W^*)^4 \\
&\geqslant \min\left\{{\rm Var}(X_i^2),2\E{X_i^2}^2\right\}(\sqrt{m}-2\sqrt{d})^4 \\
&\geqslant \frac{1}{16}\min\left\{{\rm Var}(X_i^2),2\E{X_i^2}^2\right\} m^2.
\end{align*} 
Finally, observe also that if ${\rm Var}(X_i^2)>0$, then $\E{X_i^2}>0$ as well: indeed observe that if $\E{X_i^2}=0$, then $X_i = 0$ almost surely, for which ${\rm Var}(X_i^2)=0$. In particular, $\min\left\{{\rm Var}(X_i^2),2\E{X_i^2}^2\right\}>0$. Equipped with this, we then observe that provided:
$$
\frac{1}{16}\min\left\{{\rm Var}(X_i^2),2\E{X_i^2}^2\right\} > \mathcal{C'}(C) = \E{X_i^2}^2\left(\frac{16}{C^2} + \frac{8}{C^{3/2}} +\frac1C\right) +\frac{16}{C} \max\left\{{\rm Var}(X_i^2),2\E{X_i^2}^2\right\},
$$
that is, provided $C>0$ is sufficiently large, we are done.

\subsubsection*{Part $({\rm b})$} 
Note that, the result of Bai and Yin \cite{bai1988convergence} asserts that if $\{\mu_1,\dots,\mu_d\}$ are the eigenvalues of 
$$
\mathcal{A}\triangleq \frac{1}{2\sqrt{md}}((W^*)^T W^* - mI_d),
$$
and if we define the empirical measure
$$
F^{\mathcal{A}}(x) = \frac1d \left|\{i: \mu_i \leqslant x\}\right|
$$
then in the regime $d\to+\infty$, $d/m\to 0$, it holds that:
$$
F^{\mathcal{A}}(x) \to \omega(x),
$$
almost surely, where $\omega(x)$ is the semicircle law; and moreover
$$
\frac{1}{d}\sum_{i=1}^d \mu_i^2 \to \int x^2\;d\omega(x)\triangleq \chi_2 
$$
namely, $\chi_2$ is respectively the second moment under semicircle law, whp. Now, define the same quantities as in proof of  part ${\rm (a)}$, where this time $W_0^T W_0 = mI_d$, and $\{\lambda_1,\dots,\lambda_d\}=\sigma((W^*)^T W^* - mI_d)$. In particular, we still retain the inequality per Theorem \ref{thm:analytic-exp-pop-risk} ${\rm (c)}$: 
$$
\mathcal{L}(W_0) \leqslant \mu_2^2 \left(\sum_{i=1}^d \lambda_i\right)^2 + \max\left\{{\rm Var}(X_i^2),2\E{X_i^2}^2 \right\} \left(\sum_{i=1}^d \lambda_i^2\right).$$
Note that $\lambda_i=2\sqrt{md}\mu_i$. Hence, we obtain
$$
\sum_{i=1}^d \lambda_i^2 <(4+o(1))md^2 \chi_2
$$
whp. We now control $\sum_{i=1}^d \lambda_i$ using central limit theorem (CLT). Observe that,
$$
\sum_{i=1}^d \lambda_i = {\rm trace}((W^*)^T W^* - mI_d) = \sum_{i=1}^m \sum_{j=1}^d ((W_{ij}^*)^2 -1).
$$
Now, note that
$$
\sigma_*^2\triangleq {\rm Var}((W_{ij}^*)^2-1)  = {\rm Var}((W_{ij}^*)^2)<\E{(W_{ij}^*)^4}<\infty.
$$
We now use CLT, as $d\to\infty$ and $m/d\to \infty$. To that end, let $1/2>\epsilon>0$ be fixed. Observe now that, for any arbitrary $M>0$, and sufficiently large $d$, 
$$
\left\{-1\leqslant \frac{1}{\sigma_* \sqrt{md}d^{\epsilon}} \sum_{i=1}^m \sum_{j=1}^d ((W_{ij}^*)^2 -1)\leqslant 1\right\}\supset \left\{-M\leqslant \frac{1}{\sigma_* \sqrt{md}} \sum_{i=1}^m \sum_{j=1}^d ((W_{ij}^*)^2 -1)\leqslant M\right\}.
$$
In particular, using central limit theorem, we deduce
$$
\liminf_{d\to\infty}\mathbb{P}\left(-1\leqslant \frac{1}{\sigma_* \sqrt{md}d^{\epsilon}} \sum_{i=1}^m \sum_{j=1}^d ((W_{ij}^*)^2 -1)\leqslant 1\right)\geqslant \mathbb{P}(Z\in [-M,M]),
$$
where $Z$ is a standard normal random variable. Now since $M>0$ is arbitrary, we have, by sending $M\to+\infty$, we obtain
$$
\liminf_{d\to\infty}\mathbb{P}\left(-1\leqslant \frac{1}{\sigma_* \sqrt{md}d^{\epsilon}} \sum_{i=1}^m \sum_{j=1}^d ((W_{ij}^*)^2 -1)\leqslant 1\right)\geqslant 1,
$$
and we then conclude
$$
\lim_{d\to\infty}\mathbb{P}\left(-1\leqslant \frac{1}{\sigma_* \sqrt{md}d^{\epsilon}} \sum_{i=1}^m \sum_{j=1}^d ((W_{ij}^*)^2 -1)\leqslant 1\right)=1.
$$
Hence,
$$
\left|\sum_{i=1}^d \lambda_i\right|\leqslant \sigma_*\sqrt{md}d^{\epsilon},
$$
with probability $1-o_d(1)$, for $d$ sufficiently large. 

Moreover, 
$$
\sigma_{\min}(W^*)^4 \geqslant \frac{1}{16}m^2,
$$
for $m$ large, using yet another result of Bai and Yin, see Theorem \ref{thm:baiyinvershy}. From here, carrying the exact same analysis as in part ${\rm (a)}$  we obtain provided $m>Cd^2$ for some large constant $C>0$, and $d$ sufficiently large the following holds with probability $1-o_d(1)$:
$$
\mathcal{L}(W_0)<\min_{W\in\R^{m\times d}:{\rm rank}(W)<d} \mathcal{L}(W),
$$
where $W_0$ is prescribed such that $W_0^T W_0 = mI_d$. 
   
\subsection{Proof of Theorem \ref{thm:geo-condition}}\label{sec:proof-of-geo-cond}
\begin{itemize}
     \item[(a)] Let ${\rm span}(X_iX_i^T:i\in[N])=\mathcal{S}$, the set of all $d\times d$ symmetric matrices, and let $M\in\mathcal{S}$ be such that for any $i$, $X_i^T M X_i=0$. We will establish $M=0$. Let $1\leqslant k,\ell \leqslant d$ be two fixed indices. To that end, let $\theta_i^{(k,\ell)}\in\R$ be such that, $\sum_{i=1}^N \theta_i^{(k,\ell)}X_iX_i^T =e_ke_\ell^T + e_\ell e_k^T$, where the column vectors $e_k,e_\ell \in\R^d$ are respectively the $k^{th}$ and $\ell^{th}$ elements of the standard basis  for $\R^d$. Such $\theta_i^{(k,\ell)}$ indeed exist, due to the spanning property. Observe that $2M_{k,\ell}=e_k^T Me_\ell +e_\ell ^T Me_k ={\rm tr}(e_k^T Me_\ell+e_\ell^T Me_k)$. Now, using the fact that ${\rm tr}(ABC)={\rm tr}(BCA)={\rm tr}(CAB)$ for every matrices $A,B,C$ (with matching dimensions), we have:
    \[
    2M_{k,\ell}= {\rm tr}(Me_\ell e_k^T +Me_ke_\ell^T)={\rm tr}\left(\sum_{i=1}^N \theta_i^{(k,\ell)}MX_iX_i^T\right) = \sum_{i=1}^N \theta_i^{(k,\ell)}{\rm tr}(X_i^T MX_i)=0,
    \]
for every $k,\ell\in[d]$. Finally, if $W$ is such that $\risk{W}=0$, then $X_i^T M X_i=0$ for any $i$, where $M=(W^*)^T W^* - W^T W$. Hence, provided that the geometric condition holds, we have $M=0$, that is, $W^T W=(W^*)^T W^*$. From here, the final conclusion follows per Theorem \ref{thm:auxiliary-2}. Since $W^T W= (W^*)^ TW^*$, $W$ clearly has zero generalization error, i.e. $\mathcal{L}(W)=0$.
\item[(b)] Our goal is to construct a $W\in\mathbb{R}^{m\times d}$ with $f(W^*;X_i)=f(W;X_i)$, for every $i\in[N]$, whereas $W^TW\neq (W^*)^T W^*$. Consider the inner product $\ip{A}{B}={\rm trace}(AB)$, in the space of all symmetric $d\times d$ matrices. Find $0\neq M\in\R^{d\times d}$ 
 a symmetric matrix, such that, $M\in {\rm span}^{\perp}(X_iX_i^T:i\in[N])$, that is, $X_i^T M X_i=0$ for every $i\in[N]$. 
We can find such $M$ satisfying  $\|M\|_2=1$.  Consider the linear matrix function $M(\delta)=(W^*)^T W^*+\delta M$. Note that, $M(\delta)$ is symmetric for every $\delta$. We claim that under the hypothesis of the theorem, there exists a $\delta_0>0$ such that $M(\delta)$ is positive semidefinite for every $\delta\in[0,\delta_0]$, and that there exists $W_\delta\in\R^{m\times d}$ with $W_\delta^T W_\delta= M(\delta)$, for all $\delta\in[0,\delta_0]$. Observe that, since ${\rm rank}(W^*)=d$, then $(W^*)^T W^* \in \R^{d\times d}$ with ${\rm rank}((W^*)^T W^*)=d$.  Therefore, the eigenvalues $\lambda_1^*,\dots,\lambda_d^*$ of $(W^*)^T W^*$ 
are all positive.  In particular $\{\lambda_i^*:i\in[d]\}\subset [\delta_1,\infty)$, with $\delta_1 =\sigma_{\min}(W^*)^2$. 
Now, let $\mu_1(\delta),\dots,\mu_d(\delta)$ be the eigenvalues of $M(\delta)$. Using Weyl's inequality \cite{horn2012matrix}, we have $|\mu_i(\delta)-\lambda_i^*|\leqslant \delta\|M\|_2=\delta$, for every $i$. In particular, taking $\delta\leqslant  \delta_1$, we deduce for every $i\in[d]$, it holds that $\mu_i(\delta)\geqslant \lambda_i^*-\delta_1\geqslant 0$,   that is, $\{\mu_i(\delta):i\in[d]\}\subset [0,\infty)$. In  particular, we also have $M(\delta)$ is symmetric, and thus, it is PSD. Thus, there exists a $\overline{W_\delta}\in \R^{d\times d}$ such that $\overline{W_\delta}^T \overline{W_\delta}=M(\delta)$. Now, using the same idea as in the proof of Theorem \ref{thm:band-gap} part $(c)$, we then deduce that for any $\widehat{m}\geqslant d$, there exists a matrix $W_\delta\in\R^{\widehat{m}\times d}$ such that $W_{\delta}^T W_{\delta}=\overline{W_\delta}^T \overline{W_\delta}=M(\delta)$.  In particular, for this $W_\delta$, if $f(W_\delta,X)$ is the function computed by the neural network with weight matrix $W_\delta\in\mathbb{R}^{\widehat{m}\times d}$, then on the training data $(X_i:i\in[N])$, $f(W_\delta;X_i)=X_i^T W_\delta^T W_\delta X_i = X_i^T (W^*)^T W^* X_i = f(W^*;X_i)$, 
since $X_i^T M X_i=0$ for all $i\in[N]$. At the same time  $W_\delta^T W_\delta - (W^*)^T W^* = \delta M\neq 0$, since $\delta\neq 0$ and $M\neq 0$,
and therefore $W_\delta^T W_\delta \neq (W^*)^T W^*$.

Finally, to show $\mathcal{L}(W_\delta)>0$, we argue as follows. Suppose $\mathcal{L}(W_\delta)=0$. Then, by Theorem \ref{thm:auxiliary}, it follows that $\psi(X)=X^T A X=0$ identically, where $A=W_\delta^T W_\delta -(W^*)^T W^*$. Now, letting $\xi_1,\dots,\xi_d$ to be the eigenvectors of $A$ (with corresponding eigenvalues $\lambda_1,\dots,\lambda_d$), we obtain $\xi_i^T A\xi_i = \lambda_i \xi_i^T \xi_i = \lambda_i \|\xi_i\|_2^2 = 0$, we namely obtain $\lambda_i=0$ for every $i\in[d]$. Finally, since $A$ is symmetric, and hence admits a diagonalization of form $A=\mathcal{Q}\Lambda \mathcal{Q}$ with diagonal entries of $\Lambda$ being zero, we deduce $A$ is identically zero, which contradicts with the fact that $A=\delta M$, which is a non-zero matrix.

\end{itemize}

\subsection{Proof of Corollary \ref{coro-2}}\label{sec:proof-of-coro-2}
The proof relies on the following observation: given any pair $(a^*,W^*)\in\R_+^m\times \R^{m\times d}$, construct a matrix $\widehat{W}^*\in\R^{m\times d}$ whose $j^{th}$ row is $\widehat{W}_j^* = \sqrt{a_j^*}W_j^*\in\R^d$. Define $\widehat{W}\in\R^{\widehat{m} \times d}$ similarly as the matrix whose $j^{th}$ row is $\widehat{W}_j = \sqrt{a_j}W_j\in\R^d$. Now, let $e^{(m)} =(1,1,\dots,1)^T\in\R^m$  and $e^{(\widehat{m})}=(1,1,\dots,1)^T\in \R^{\widehat{m} }$ be the vector of all ones. Then note that, 
\[
\widehat{f}(a^*,W^*,X) = \widehat{f}(e^{(m)} ,\widehat{W}^*,X) = f(\widehat{W}^*,X) \quad\text{and}\quad \widehat{f}(a,W,X) = \widehat{f}(e^{(\widehat{m})},\widehat{W},X) = f(\widehat{W},X),
\] 
where $f(\widehat{W};X)$ is the same quantity as in Theorem \ref{thm:geo-condition}. Applying Theorem \ref{thm:geo-condition} then establishes both parts. 

\subsection{Proof of Theorem \ref{thm:random-data-geo-cond} }\label{sec:proof-of-random-data-geo}
Recall that, $\mathcal{S}=\{M\in\R^{d\times d}:M^T=M\}$. Note that, this space has dimension $\binom{d}{2}+d$: for any $1\leqslant k\leqslant \ell\leqslant d$, it is easy to see that the matrices $e_ke_\ell^T + e_\ell e_k^T$ are linearly independent; and there are precisely $\binom{d}{2}+d$ such matrices. With this in mind, the statement of part $(b)$ is immediate. 

We now prove the part $(a)$ of the theorem. For any $X_i$, let $X_i(j)$ be the $j^{th}$ coordinate of $X_i$, with $j\in[d]$; and let $\mathcal{Y}_i$ be a $d(d+1)/2-$dimensional vector, obtained by retaining $X_i(1)^2,\dots,X_i(d)^2$; and the products, $X_i(k)X_i(\ell)$ with $1\leqslant k<\ell\leqslant d$. Now, let $\mathcal{X}$ be an $n\times d(d+1)/2$ matrix, whose rows are $\mathcal{Y}_1,\dots,\mathcal{Y}_n$. Our goal is to establish,
$$
\mathbb{P}[{\rm det}(\mathcal{X})=0]=0,
$$
when $n=d(d+1)/2$, where the probability is taken with respect to the randomness in $X_1,\dots,X_n$ (in particular, this yields  for $n\geqslant d(d+1)/2$, $\mathbb{P}({\rm rank}(\mathcal{X})=d(d+1)/2)$, almost surely). Now, recalling Theorem \ref{thm:auxiliary}, it then suffices to show that ${\rm det}(\mathcal{X})$ is not identically zero, when viewed as a polynomial in $X_i(j)$ with $i\in[N]$, $j\in[d]$.

We now prove part (b) by providing a deterministic construction (of the matrix $\mathcal{X})$ 
under which ${\rm det}(\mathcal{X})\neq 0$. Let $p_1<\cdots<p_d$ be distinct prime numbers. For every $1\leqslant t\leqslant N$, set:
$$
X_t = (p_1^{t-1},\dots,p_d^{t-1})^T \in \R^d.
$$
In particular, $X_1=(1,1,\dots,1)^T\in\R^d$, which then implies $\mathcal{Y}_1$ is a vector of all ones. Now, we study $\mathcal{Y}_2$. The entries of $\mathcal{Y}_2$, called $z_1,\dots,z_{d(d+1)/2}$, are of form $p_i^2$ with $i\in[d]$; or $p_i p_j$, where $1\leqslant i<j\leqslant d$. By the fundamental theorem of arithmetic, we have $p_i p_j= p_kp_\ell \Rightarrow \{p_i,p_j\}=\{p_k,p_\ell\}$; and therefore, $z_1,\dots,z_{d(d+1)/2}$ are pairwise distinct. With this construction, the matrix $\mathcal{X}$ is a Vandermonde matrix with determinant:
$$
\prod_{1\leqslant k<\ell\leqslant d(d+1)/2}(z_k-z_\ell).
$$
Since $z_k\neq z_\ell$ for every $k\neq \ell$ (from the construction on $\mathcal{Y}_2$, which, in turn, is constructed from $X_2$), this determinant is non-zero, proving the  claim.

\subsection{Proof of Theorem \ref{thm:main}}\label{sec:proof-of-thm-mainn}
\begin{itemize}
    \item[(a)] Note that, if $N\geqslant N^*$, then combining parts $(a)$ of Theorems \ref{thm:geo-condition} and \ref{thm:random-data-geo-cond}, we have that with probability one, ${\rm span}(X_iX_i^T :i\in[N])=\mathcal{S}$, which, together with $\risk{W}=0$, imply that,
$$
\mathbb{P}(E\neq \varnothing)=0,
$$
where $E=\{W\in\R^{m\times d}:W^T W \neq (W^*)^T W^*; \risk{W}=0\}$, from which the desired conclusion follows.
    \item[(b)] Assume $W$ is taken as in proof of Theorem \ref{thm:geo-condition} (b), that is,
    $$
   A= (W^*)^T W^* - W^T W=\delta M\quad\text{where}\quad \delta =\sigma_{\min}(W^*)^2\quad\text{and}\quad \|M\|=1,
    $$
    with $M^T=M$. Let $\{\lambda_1,\dots,\lambda_d\}$ be the spectrum of the matrix $\delta M$. Using now Theorem \ref{thm:analytic-exp-pop-risk} $({\rm c})$, we have the lower bound
    \begin{align*}
  	\mathcal{L}(W) &\geqslant  \E{X_i(j)^2}^2 {\rm trace}(A)^2 + \min\left\{{\rm Var}(X_i(j)^2),2\E{X_i(j)^2}^2\right\}\cdot {\rm  trace}(A^2) \\
	&\geqslant \min\left\{{\rm Var}(X_i(j)^2),2\E{X_i(j)^2}^2\right\}\left(\sum_{i=1}^d \lambda_i^2\right)\\
	&\geqslant \min\left\{{\rm Var}(X_i(j)^2),2\E{X_i(j)^2}^2\right\}\lambda_{\max}(\delta M)^2,
    \end{align*}
   since $ {\rm trace}(A^2)=\sum_{i=1}^d \lambda_i^2$. Finally, since $\lambda_{\max}(\delta M)^2 =\delta^2=\sigma_{\min}(W^*)^4$ (as the spectral norm of  $M$ is one), we arrive at the desired conclusion.
    \end{itemize}

\section*{Acknowledgement}  
The authors would like to thank Orestis Plevrakis for providing useful feedback on the initial version of this paper.

\bibliographystyle{amsalpha}
\bibliography{bibliography}

\end{document}